\documentclass{article}

\usepackage{microtype}
\usepackage{graphicx}
\usepackage{subcaption}
\usepackage{booktabs} 

\usepackage[accepted]{icml2026}

\usepackage[dvipsnames]{xcolor}
\usepackage[colorlinks=true, linkcolor=NavyBlue, citecolor=NavyBlue, urlcolor=NavyBlue]{hyperref}

\usepackage{amsmath}
\usepackage{amssymb}
\usepackage{mathtools}
\mathtoolsset{showonlyrefs}
\usepackage{amsthm}
\usepackage{xspace}

\newcommand{\msefull}{mean squared error\xspace}
\newcommand{\mse}{MSE\xspace}
\newcommand{\qc}{query complexity\xspace}
\newcommand{\qcs}{query complexities\xspace}
\newcommand{\QCS}{Query Complexities\xspace}
\newcommand{\queries}{queries\xspace}

\DeclareMathOperator*{\minimize}{minimize}
\usepackage{thm-restate}
\usepackage{dsfont}
\graphicspath{{figs/}}
\usepackage{algorithm}
\usepackage[linesnumbered,ruled,vlined,algo2e]{algorithm2e}
\usepackage{xfrac}
\usepackage{stmaryrd}

\usepackage[capitalize,noabbrev]{cleveref}
\newcommand{\pred}[1]{\left\llbracket#1\right\rrbracket}

\theoremstyle{plain}

\theoremstyle{definition}

\theoremstyle{remark}

\usepackage[textsize=tiny]{todonotes}

\icmltitlerunning{Adalina: Adaptive Linear Approximation for the Shapley Value and Beyond}

\begin{document}
	
	\twocolumn[
	\icmltitle{Adalina: Adaptive Linear Approximation for the Shapley Value and Beyond}
	
	\begin{icmlauthorlist}
		\icmlauthor{Weida Li}{comp}
		\icmlauthor{Yaoliang Yu}{yyy,zzz}
		\icmlauthor{Bryan Kian Hsiang Low}{comp}
	\end{icmlauthorlist}
	
	\icmlaffiliation{yyy}{School of Computer Science, University of Waterloo, Canada}
	\icmlaffiliation{comp}{Department of Computer Science, National University of Singapore, Republic of Singapore}
	\icmlaffiliation{zzz}{Vector Institute, Canada}
	
	\icmlcorrespondingauthor{Weida Li}{vidaslee@gmail.com}
	
	\icmlkeywords{Feature attribution, semi-value computation, Shapley value, Banzhaf value, paired sampling}
	
	\vskip 0.3in
	]

	\printAffiliationsAndNotice{}  
	
	\begin{abstract}
		The Shapley value, and its broader family of semi-values, has received much attention in various attribution problems.
		A fundamental and long-standing challenge is their efficient approximation, since exact computation generally requires an exponential number of utility queries in the number of players $n$.
		To meet the challenges of large-scale applications, we explore the limits of efficiently approximating semi-values under a $\Theta(n)$ space constraint. 
		Building upon a vector concentration inequality, we establish a theoretical framework that enables sharper query complexities for existing unbiased randomized algorithms. 
		Within this framework, we systematically develop a linear-space algorithm that requires $O(\frac{n}{\epsilon^{2}}\log\frac{1}{\delta})$ utility queries to ensure $P(\|\hat{\boldsymbol\phi}-\boldsymbol\phi\|_{2}\geq\epsilon)\leq \delta$ for all commonly used semi-values. 
		In particular, our framework naturally bridges OFA, unbiased kernelSHAP, SHAP-IQ and the regression-adjusted approach, and definitively characterizes when paired sampling is beneficial.
		Moreover, our algorithm allows explicit minimization of the mean squared error $\mathbb{E}[\|\hat{\boldsymbol\phi}-\boldsymbol\phi\|_{2}^{2}]$ for each specific utility function.
		Accordingly, we introduce the first adaptive, linear-time, linear-space randomized algorithm, Adalina, that theoretically achieves improved mean squared error. All of our theoretical findings are experimentally validated. Our code is available at \url{https://github.com/watml/adalina}.
	\end{abstract}
	
	\section{Introduction}
	In recent years, the Shapley value and its broader family of semi-values have found various potential applications in machine learning \cite{rozemberczki2022shapley,cohen2024contextcite,deng2025survey}. The popularity of the Shapley value mainly comes from its uniqueness in satisfying a certain set of axioms \cite{shapley1953value}. 
	In certain machine learning applications, the axiom of efficiency is unarguably unnecessary \cite{kwon2022beta,kwon2022weightedshap}, and the removal of it leads to a broader family of semi-values \cite{dubey1981value}. Specially, each semi-value $\boldsymbol\phi(U) \in \mathbb{R}^{n}$ can be expressed as, for every $i \in [n] \coloneqq \{1,2,\dots,n\}$,
	\begin{equation} \label{eq:semi-value}
		\begin{gathered}
			\phi_{i}(U) \coloneq \sum_{S \subseteq [n]\setminus\{i\}} p_{|S|+1} [U(S\cup \{i\}) - U(S)]\\
			\text{with }\ p_{s} = \int_{0}^{1} t^{s-1}(1-t)^{n-s}\mathrm{d}\mu(t),
		\end{gathered}
	\end{equation}
	where $\mu$ is any Borel probability measure on the closed interval $[0, 1]$.
	For the Shapley value, $\mu$ corresponds to the uniform distribution, resulting in $p_{s} = \frac{1}{n}\binom{n-1}{s-1}^{-1}$.
	Here, $U \colon 2^{[n]} \to \mathbb{R}$ is the so-called utility function that depends on the context. For example, in attributing the model performance to each data point, $U(S)$ is usually defined as the performance of models trained on $S$ \cite{ghorbani2019data,ilyas2022datamodels}. In explaining the contribution of each feature to a specific model prediction, $U(S)$ is usually defined as the expected prediction where features not in $S$ are treated as missing \cite{lundberg2017unified,lundberg2020local}. From Eq.~\eqref{eq:semi-value}, it is clear that computing $\boldsymbol\phi$ exactly in general requires an exponential number of utility \queries of $U$, which constitutes the major hurdle that limits the applicability of semi-values.
	
	Therefore, considerable effort has been devoted to designing efficient randomized algorithms for approximation. \cite{jia2019towards,covert2021improving,zhang2023efficient,wang2023data,li2023robust,kolpaczki2024approximating,li2024faster,fumagalli2024shap,li2024one,musco2025provably,chen2025a,witter2025regressionadjusted}. The existing randomized algorithms can be divided into two categories. The first category improves the approximation quality by minimizing the \msefull (\mse) 
	\begin{equation} \label{eq: approximation reduction}
		\begin{gathered}
			\mathbb{E}[\|\hat{\boldsymbol\phi}-\boldsymbol\phi\|_{2}^{2}],
		\end{gathered}
	\end{equation}
	where $\hat{\boldsymbol\phi}$ denotes an estimate of $\boldsymbol\phi$ produced by a randomized algorithm. All stratified algorithms \citep{castro2017improving,zhang2023efficient,wu2023variance} follow this pattern. The second category tries to provide sharp \qcs,  defined as the minimum number of \queries to ensure
	\begin{equation}
		\begin{gathered}
			\mathbb{P}(\|\hat{\boldsymbol\phi}-\boldsymbol\phi\|_{2}\geq \epsilon)\leq \delta
		\end{gathered}
	\end{equation}
	for every utility function that satisfies $\|U\|_{\infty} \leq C$, where $C$ is constant as $n\rightarrow\infty$ \cite{wang2023data}. Very often, the \qc is determined by corner-case utility functions satisfying $U(S) \in \{C, -C\}$ for every $S$. These two objectives are connected via Chebyshev's inequality,
	\begin{equation}
		\begin{gathered}
			P(\|\hat{\boldsymbol\phi} - \boldsymbol\phi\|_{2} \geq \epsilon) \leq \frac{\mathbb{E}[\|\hat{\boldsymbol\phi}-\boldsymbol\phi\|_{2}^{2}]}{\epsilon^{2}} .
		\end{gathered}
	\end{equation}
	Suppose $\hat{\boldsymbol\phi} = \frac{1}{T}\sum_{t=1}^{T}\hat{\boldsymbol\phi}_{t}$ where $\{\hat{\boldsymbol\phi}_{t}\}_{t=1}^{T}$ are i.i.d. unbiased estimate of $\boldsymbol\phi$.
	By requiring $\frac{\mathbb{E}[\|\hat{\boldsymbol\phi}-\boldsymbol\phi\|_{2}^{2}]}{\epsilon^{2}} \leq \delta$, the \qc is given by $T \geq \frac{\mathbb{E}[\|\hat{\boldsymbol\phi}_{t}-\boldsymbol\phi\|_{2}^{2}]}{\epsilon^{2}\delta}$. To our knowledge, this is the only known regime where \mse and \qc can be simultaneously optimized.

	To obtain a $\log\frac{1}{\delta}$ dependence rather than $\frac{1}{\delta}$ in the \qc, Hoeffding-type concentration techniques are typically applied, which in turn rely on independence among $\{\hat{\boldsymbol\phi}_{t}\}_{t=1}^{T}$.\footnote{To our knowledge, this independence assumption may instead be relaxed using the concept of exchangeable pairs \citep{chatterjee2007stein,lester2014matrix}, but we did not notice any additional advantage in our context.}
	In this regime, we observe that minimizing the \mse
	does not translate into improved \qc, as it often introduces dependence among $\{\hat{\boldsymbol\phi}_{t}\}_{t=1}^{T}$. Surprisingly, to our knowledge, approximation algorithms designed by minimizing the \mse are not even theoretically equipped with clearly improved \mse, the difficulty of which may stem from the introduced convoluted dependence. Moreover, these randomized algorithms come at the expense of using $\Theta(n^{2})$ space instead. As such, it remains an open question:
	\begin{center}
		\emph{Is it possible to provably minimize the \mse while maintaining a $\Theta(n)$ space constraint?}
	\end{center}
	In the last two years, the \qcs for approximating semi-values have seen significant advances. Recently, \citet{li2024one} introduced the one-for-all (OFA) algorithm for approximating all semi-values and proved that it achieves a \qc of $O(\frac{n}{\epsilon^{2}}\log\frac{n}{\delta})$ for all commonly used semi-values, such as Beta Shapley values \cite{kwon2022beta}, which include the Shapley value, and weighted Banzhaf values \cite{li2023robust}, which include the Banzhaf value \citep{banzhaf1965weighted}. Then, \citet{chen2025a} established a provable framework for all the kernelSHAP variants \cite{lundberg2017unified,covert2021improving,musco2025provably} and demonstrated that, by modifying the sampling distribution, the \qc of unbiased kernelSHAP in approximating the Shapley value becomes $O(\frac{n}{\epsilon^{2}\delta})$. In particular, OFA consumes $\Theta(n^{2})$ space, while the other algorithm uses $\Theta(n)$ space. Then, a natural question arises: 
	\begin{center}
		\emph{Can semi-values be approximated with a \qc of $O(\frac{n}{\epsilon^{2}}\log\frac{1}{\delta})$ using only $\Theta(n)$ space?}
	\end{center}
	To meet the challenges of large-scale applications \cite{cohen2024contextcite,he2024shed,tian2026data}, we limit ourselves to a $\Theta(n)$ space constraint. 
	In this work, we will give an affirmative answer to both questions.
	
	\paragraph{Our theoretical contributions.} As will be demonstrated later, the modified unbiased kernelSHAP turns out to be the linear-space version of OFA, and the clue is evident, as both share the same sampling distribution.
	Therefore, the comparison between $O(\frac{n}{\epsilon^{2}}\log\frac{n}{\delta})$ and $O(\frac{n}{\epsilon^{2}\delta})$ suggests that the $\log n$ factor may arise from the limitations of the perspective used. Specifically, this $\log n$ factor comes from the union bound:
	\begin{equation}
		\begin{aligned}
			\mathbb{P}(\|\hat{\boldsymbol\phi} - \boldsymbol\phi\|_{2} \geq \epsilon) &\leq \mathbb{P}(\bigcup_{i\in [n]}|\hat{\phi}_{i}-\phi_{i}|\geq \frac{\epsilon}{\sqrt{n}})\\
			&\leq n \cdot \mathbb{P}(|\hat{\phi}_{1} - \phi_{1}|\geq \frac{\epsilon}{\sqrt{n}}),
		\end{aligned}
	\end{equation}
	which leads to bounding each individual estimate $\hat{\boldsymbol\phi}_{i}$ as a first step. This observation motivates us to consider bounding $\hat{\boldsymbol\phi}$ as a whole, where a vector concentration inequality comes into play. Indeed, this perspective enables sharper \qcs for existing unbiased randomized algorithms. For example, the \qc of the modified unbiased kernelSHAP will be improved to $O(\frac{n}{\epsilon^{2}}\log\frac{1}{\delta})$.
	
	Building upon this holistic perspective, we will establish a framework for approximating semi-values. As will be shown later, our framework naturally bridges several recent approaches \cite{li2024one,fumagalli2024shap,witter2025regressionadjusted,chen2025a}. Not only does our framework provide sharper \qcs for these approaches, but it also offers that:
	\begin{itemize}
		\item  For the use of paired sampling \cite{covert2021improving}, it is theoretically established in Theorem~\ref{thm:paired sampling} that the \mse is improved if $\mathbb{E}[U(\mathbf{S})\cdot U([n]\setminus \mathbf{S})] > 0$;
		
		\item For approximating the Shapley value, the sampling distribution used by OFA and the modified unbiased kernelSHAP is the unique optimal solution that minimizes the \qc;
		
		\item For approximating semi-values, the sampling distribution used by SHAP-IQ does not minimize the \qc, in contrast to the one used by \citet{witter2025regressionadjusted} and OFA. 
	\end{itemize}
	We note that, except for OFA, these approaches only heuristically select the sampling distribution, without demonstrating whether their choices correspond to the unique optimal solution in terms of \qc.
	
	For the randomized algorithm established in our framework, the \qc and the \mse are fully decoupled: once the sampling distribution is optimized for query complexity, the MSE can be further optimized solely through $U$. Specifically, to ensure $\mathbb{P}(\|\hat{\boldsymbol\phi}-\boldsymbol\phi\|_{2}\geq \epsilon)\leq \delta$, it requires
	\begin{equation}
		\begin{gathered}
			\frac{4n D^{*}C^{2}}{\epsilon^{2}}\log\frac{2}{\delta} \ \text{ utility queries,}\\
			\text{and }\ \mathbb{E}[\|\hat{\boldsymbol\phi}-\boldsymbol\phi\|_{2}^{2}] = \frac{n D^{*} \mathbb{E} [U(S)^{2}]}{T} + \text{constant}.
		\end{gathered}
	\end{equation}
	In particular, $D^{*} \in O(1)$ for Beta Shapley values and weighted Banzhaf values. Consequently, minimizing the \mse reduces to minimizing $\mathbb{E}[U(S)^{2}]$, which is upper bounded by $\|U\|_{\infty}^{2}$. 
	
	\paragraph{Our algorithmic contributions.} 
	Another appealing property is that the distribution under $\mathbb{E} [U(S)^{2}]$ is the same as the one used to approximate $\boldsymbol\phi$. It implies that, while approximating $\boldsymbol\phi$, we can simultaneously solve the problem
	\begin{equation}
		\begin{gathered}
			\minimize_{V\in \mathcal{V}}\, \mathbb{E}[(U(\mathbf{S}) - V(\mathbf{S}))^{2}] ,
		\end{gathered}
	\end{equation}
	where $V \in \mathcal{V}$ satisfies $\phi(V) = \mathbf{0}$.
	This forms the foundation for the design of our adaptive, linear-time, linear-space randomized algorithm, namely Adalina in Algorithms~\ref{alg:linearappr} and~\ref{alg:adalin-w}, which automatically minimizes the \mse for each specific utility function $U$.
	In particular, we theoretically prove that Adalina  improves the \mse while maintaining $O(\frac{n}{\epsilon^{2}}\log\frac{1}{\delta})$ \qc, as shown in Theorem~\ref{thm:improvedvariance}. 
	Our theoretical findings align well with empirical results, and our adaptive randomized algorithm consistently performs well across different utility functions and semi-values. 
	
	\section{Vector Concentration Inequality and Sharper \QCS}
In this section, we recall a vector concentration inequality and demonstrate its usefulness in providing sharper \qcs for unbiased estimates. For a biased estimate $\hat{\boldsymbol\phi}_{\mathrm{biased}}$, such as OFA, establishing its \qc typically involves constructing an unbiased counterpart $\hat{\boldsymbol\phi}_{\mathrm{unbiased}}$, after which the analysis (implicitly) bounds $\|\hat{\boldsymbol\phi}_{\mathrm{biased}} - \hat{\boldsymbol\phi}_{\mathrm{unbiased}}\|_{2}$ as an additional term.
We note that all existing \qc analyses for biased estimates proceed in this manner \cite{wang2023data,li2023robust,li2024faster,li2024one}, except for least-square-type estimates \cite{chen2025a}. As a result, the established \qcs for biased estimates are worse than those of their unbiased counterparts. Empirically, however, biased estimates can perform significantly better than its unbiased counterparts, a phenomenon that so far lacked theoretical explanations.

The following vector concentration inequality is based on \citet[Theorem 3.3.4]{yurinsky1995sums}; see Appendix~\ref{app:proofs} for a self-contained proof. 
The significance of this result is that the dimension of the vectors does not appear at all, which is not the case had we applied the union bound to each coordinate (as in many previous works) or the matrix concentration bound  \citep[e.g.,][Theorem 6.1.1]{tropp2015introduction}.
	\begin{restatable}[Vector concentration inequality]{theorem}{Hilbertconcentration} \label{thm:concentration}
		Suppose $\{\mathbf{X}_{i}\}_{i=1}^{M}$ are i.i.d. zero-mean random vectors such that $\mathbb{E}[\|\mathbf{X}_{i}\|_{2}^{2}] \leq \sigma^{2}$ and $\|\mathbf{X}_{i}\|_{2} \leq C$ almost surely. 
		Then, for every $0 < \epsilon \leq \frac{3\sigma^{2}}{C}$, there is
		\begin{equation}
			\begin{gathered}
				\mathbb{P}\left(\left\|\frac{1}{M}\sum_{i=1}^{M}\mathbf{X}_{i}\right\|_{2}\geq \epsilon\right) \leq 2\exp\left( -\frac{M\epsilon^{2}}{4\sigma^{2}} \right).
			\end{gathered}
		\end{equation}
	\end{restatable}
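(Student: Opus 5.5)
The plan is a Pinelis-type exponential supermartingale argument in $\mathbb{R}^{n}$, in which the dimension never appears. Let $\mathbf{S}_{k}=\sum_{i=1}^{k}\mathbf{X}_{i}$ with $\mathbf{S}_{0}=\mathbf{0}$. For a parameter $\lambda>0$, use the potential $h(\mathbf{y})=\cosh(\lambda\|\mathbf{y}\|_{2})$. By Markov's inequality and $\cosh(a)\geq e^{a}/2$,
\begin{equation}
\mathbb{P}(\|\mathbf{S}_{M}\|_{2}\geq M\epsilon)\leq \frac{\mathbb{E}[h(\mathbf{S}_{M})]}{\cosh(\lambda M\epsilon)}\leq 2e^{-\lambda M\epsilon}\,\mathbb{E}[h(\mathbf{S}_{M})].
\end{equation}
So it suffices to control $\mathbb{E}[h(\mathbf{S}_{M})]$ multiplicatively, one summand at a time.

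\textbf{Key one-step lemma (main obstacle).} For fixed $\mathbf{s}$ and a zero-mean $\mathbf{X}$, we want
\begin{equation}
\mathbb{E}[h(\mathbf{s}+\mathbf{X})]\leq h(\mathbf{s})\bigl(1+\mathbb{E}[e^{u}-1-u]\bigr),\qquad u=\lambda\|\mathbf{X}\|_{2}.
\end{equation}
To prove it, consider $\varphi(t)=h(\mathbf{s}+t\mathbf{x})$ and apply Taylor's formula with integral remainder.
\begin{itemize}
\item The first-order term is $\varphi'(0)=\lambda\sinh(\lambda\|\mathbf{s}\|_{2})\langle \mathbf{s}/\|\mathbf{s}\|_{2},\mathbf{x}\rangle$. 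It is linear in $\mathbf{x}$, so it vanishes in expectation because $\mathbb{E}[\mathbf{X}]=\mathbf{0}$. The case $\mathbf{s}=\mathbf{0}$ is trivial.
\item For the second-order term, compute the Hessian of $h$ at $\mathbf{y}\neq\mathbf{0}$. With $P$ the projection onto $\mathbf{y}$, it equals $\lambda^{2}\cosh(\lambda\|\mathbf{y}\|)P+\lambda\frac{\sinh(\lambda\|\mathbf{y}\|)}{\|\mathbf{y}\|}(I-P)$. Since $\sinh(a)/a\leq\cosh(a)$, this is $\preceq\lambda^{2}\cosh(\lambda\|\mathbf{y}\|)I$. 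Hence $\varphi''(t)\leq\lambda^{2}\|\mathbf{x}\|^{2}\cosh(\lambda\|\mathbf{s}+t\mathbf{x}\|)$.
\item Bound $\cosh(\lambda\|\mathbf{s}+t\mathbf{x}\|)\leq\cosh(\lambda\|\mathbf{s}\|)e^{t\lambda\|\mathbf{x}\|}$. This follows from the triangle inequality and $\cosh(a+b)\leq\cosh(a)e^{|b|}$.
\item Combining, the remainder is at most $\cosh(\lambda\|\mathbf{s}\|)\int_{0}^{1}(1-t)u^{2}e^{tu}\,\mathrm{d}t=\cosh(\lambda\|\mathbf{s}\|)(e^{u}-1-u)$, which gives the lemma.
\end{itemize}
This step is the delicate one. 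It is where the Hilbert-space geometry enters, and where the non-smoothness of $\|\cdot\|$ at $\mathbf{0}$ must be handled; $h$ is in fact smooth there because $\cosh$ is even.

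\textbf{Iterate and optimize.} Conditioning on $\mathbf{S}_{k-1}$ and using independence, the lemma gives $\mathbb{E}[h(\mathbf{S}_{M})]\leq\prod_{i=1}^{M}(1+\mathbb{E}[e^{u_{i}}-1-u_{i}])\leq\exp(M\,\mathbb{E}[e^{u}-1-u])$. For $0\leq u<3$,
\begin{equation}
e^{u}-1-u=\sum_{k\geq2}\frac{u^{k}}{k!}\leq\frac{u^{2}}{2}\sum_{j\geq0}(u/3)^{j}=\frac{u^{2}}{2(1-u/3)}.
\end{equation}
Since $u\leq\lambda C$ almost surely, this yields $\mathbb{E}[e^{u}-1-u]\leq\frac{\lambda^{2}\sigma^{2}}{2(1-\lambda C/3)}$. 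Now choose $\lambda=\epsilon/(2\sigma^{2})$. The hypothesis $\epsilon\leq3\sigma^{2}/C$ gives $\lambda C/3\leq1/2$, so the bound is at most $\lambda^{2}\sigma^{2}$. The exponent becomes
\begin{equation}
-\lambda M\epsilon+M\lambda^{2}\sigma^{2}=-\frac{M\epsilon^{2}}{2\sigma^{2}}+\frac{M\epsilon^{2}}{4\sigma^{2}}=-\frac{M\epsilon^{2}}{4\sigma^{2}},
\end{equation}
and dividing the event by $M$ yields exactly the claimed bound $2\exp\bigl(-\frac{M\epsilon^{2}}{4\sigma^{2}}\bigr)$.
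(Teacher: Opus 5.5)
Your proof is correct. It establishes exactly the intermediate inequality the paper relies on, $\mathbb{E}[\cosh(\lambda\|\mathbf{S}_M\|_2)]\le\prod_{i=1}^{M}\mathbb{E}[e^{\lambda\|\mathbf{X}_i\|_2}-\lambda\|\mathbf{X}_i\|_2]$ (Yurinsky's Theorem 3.3.4; note $\mathbb{E}[e^{u}-u]=1+\mathbb{E}[e^{u}-1-u]$), but by a genuinely different route.

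The paper proves this inequality combinatorially. It expands $\mathbb{E}\|\mathbf{S}\|_2^{2\ell}$ into products of inner products and discards every term in which some index occurs exactly once (zero mean plus independence). It bounds the remaining terms by Cauchy--Schwarz, counts index patterns with multinomial coefficients, factorizes by independence, and resums over multiplicity vectors with no entry equal to $1$.

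You instead run Pinelis's one-step smoothness argument. This is a second-order Taylor expansion along $\mathbf{s}+t\mathbf{x}$, using the Hessian bound $\nabla^2\cosh(\lambda\|\mathbf{y}\|_2)\preceq\lambda^2\cosh(\lambda\|\mathbf{y}\|_2)I$ and $\cosh(a+b)\le\cosh(a)e^{|b|}$, followed by conditioning on one summand at a time.

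The downstream steps are the paper's, with one difference in the choice of $\lambda$. Your $\lambda=\epsilon/(2\sigma^2)$ lands directly on the stated bound. The paper's choice (equivalent to $\lambda=\epsilon/(\sigma^2+C\epsilon/3)$ at threshold $M\epsilon$) first yields the full Bernstein form $2\exp\bigl(-M\epsilon^2/(2(\sigma^2+C\epsilon/3))\bigr)$ for all $\epsilon>0$, and only then specializes. The two choices agree at $\epsilon=3\sigma^2/C$.

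As for what each route buys: the paper's route is purely algebraic (bilinearity, Cauchy--Schwarz, independence), with no calculus and no care needed at $\mathbf{0}$. However, its factorization step needs full independence. Yours trades the combinatorics for a local Hessian estimate. It uses only $\mathbb{E}[\mathbf{X}_k\mid\mathbf{X}_1,\dots,\mathbf{X}_{k-1}]=\mathbf{0}$ plus conditional bounds on $\|\mathbf{X}_k\|_2$ and $\mathbb{E}\|\mathbf{X}_k\|_2^2$. It therefore carries over verbatim to martingale difference sequences, which is relevant to the paper's footnote remark about relaxing the independence assumption.
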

	As will be shown below, $\frac{3\sigma^{2}}{C} \rightarrow \infty$ as $n \rightarrow \infty$ in our context; hence, this constraint can be ignored when deriving \qcs. 
	
	Next, we demonstrate how \Cref{thm:concentration} enables sharper \qcs for unbiased estimates. Take AME \citep[average marginal effect,][]{lin2022measuring} as an example, whose established \qc is $O(\frac{n}{\epsilon^{2}}\log\frac{n}{\delta})$ for its unbiased variant approximating weighted Banzhaf values \citep[Proposition 6]{li2024one}. Note that the unbiased variant is also the unbiased counterpart for analyzing the maximum sample reuse (MSR) approach in \citet{li2023robust,wang2023data}.
	
	The unbiased AME estimator for $w$-weighted Banzhaf value works as follows: First, we randomly sample a subset $S \subseteq [n]$ by including each player independently with probability $w$ (recall $0<w<1$). Then, the random vector $\mathbf{X} \in \mathbb{R}^{n}$ defined as 
	\begin{align}
		\mathbf{X}_i = 
		\tfrac{1}{w} \cdot U(\mathbf{S}) \cdot \pred{i \in \mathbf{S}} 
		-\tfrac{1}{1-w} \cdot U(\mathbf{S}) \cdot \pred{i \not\in \mathbf{S}}
	\end{align}
	is an unbiased estimate of the $w$-weighted Banzhaf value $\boldsymbol\phi$. Indeed, let $s = |S|$ be the cardinality, we verify
	\begin{align}
		\mathbb{E}[\mathbf{X}_i] &= \sum_{S\subseteq [n]} U(S) \cdot w^{s} (1-w)^{n-s} \cdot  \left( \tfrac{\pred{i \in S}}{w}
		- \tfrac{\pred{i \not\in S}}{1-w} \right)
		\\
		&= \sum_{S \not\ni i} w^{s}(1-w)^{n-s-1} [ U(S\cup i) \!-\! U(S) ] \eqcolon \boldsymbol\phi.
	\end{align}

	To reduce the variance, we average over $T$ i.i.d. copies $\{\mathbf{X}_{t}\}_{t=1}^{T}$ and obtain $\hat{\boldsymbol\phi}^{\mathrm{AME}}\coloneq\frac{1}{T}\sum_{t=1}^{T}\mathbf{X}_{t}$. Since $\|\boldsymbol\phi\|_{2} = \|\mathbb{E}[\mathbf{X}_{t}]\|_{2} \leq \mathbb{E}[\|\mathbf{X}_{t}\|_{2}]$ and $\|\mathbf{X}_{t}\|_{2}^{2} \leq c^2 \coloneq \frac{nC^{2}}{w^2 \wedge (1-w)^2}$,  (assuming that $U(S) \leq C$), we have
	\begin{equation}
		\begin{gathered}
			\|\mathbf{X}_{t} - \boldsymbol\phi\|_{2} \leq 2c
			\ \text{ and } \ \mathbb{E}[\|\mathbf{X}_{t}-\boldsymbol\phi\|_{2}^{2}] \leq c^2.
		\end{gathered}
	\end{equation}
	Applying Theorem~\ref{thm:concentration}, for every $\epsilon \leq \frac{3}{2}c$, we have
	\begin{equation}
		\begin{gathered}
			\mathbb{P}(\|\hat{\boldsymbol\phi}^{\mathrm{AME}} - \boldsymbol\phi\|_{2}\geq \epsilon) \leq 2\exp\left( -\frac{T\epsilon^{2}}{4c^2} \right) \eqcolon \delta ,
		\end{gathered}
	\end{equation}
	leading to $T \geq \frac{4nC^{2}}{\epsilon^{2}[w^2 \wedge (1-w)^2]}\log\frac{2}{\delta}$. Therefore, we have improved the \qc of unbiased AME from $O(\frac{n}{\epsilon^{2}}\log\frac{n}{\delta})$ to $O(\frac{n}{\epsilon^{2}}\log\frac{1}{\delta})$, amounting to removing a log factor but more importantly revealing that unbiased AME is already a linear-time, linear-space algorithm for approximating weighted Banzhaf values.
	
	Below, we will repeatedly apply \Cref{thm:concentration} to derive query complexities in the same way as illustrated above.
	
	\section{Our Framework for Approximating Semi-Values}
\label{sec:framework}
We begin by rewriting the formula of semi-values in Eq.~\eqref{eq:semi-value}:
\begin{equation}
	\begin{gathered}
		\boldsymbol\phi = \boldsymbol\varphi + (p_{n} u_{[n]} - p_{1} u_{\emptyset})\mathbf{1}_{n} \ \text{ with } \ \boldsymbol\varphi \coloneq \sum_{\emptyset\subsetneq S \subsetneq [n]} u_S \mathbf{w}_S,
	\end{gathered}
\end{equation}
where $u_{S} \coloneq U(S)$ and 
\begin{align}
	(\mathbf{w}_S)_i \coloneq p_s \pred{i\in S} - p_{s+1} \pred{i \not \in S}.
\end{align}
Throughout, for a set $S$ we use the corresponding lowercase $s$ to denote its cardinality and  the Iverson bracket $\pred{A}$ equals 1 if $A$ holds and 0 otherwise.

Since calculating $p_{n}u_{[n]} - p_{1}u_{\emptyset}$ costs only $2$ utility evaluations, we will focus on the approximation of $\boldsymbol\varphi$.

To sample a (nonempty and proper) subset $S \subseteq [n]$, we first sample its size $s \in [n-1]$ according to a probability vector  $\mathbf{q} \in \mathbb{R}^{n-1}$. Then, we sample $S$ uniformly from all subsets with size $s$. 
Given a sequence of such sampled subsets $\{S_{t}\}_{t=1}^{T}$, we form an \emph{unbiased} estimate of $\boldsymbol\phi$:
\begin{equation} \label{eq: from W to Z}
	\begin{gathered}
		\hat{\boldsymbol\phi} = \frac{1}{T}\sum_{t=1}^{T} \frac{u_{S_{t}}}{q_{s_{t}}\binom{n}{s_{t}}^{-1}}\mathbf{w}_{S_{t}} + (p_{n}u_{[n]} - p_{1} u_{\emptyset})\mathbf{1}_{n}.
	\end{gathered}
\end{equation}
Let $m_{s} = \binom{n-1}{s-1} p_{s}$ so that  $\sum_{s=1}^{n}m_{s} = 1$ according to Eq.~\eqref{eq:semi-value}. Then, we have
\begin{equation}
	\begin{gathered}
		(\mathbf{z}_{S})_i \coloneq \frac{\binom{n}{s}}{q_{s}} (\mathbf{w}_{S})_i = 
		\frac{n}{q_{s}}\left( \frac{m_{s}}{s} \pred{ i \in S} - \frac{m_{s+1}}{n-s} \pred{i\not\in S}\right).
	\end{gathered}
\end{equation}

Consequently, 
\begin{equation}
	\label{eq:unbiased}
	\begin{gathered}
		\hat{\boldsymbol\phi} = \frac{1}{T}\sum_{t=1}^{T}u_{S_{t}}\mathbf{z}_{S_{t}} + (m_{n}u_{[n]} - m_{1} u_{\emptyset})\mathbf{1}_{n},
	\end{gathered}
\end{equation}
whence Theorem~\ref{thm:concentration} applies. To analyze its \qc, we note that 
\begin{equation}
	\begin{gathered}
		\mathbb{E}[\|u_{\mathbf{S}}\mathbf{z}_{\mathbf{S}}\|_{2}^{2}] = \sum_{\emptyset\subsetneq S \subsetneq [n]} q_s\binom{n}{s}^{-1}\frac{n^2}{q_{s}^2}\left( \frac{m_{s}^{2}}{s} + \frac{m_{s+1}^{2}}{n-s} \right)u_{S}^{2} \\
		= n\cdot D(\mathbf{q})\cdot\mathbb{E}_{\tilde{\mathbf{q}}}[u_{\mathbf{S}}^{2}]
		\leq n\cdot D(\mathbf{q}) \cdot C ^{2}
	\end{gathered}
\end{equation}
where
\begin{equation}
	\begin{gathered}
		\tilde q_s \propto \frac{n}{q_s} \left( \frac{m_{s}^{2}}{s} + \frac{m_{s+1}^{2}}{n-s} \right)\\
		\text{and }\ D(\mathbf{q}) \!\coloneq\! \sum_{s=1}^{n-1}\frac{n}{q_{s}}\left( \frac{m_{s}^{2}}{s} + \frac{m_{s+1}^{2}}{n-s} \right)  
	\end{gathered}
\end{equation}
Throughout we assume $\|\mathbf{u}\|_\infty \leq C$ for some constant $C$ (that does not depend on $n$).

Although developed differently, the term $D(\mathbf{q})$ also appeared in the OFA (one-for-all) framework, where it is employed to optimize the associated \qc \citep[Theorem~1]{li2024one}. 
This is not a coincidence, as our framework can be derived from OFA by reducing its space complexity to $\Theta(n)$. We refer the reader to Appendix \ref{app:bridges} for more details.

According to Theorem~\ref{thm:concentration}, when $\epsilon$ is sufficiently small, the unbiased estimator in Eq.~\eqref{eq:unbiased} requires at most
\begin{equation}
	\begin{gathered}
		\frac{4nD(\mathbf{q})\mathbb{E}_{\tilde{\mathbf{q}}} [u_{\mathbf{S}}^2]}{\epsilon^{2}}\log\frac{2}{\delta}
		\leq \frac{4nC^{2}D(\mathbf{q})}{\epsilon^{2}}\log\frac{2}{\delta}
	\end{gathered}
\end{equation}
utility \queries to ensure $\mathbb{P}(\|\hat{\boldsymbol\phi}-\boldsymbol\phi\|_{2}\geq \epsilon)\leq \delta$. 
Clearly, $D(\mathbf{q})$ is the dominant factor governing the \qc, whereas $\mathbb{E}_{\tilde q}[u_{\mathbf{S}}^{2}]$ determines the \mse.
Remarkably, our framework achieves linear \qc if $D(\mathbf{q}) \in O(1)$. Using the Cauchy-Schwartz inequality, 
\begin{equation} \label{eq:inequality of D(q)}
	\begin{gathered}
		D(\mathbf{q}) \geq \left( \sum_{s=1}^{n-1}\sqrt{n\cdot\left( \frac{m_{s}^{2}}{s} + \frac{m_{s+1}^{2}}{n-s} \right)} \right)^{2} \eqcolon D^{*},
	\end{gathered}
\end{equation}
where the equality is achieved if and only if
\begin{equation} \label{eq:optimal q}
	\begin{gathered}
		q_{s} = q_{s}^{*} \propto \sqrt{n\cdot\left( \frac{m_{s}^{2}}{s} + \frac{m_{s+1}^{2}}{n-s} \right)} .
	\end{gathered}
\end{equation} 
In particular, $D^{*} \in O(1)$ for all Beta Shapley values and weighted Banzhaf values, which was already proved by \citet[Proposition 4]{li2024one}.

\begin{figure*}[t]
	\centering
	\begin{tabular}{cccc}
		\includegraphics[width=0.225\linewidth]{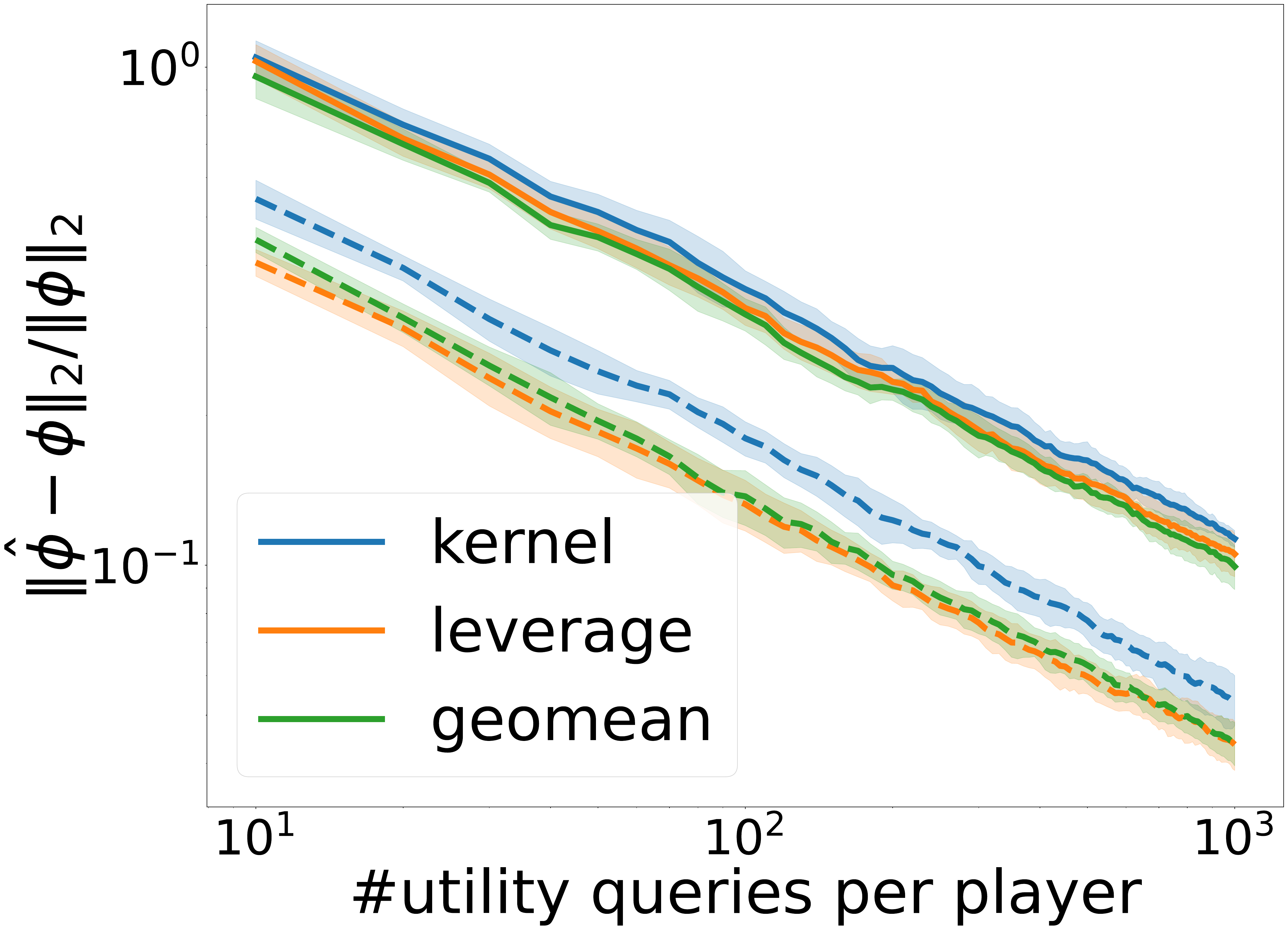} & \includegraphics[width=0.225\linewidth]{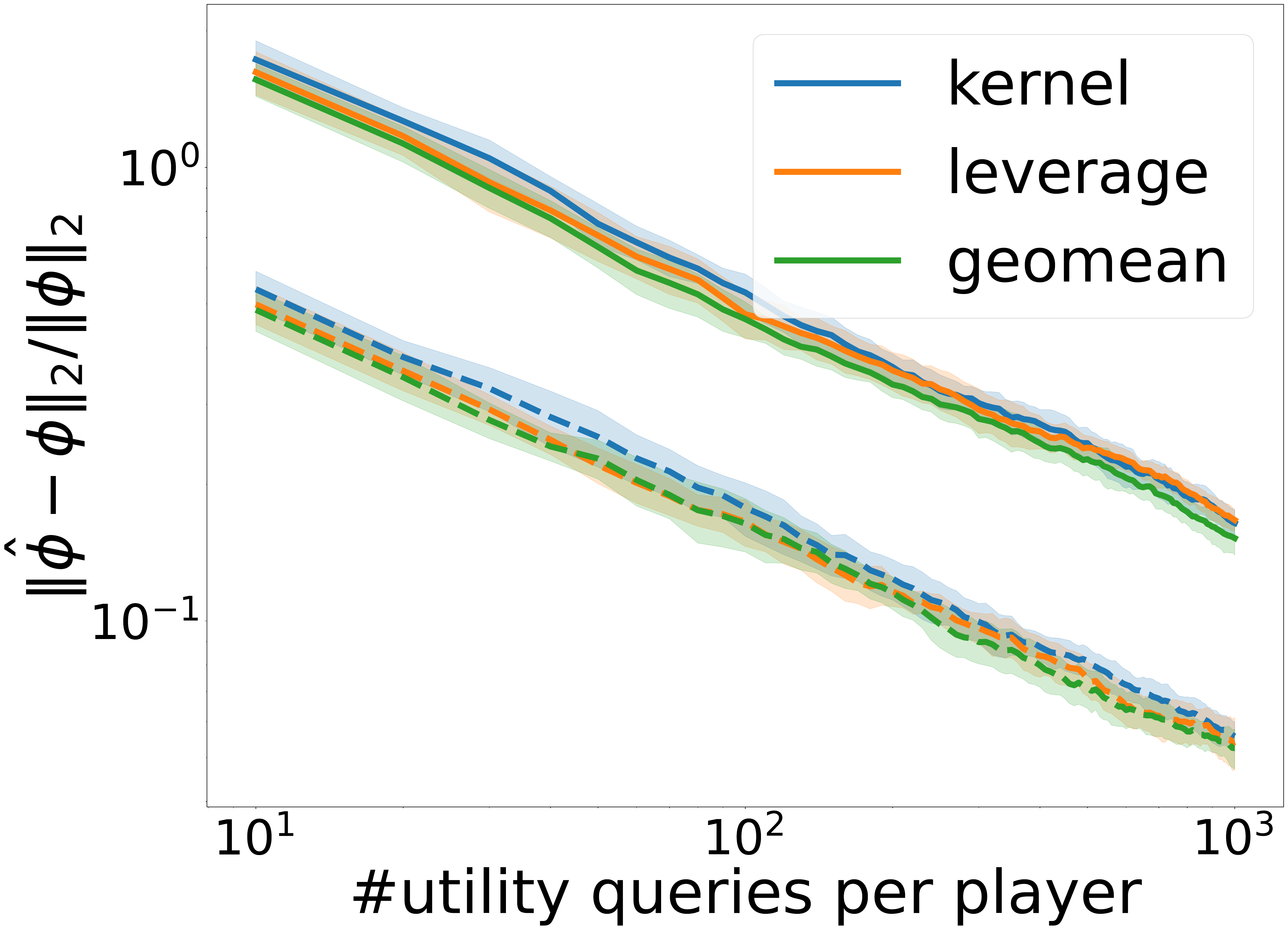} &
		\includegraphics[width=0.225\linewidth]{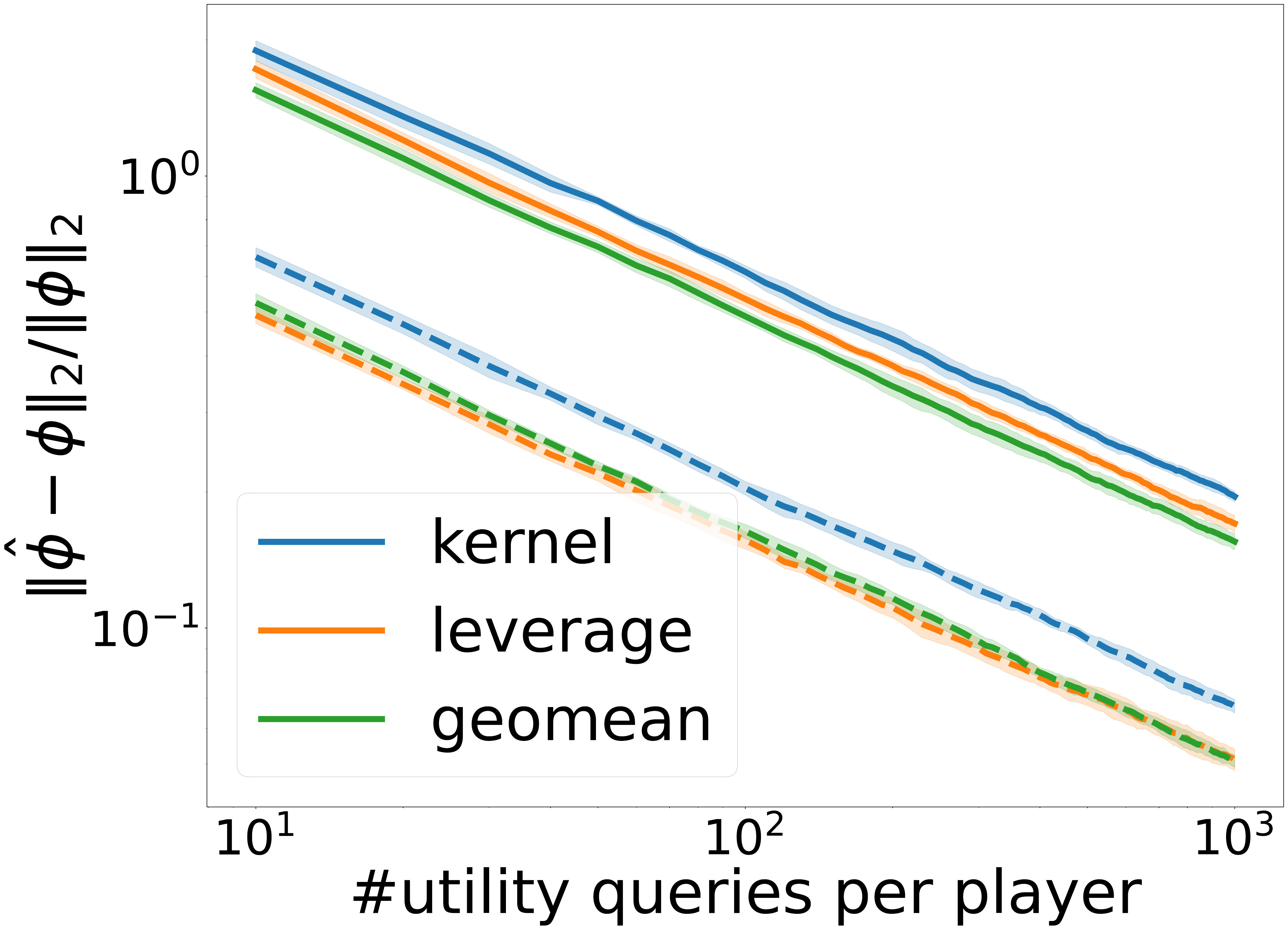} &
		\includegraphics[width=0.225\linewidth]{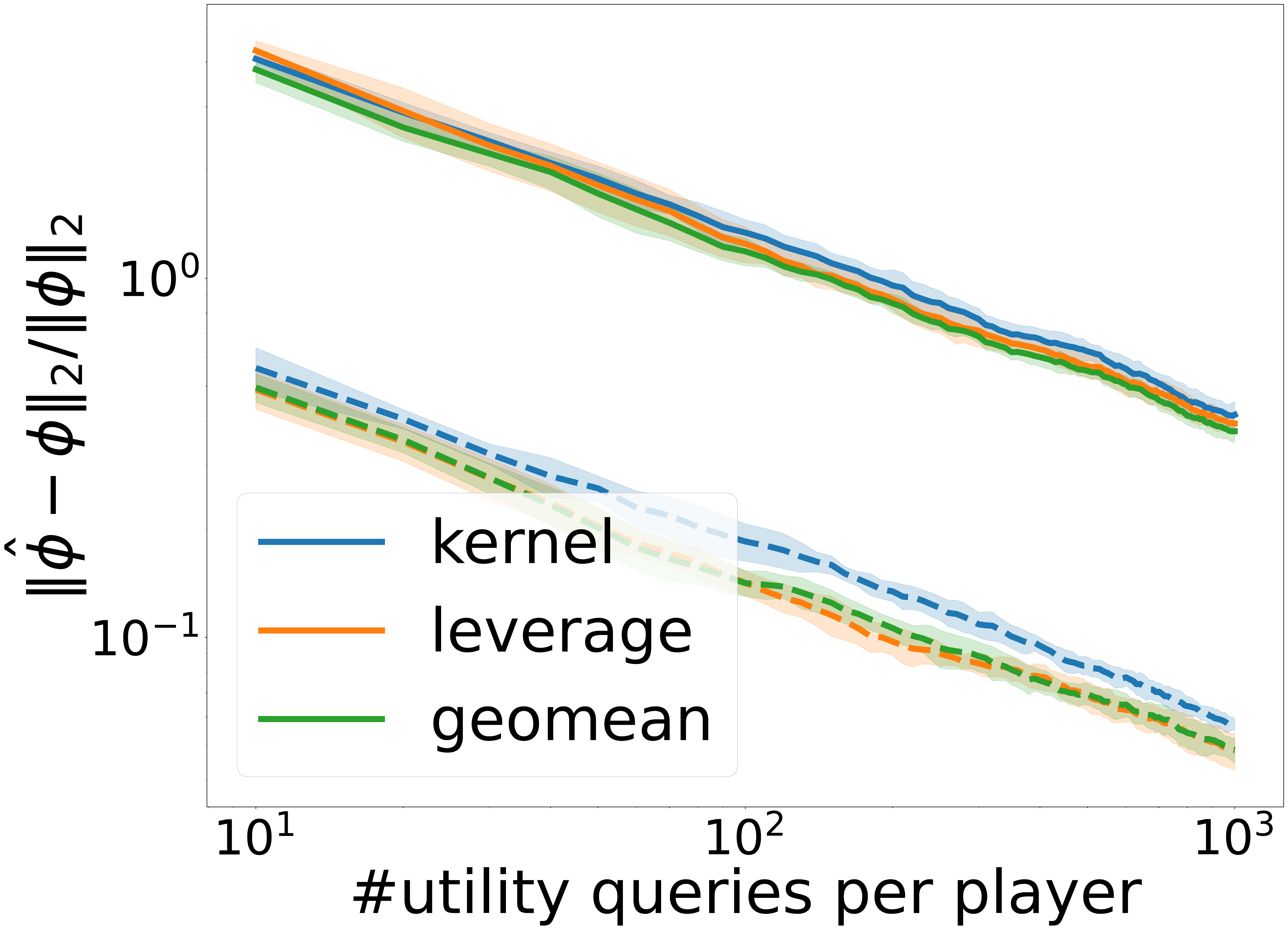}\\
		\includegraphics[width=0.225\linewidth]{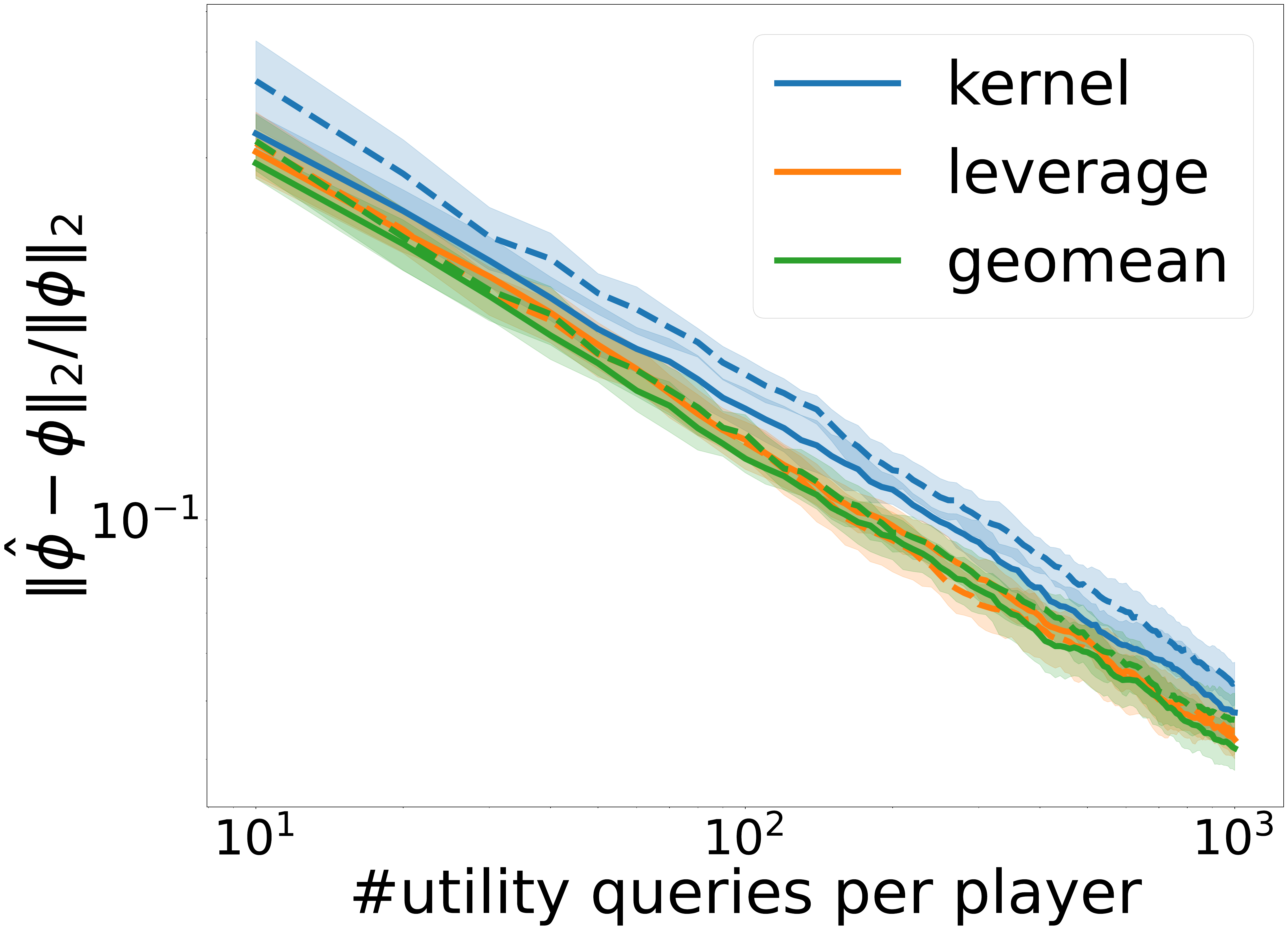} & \includegraphics[width=0.225\linewidth]{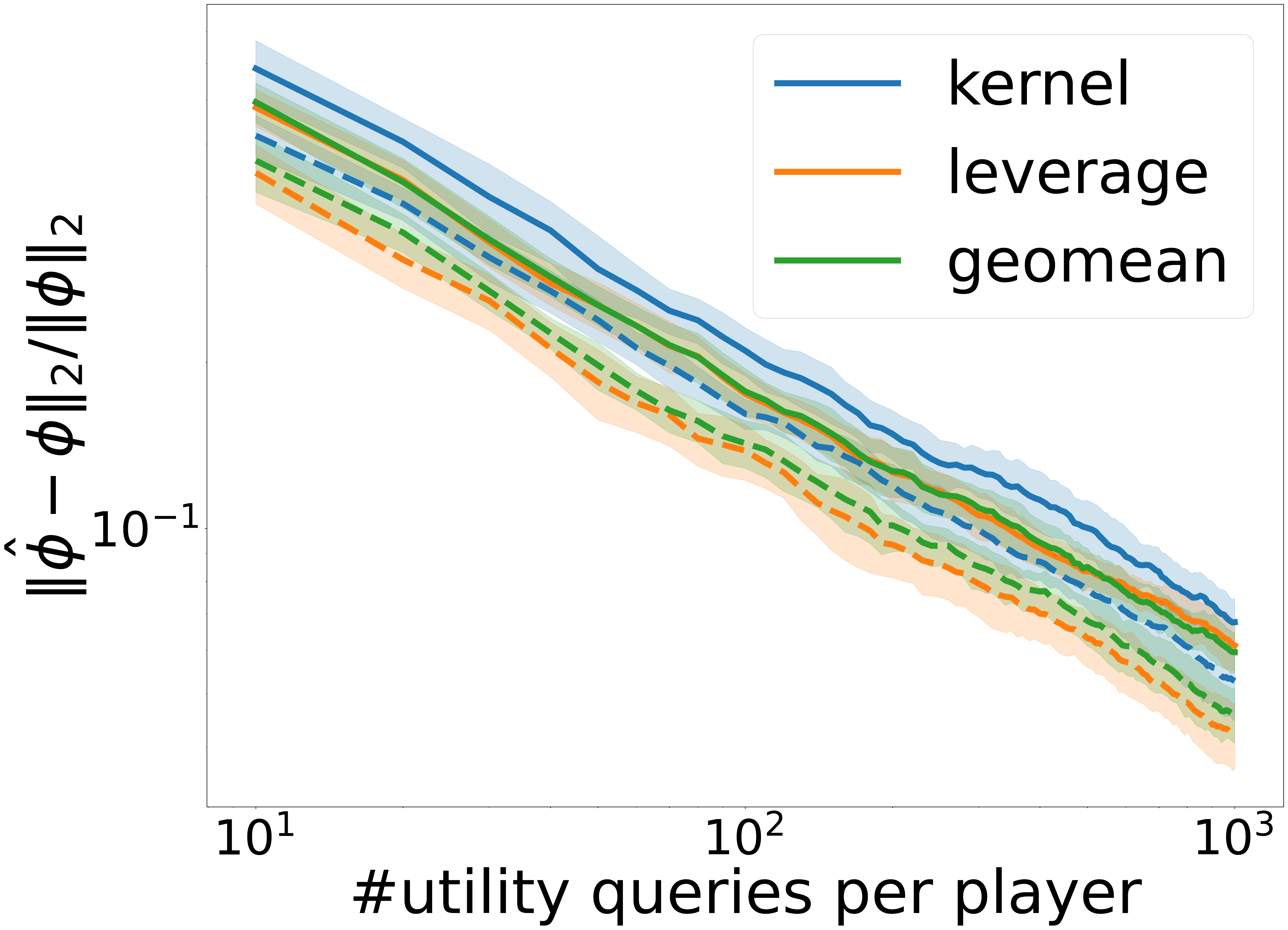} &
		\includegraphics[width=0.225\linewidth]{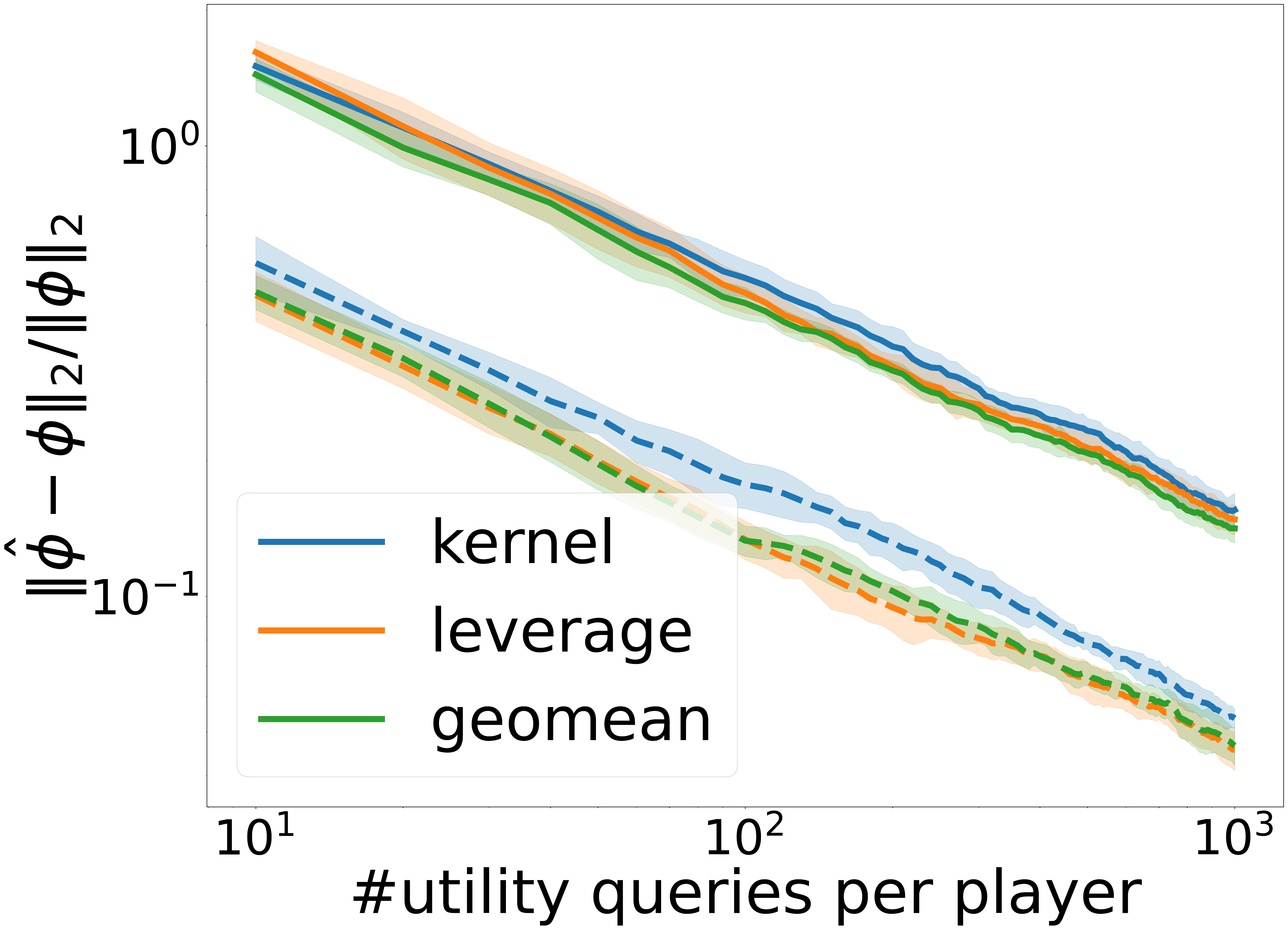} &
		\includegraphics[width=0.225\linewidth]{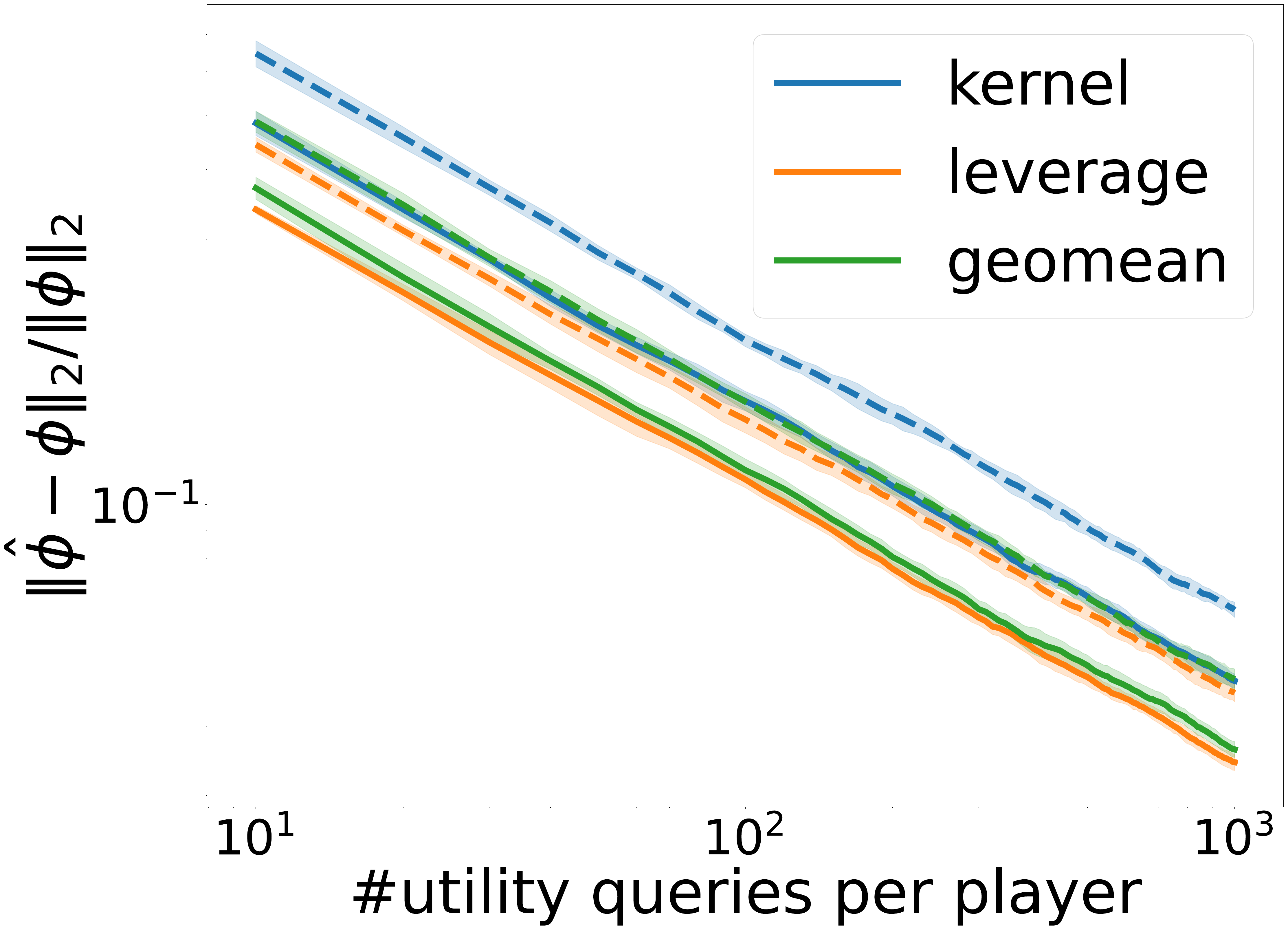}\\
		\phantom{aaaa}spambase ($n=57$) & \phantom{aaaa}FOTP ($n=51$) & \phantom{aaaa}MinibooNE ($n=50$) & \phantom{aaaa}philippine ($n=308$)
	\end{tabular}
	\caption{The relative approximation error of unbiased kernelSHAP in approximating the Shapley value with different sampling distributions. Here, $\lambda = \frac{u_{[n]}-u_{\emptyset}}{n}$. The dashed lines correspond to those with paired sampling, whereas the solid lines are without paired sampling. For the first row, the utility function there is positive, whereas for the second row, $U(S)$ can be either positive or negative.}
	\label{fig:paired sampling}
\end{figure*}

\begin{restatable}{theorem}{linearappr} \label{thm:complexity of our framework}
	Setting $\mathbf{q} = \mathbf{q}^{*}$, for every $\epsilon \leq \frac{3C\sqrt{nD^{*}}}{2}$, there is
	\begin{equation}
		\begin{gathered}
			\mathbb{P}(\|\hat{\boldsymbol\phi}-\boldsymbol\phi\|_{2}\geq \epsilon) \leq 2\exp\left( -\frac{T\epsilon^{2}}{4nD^{*}C^{2}} \right) \\
			\text{and }\ \mathbb{E}[\|\hat{\boldsymbol\phi}-\boldsymbol\phi\|_{2}^{2}] = \frac{nD^{*}\mathbb{E}[u_{\mathbf{S}}^{2}] - \|\boldsymbol\varphi\|_{2}^{2}}{T} ,\\
			\text{where } \ \mathbb{E}[u_{\mathbf{S}}^{2}] = \sum_{\emptyset\subsetneq S \subsetneq[n]}q_{s}^{*}\binom{n}{s}^{-1}u_{S}^{2} .
		\end{gathered}
	\end{equation}
	In particular, $\|\mathbf{z}_{S}\|_{2}^{2} = nD^{*}$ for every $\emptyset\subsetneq S\subsetneq [n]$.
\end{restatable}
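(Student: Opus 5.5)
I would start with the final claim, $\|\mathbf{z}_S\|_2^2 = nD^*$ for every nonempty proper $S$, since everything else follows from it.

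\textbf{Step 1: the norm of $\mathbf{z}_S$.} From the explicit form of $\mathbf{z}_S$, exactly $s$ coordinates equal $\frac{n}{q_s}\frac{m_s}{s}$ and $n-s$ coordinates equal $-\frac{n}{q_s}\frac{m_{s+1}}{n-s}$. Hence
\[
\|\mathbf{z}_S\|_2^2=\frac{n^2}{q_s^2}\Big(\frac{m_s^2}{s}+\frac{m_{s+1}^2}{n-s}\Big)=\frac{n\,a_s^2}{q_s^2},
\qquad a_s\coloneq\sqrt{n\Big(\frac{m_s^2}{s}+\frac{m_{s+1}^2}{n-s}\Big)}.
\]
With $q_s=q_s^*=a_s/\sum_r a_r$ this becomes $n(\sum_r a_r)^2=nD^*$, independent of $S$. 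The same computation shows that $\tilde{\mathbf q}\propto a_s^2/q_s^*\propto a_s$ coincides with $\mathbf q^*$. Therefore $\mathbb{E}_{\tilde{\mathbf q}}[u_{\mathbf S}^2]$ is the expectation stated in the theorem, and $D(\mathbf q^*)=D^*$.

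\textbf{Step 2: the MSE identity.} Set $\mathbf{X}_t=u_{S_t}\mathbf{z}_{S_t}-\boldsymbol\varphi$. Unbiasedness gives $\mathbb{E}[u_{\mathbf S}\mathbf z_{\mathbf S}]=\boldsymbol\varphi$, since $\mathbb P(S)=q_s\binom{n}{s}^{-1}$ cancels the $\binom{n}{s}/q_s$ factor and returns $\sum_S u_S\mathbf w_S$. By independence,
\[
\mathbb{E}\|\hat{\boldsymbol\phi}-\boldsymbol\phi\|_2^2=\frac{1}{T}\,\mathbb{E}\|\mathbf X_1\|_2^2=\frac{1}{T}\Big(\mathbb{E}\big[u_{\mathbf S}^2\|\mathbf z_{\mathbf S}\|_2^2\big]-\|\boldsymbol\varphi\|_2^2\Big).
\]
By Step 1 the first term equals $nD^*\,\mathbb{E}[u_{\mathbf S}^2]$ under $\mathbf q^*$, which is the claimed identity.

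\textbf{Step 3: the tail bound.} I apply Theorem~\ref{thm:concentration} to $\mathbf X_t$. For the variance, take $\sigma^2=nD^*C^2$, which bounds $\mathbb E\|\mathbf X_t\|^2\le \mathbb E[u^2\|\mathbf z\|^2]\le nD^*C^2$. For the almost-sure bound, $\|\mathbf X_t\|\le \|u\mathbf z\|+\|\boldsymbol\varphi\|\le 2C\sqrt{nD^*}$, using $\|\boldsymbol\varphi\|\le \mathbb E\|u\mathbf z\|$. The admissible range is then $\epsilon\le 3\sigma^2/(2C\sqrt{nD^*})=\tfrac{3}{2}C\sqrt{nD^*}$, and the bound $2\exp(-T\epsilon^2/(4nD^*C^2))$ follows directly.

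The only real obstacle is bookkeeping. One must confirm that $\tilde{\mathbf q}=\mathbf q^*$ and that the constants in the range condition match, with the factor $2$ coming from the centering step. Nothing here is deep.
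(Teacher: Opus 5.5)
Your proposal is correct and follows essentially the same route as the paper. You compute $\|\mathbf{z}_S\|_2^2 = nD^{*}$ under $\mathbf{q}^{*}$, obtain the MSE from $\mathbb{E}[\|u_{\mathbf S}\mathbf z_{\mathbf S}\|_2^2]-\|\boldsymbol\varphi\|_2^2$, and invoke Theorem~\ref{thm:concentration} with $\sigma^2 = nD^{*}C^2$ and almost-sure bound $2C\sqrt{nD^{*}}$; your explicit checks that $\tilde{\mathbf q}=\mathbf q^{*}$ and that cross-terms vanish by independence just spell out steps the paper leaves implicit.
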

It is worth pointing out that, in Theorem~\ref{thm:complexity of our framework}, the optimal distribution $\mathbf{q}^{*}$ defined in Eq.~\eqref{eq:optimal q} uniquely satisfies two properties: (i) the expectation $\mathbb{E}[u_{\mathbf{S}}^{2}]$ is taken with respect to the same distribution used to approximate $\boldsymbol\phi$
and (ii) $\|\mathbf{z}_S\|_2^2 = nD^{*}$ for every $S$. These properties form the foundation for designing our adaptive randomized algorithms with provably improved \mse.

\paragraph{Paired sampling.} Paired sampling has become a common tool in improving the approximation quality of kernelSHAP \cite{covert2021improving}. Empirically, however, it does not always yield performance gains \cite{li2024faster}, making its effectiveness somewhat mysterious. Nevertheless, our framework offers a definitive characterization.\footnote{Concurrently, \citet{fumagalli2026odd} also shed light on the mechanism of paired sampling in least-square-based randomized algorithms.} 
Paired sampling is specific to symmetric semi-values satisfying $p_{n-s+1} = p_{s}$ for every $s \in [n]$, which include the Shapley value and the Banzhaf value \cite{banzhaf1965weighted}. Instead of sampling subsets independently, paired sampling inserts the complement of each sampled subset immediately after it.

\begin{restatable}{theorem}{pairedsampling} \label{thm:paired sampling}
	Let $T$ be the total number of utility queries,  accounting for the fact that each sampled subset incurs 2 utility queries under paired sampling. 
	Then, when $q_{s}=q_{n-s}$ for every $s \in [n-1]$, The use of paired sampling technique reduces to approximating $\frac{U - U^{c}}{2}$, where $U^{c}(S) \coloneq U([n]\setminus S)$. In particular,
	\begin{equation}
		\begin{gathered}
			\mathbb{E}[\|\hat{\boldsymbol\phi}^{\mathrm{paired}}-\boldsymbol\phi\|_{2}^{2}] = \frac{nD(\mathbf{q})\sigma_{\tilde{q}}^{2} - 2\|\boldsymbol\varphi\|_{2}^{2} }{T},
		\end{gathered}
	\end{equation}
	where $\sigma_{\tilde{\mathbf{q}}}^{2} \coloneq \left( \mathbb{E}_{\tilde{\mathbf{q}}}[u_{\mathbf{S}}^{2}]-\mathbb{E}_{\tilde{\mathbf{q}}}[u_{\mathbf{S}}\cdot u_{[n]\setminus \mathbf{S}}]\right)$.
\end{restatable}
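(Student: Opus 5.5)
We first show that, for symmetric semi-values, $\mathbf{z}_{[n]\setminus S} = -\mathbf{z}_S$ whenever $q_s=q_{n-s}$. With $m_s=\binom{n-1}{s-1}p_s$ and $p_{n-s+1}=p_s$, the identity $\binom{n-1}{s-1}\frac{1}{s}=\frac{1}{n}\binom{n}{s}$ gives $\frac{m_s}{s}=\frac{p_s}{n}\binom{n}{s}$. Likewise $\frac{m_{s+1}}{n-s}=\frac{p_{s+1}}{n}\binom{n}{s}$. For the complement $S^c$, of size $n-s$, we have $\frac{m_{n-s}}{n-s}=\frac{p_{n-s}}{n}\binom{n}{s}=\frac{p_{s+1}}{n}\binom{n}{s}$ and $\frac{m_{n-s+1}}{s}=\frac{p_s}{n}\binom{n}{s}$.

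Now fix $i\in S$. Then $(\mathbf{z}_{S^c})_i=-\frac{n}{q_{n-s}}\frac{m_{n-s+1}}{s}=-\frac{n}{q_s}\frac{m_s}{s}=-(\mathbf{z}_S)_i$. The case $i\notin S$ is symmetric. Consequently a paired draw contributes
\[
u_S\mathbf{z}_S+u_{S^c}\mathbf{z}_{S^c}=2\cdot\tfrac{u_S-u_{S^c}}{2}\,\mathbf{z}_S .
\]
This is exactly the single-sample term of the estimator in Eq.~\eqref{eq:unbiased} for the utility $V=\frac{U-U^c}{2}$.

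Next we identify the target. Since $q_s=q_{n-s}$, $S^c$ has the same law as $S$. Hence $\mathbb{E}[u_{S^c}\mathbf{z}_S]=\mathbb{E}[u_S\mathbf{z}_{S^c}]=-\boldsymbol\varphi$, so $\boldsymbol\varphi(V)=\boldsymbol\varphi$ and the paired estimator is unbiased. With budget $T$, there are $T/2$ i.i.d. pairs. Averaging $u_S\mathbf{z}_S+u_{S^c}\mathbf{z}_{S^c}$ with weight $\frac{1}{T}$ per subset equals averaging $v_S\mathbf{z}_S$ over the $T/2$ pairs, where $v_S=\frac{u_S-u_{S^c}}{2}$. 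This gives the reduction claimed in the statement.

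For the MSE, the variance computation of Theorem~\ref{thm:complexity of our framework} carries over for general $\mathbf{q}$ (without using $\|\mathbf{z}_S\|^2$ constant). It yields $\mathbb{E}\|\hat{\boldsymbol\phi}-\boldsymbol\phi\|^2=\frac{1}{M}\big(nD(\mathbf{q})\mathbb{E}_{\tilde{\mathbf{q}}}[v_{\mathbf{S}}^2]-\|\boldsymbol\varphi\|^2\big)$ with $M=T/2$.

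It remains to expand $\mathbb{E}_{\tilde{\mathbf{q}}}[v^2]=\frac14\big(\mathbb{E}[u_S^2]+\mathbb{E}[u_{S^c}^2]-2\mathbb{E}[u_Su_{S^c}]\big)$. For this we check that $\tilde{\mathbf{q}}$ is also complement-invariant. Using $m_{n-s}=m_{s+1}$ and $m_{n-s+1}=m_s$, the weight $\frac{n}{q_{n-s}}\big(\frac{m_{n-s}^2}{n-s}+\frac{m_{n-s+1}^2}{s}\big)$ equals the weight at $s$. This gives $\mathbb{E}_{\tilde{\mathbf{q}}}[v^2]=\frac12\sigma^2_{\tilde{\mathbf{q}}}$. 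Substituting, $\frac{2}{T}\big(\frac{nD(\mathbf{q})}{2}\sigma^2_{\tilde{\mathbf{q}}}-\|\boldsymbol\varphi\|^2\big)$ gives the stated formula.

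The main obstacle is the sign identity $\mathbf{z}_{S^c}=-\mathbf{z}_S$, which depends on correctly translating symmetry of $p$ into the $m$-indices. The remaining steps are routine bookkeeping.
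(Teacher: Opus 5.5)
Your proposal is correct and follows the same route as the paper's proof. You pair each sampled subset with its complement and use $\mathbf{z}_{[n]\setminus S}=-\mathbf{z}_S$ to rewrite each pair as $\frac{u_S-u_{[n]\setminus S}}{2}\mathbf{z}_S$, then apply the single-sample MSE identity to $T/2$ i.i.d. pairs; in addition, you explicitly verify both the sign identity and the complement-invariance of $\tilde{\mathbf{q}}$, which the paper uses without justification.
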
	
Note that for $\mathbf{q}^{*}$ in Eq.~\eqref{eq:optimal q}, indeed $q_{s}^*=q_{n-s}^*$ for all $s$.
Without using paired sampling, the corresponding \mse is
\begin{equation}
	\begin{gathered}
		\mathbb{E}[\|\hat{\boldsymbol\phi}-\boldsymbol\phi\|_{2}^{2}] = \frac{nD(\mathbf{q})\mathbb{E}_{\tilde{\mathbf{q}}}[u_{\mathbf{S}}^{2}] - \|\boldsymbol\varphi\|_{2}^{2}}{T} .
	\end{gathered}
\end{equation}
Therefore, the use of paired sampling improves the \mse if and only if
\begin{equation}
	\begin{gathered}
		nD(\mathbf{q})\mathbb{E}_{\tilde{\mathbf{q}}}[u_{\mathbf{S}}\cdot u_{[n]\setminus\mathbf{S}}] + \|\boldsymbol\varphi\|_{2}^{2} > 0.
	\end{gathered}
\end{equation}
Clearly, this improvement occurs when $\mathbb{E}_{\tilde{\mathbf{q}}}[u_{\mathbf{S}}\cdot u_{[n]\setminus \mathbf{S}}] > 0$, which holds whenever $U$ does not change sign. As shown in Figure~\ref{fig:paired sampling}, Theorem~\ref{thm:paired sampling} exactly predicts when paired sampling boosts the approximation. In particular, when $\mathbb{E}_{\tilde{\mathbf{q}}}[u_{\mathbf{S}}\cdot u_{[n]\setminus \mathbf{S}}] > 0$ is not satisfied, paired sampling can even degrade performance.

Next, we demonstrate how our framework bridges the existing approaches. More details can be found in Appendix~\ref{app:bridges}.

\begin{figure*}[t]
	\centering
	\begin{tabular}{cccc}
		\includegraphics[width=0.225\linewidth]{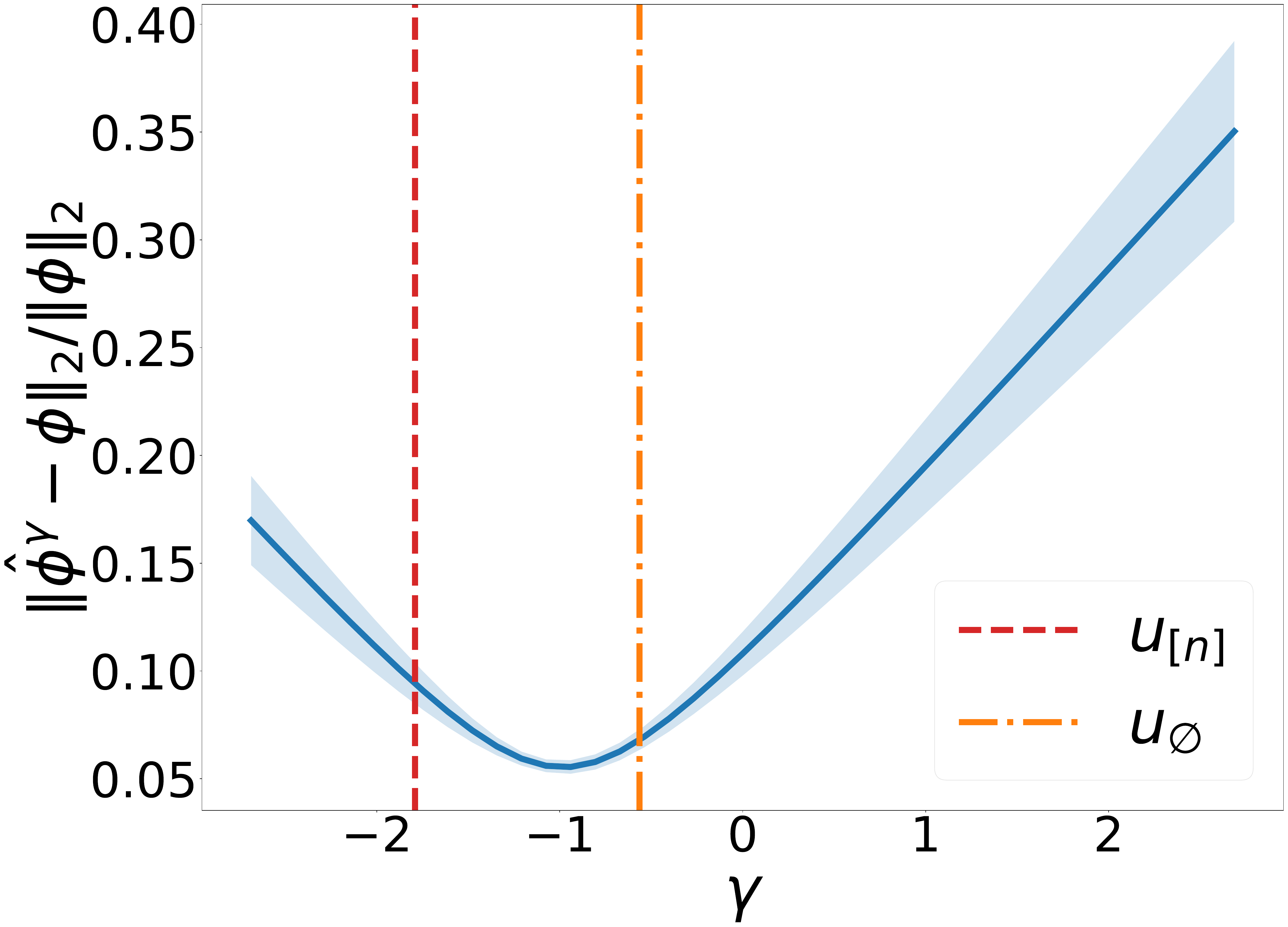} & \includegraphics[width=0.225\linewidth]{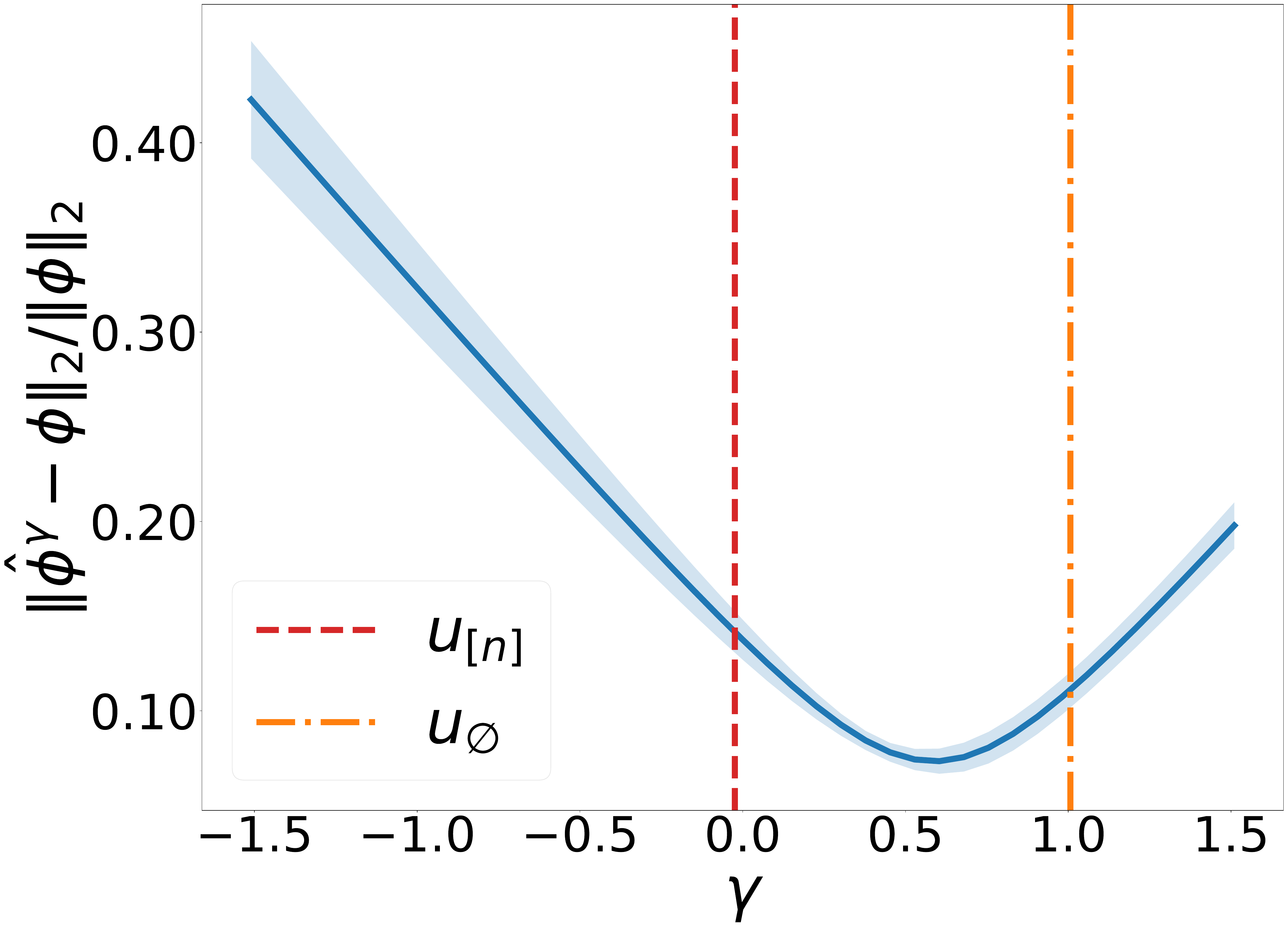} &
		\includegraphics[width=0.225\linewidth]{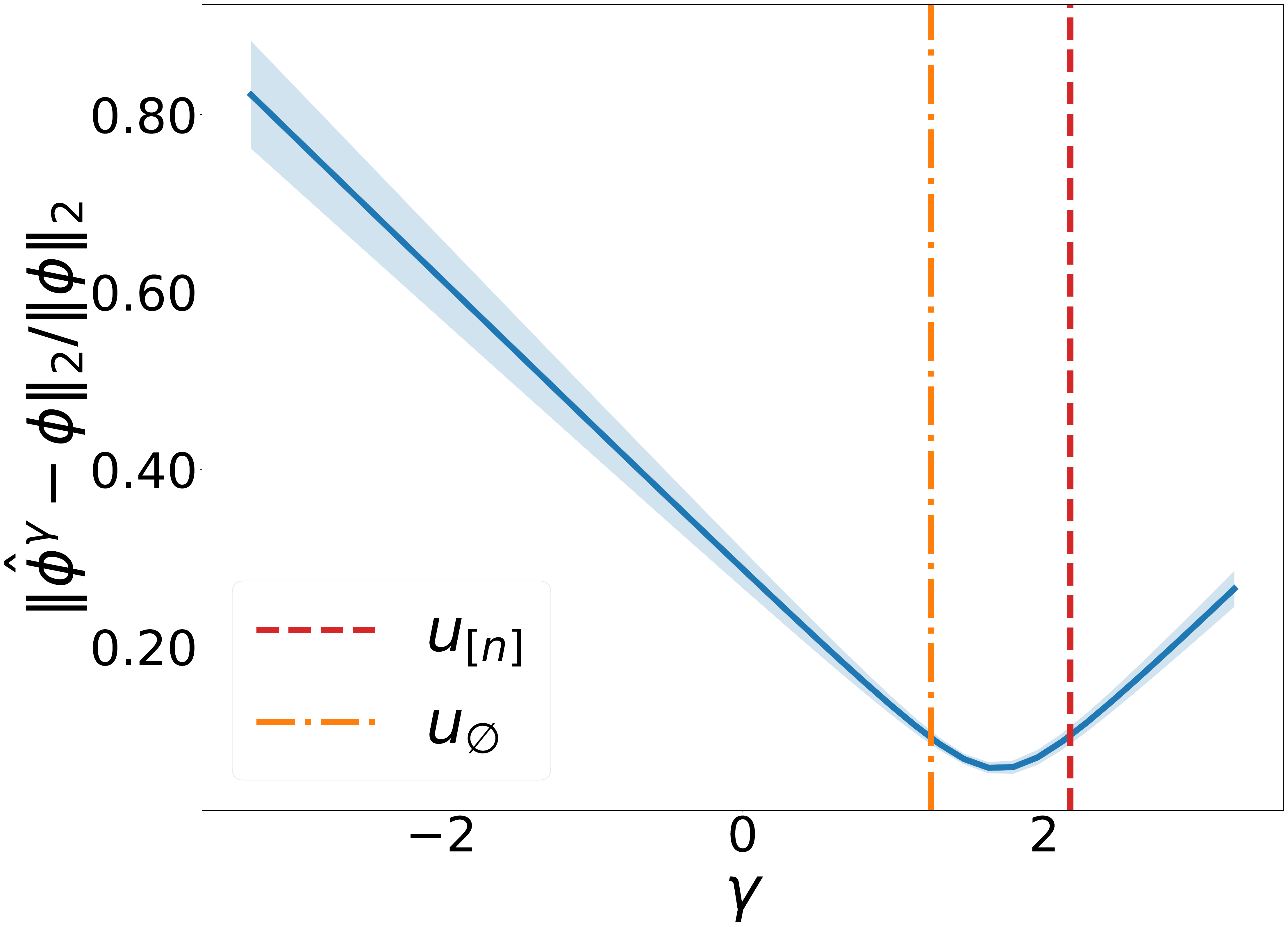}&
		\includegraphics[width=0.225\linewidth]{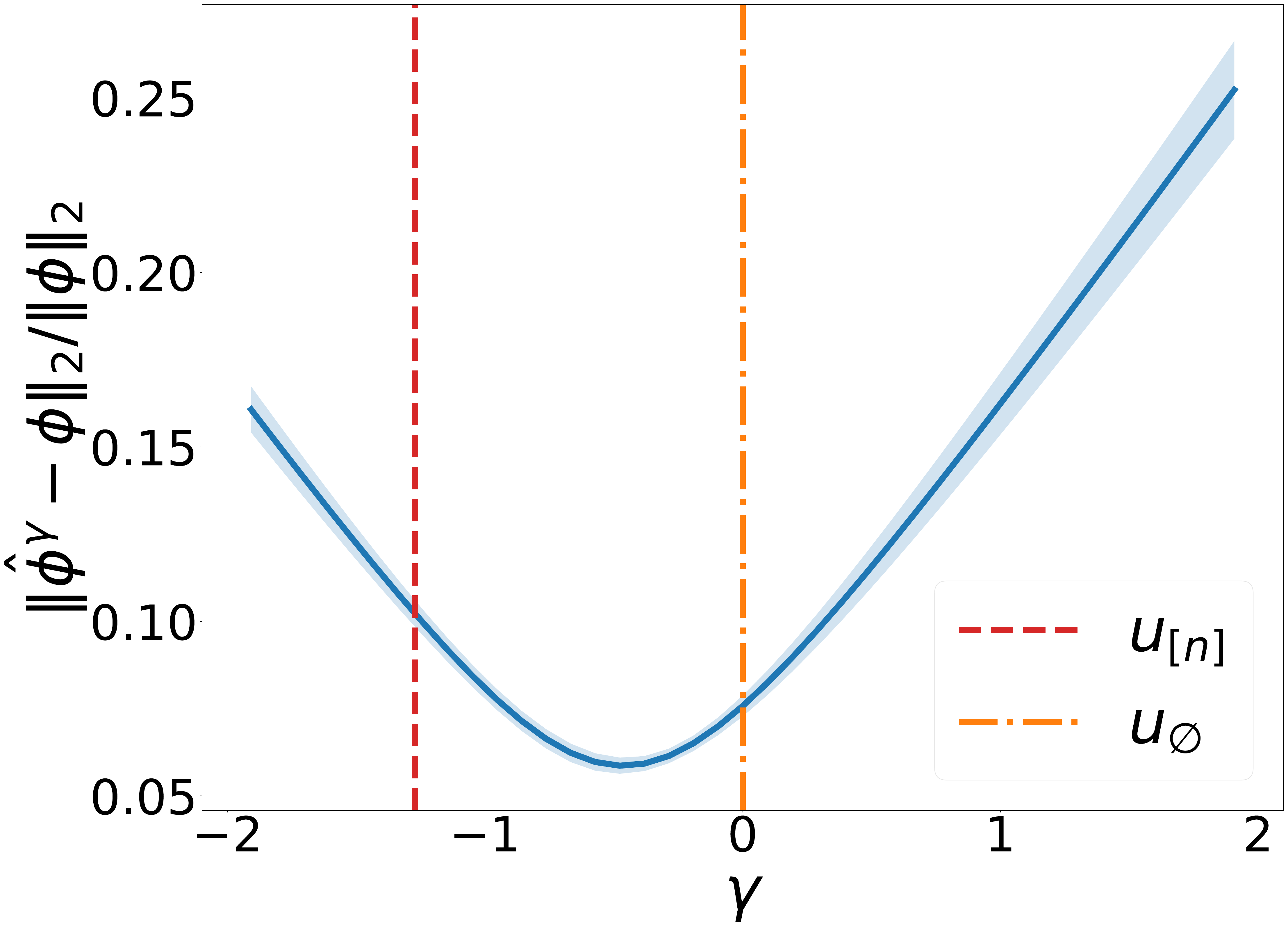}\\
		\phantom{aaaa}spambase ($n=57$) & \phantom{aaaa}FOTP ($n=51$) & \phantom{aaaa}MinibooNE ($n=50$) & \phantom{aaaa}philippine ($n=308$)
	\end{tabular}
	\caption{The relative approximation error of $\hat{\boldsymbol\phi}^{\gamma}$ in approximating the Shapley value as $\gamma$ varies. }
	\label{fig:choice of gamma}
\end{figure*}

\paragraph{Unbiased KernelSHAP.} 
For the Shapley value, $m_{s} = \frac{1}{n}$ for every $s$ and the optimal sampling distribution of our framework is $q_{s}\propto \sqrt{\frac{1}{s(n-s)}}$, as defined in Eq.~\eqref{eq:optimal q}.
Very recently, \citet{chen2025a} established a provable framework that unifies all kernelSHAP variants to approximate the Shapley value. In particular, their unified formula for unbiased kernelSHAP can be simplified as
\begin{equation}
	\begin{gathered}
		\hat{\boldsymbol\phi}^{\mathrm{kernel}} \coloneq \frac{1}{T}\sum_{t=1}^{T}(u_{S_{t}}-\lambda\cdot s_{t})\mathbf{z}_{S_{t}} + \frac{U([n])-U(\emptyset)}{n}\mathbf{1}_{n} .
	\end{gathered}
\end{equation}
Here, $\lambda \in \mathbb{R}$ is arbitrary.
Within our framework, the arbitrariness of $\lambda$ can be directly generalized.
\begin{restatable}{lemma}{generalizelambda} \label{lem:generalizelambda}
	For the Shapley value, let $V$ be any utility function such that $V(S) = f(s)$ for every $S$. Then $\mathbb{E}[v_{\mathbf{S}}\mathbf{z}_{\mathbf{S}}] = \mathbf{0}_{n}$, where $v_{S} = V(S)$.
\end{restatable}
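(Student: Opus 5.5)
We compute the expectation directly. By construction of the sampling scheme, $\mathbb{P}(\mathbf{S}=S)=q_s\binom{n}{s}^{-1}$ for every $\emptyset\subsetneq S\subsetneq[n]$, and $\mathbf{z}_S=\frac{\binom{n}{s}}{q_s}\mathbf{w}_S$. Hence
\begin{equation}
	\mathbb{E}[v_{\mathbf{S}}\mathbf{z}_{\mathbf{S}}]=\sum_{\emptyset\subsetneq S\subsetneq[n]} f(s)\,\mathbf{w}_S .
\end{equation}
Thus the sampling distribution $\mathbf{q}$ cancels entirely, and the claim is equivalent to $\boldsymbol\varphi(V)=\mathbf{0}_n$ for every cardinality-only utility $V$. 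We group the sum by cardinality $s\in[n-1]$ and compute the $i$-th coordinate.

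Fix $i$ and $s$. Among the $\binom{n}{s}$ subsets of size $s$, exactly $\binom{n-1}{s-1}$ contain $i$ and $\binom{n-1}{s}$ do not. Using $(\mathbf{w}_S)_i=p_s\pred{i\in S}-p_{s+1}\pred{i\notin S}$ and $m_s=\binom{n-1}{s-1}p_s$, the size-$s$ contribution is
\begin{equation}
	f(s)\left[\binom{n-1}{s-1}p_s-\binom{n-1}{s}p_{s+1}\right]=f(s)\,(m_s-m_{s+1}).
\end{equation}
For the Shapley value $m_s=\frac1n$ for all $s$, so every bracket vanishes and each coordinate is $0$. This is independent of $i$ and of $f$, which proves the lemma.

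There is essentially no obstacle; the only point requiring care is the counting of subsets containing or excluding $i$. One could also argue more conceptually. Adding back the boundary term gives $\boldsymbol\phi(V)=\boldsymbol\varphi(V)+(f(n)-f(0))\frac{1}{n}\mathbf{1}_n$. By symmetry and efficiency of the Shapley value, $\boldsymbol\phi(V)=\frac{f(n)-f(0)}{n}\mathbf{1}_n$, so again $\boldsymbol\varphi(V)=\mathbf{0}$. I would present the direct computation and remark on this interpretation.
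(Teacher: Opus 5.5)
Your proof is correct and follows essentially the same route as the paper: both reduce $\mathbb{E}[v_{\mathbf{S}}\mathbf{z}_{\mathbf{S}}]$ to $\sum_{S} f(s)\mathbf{w}_S$ (the paper writes it as $\mathbf{W}^{\mathsf{T}}\mathbf{v}$). Both then count, for each size $s$, the $\binom{n-1}{s-1}$ subsets containing $i$ against the $\binom{n-1}{s}$ excluding it, so that the Shapley weights cancel size by size. Your intermediate form $f(s)(m_s-m_{s+1})$ is a slightly cleaner packaging of that computation. It also shows that cancellation for every $f$ requires $m_s$ to be constant, i.e., $m_s=\frac1n$, which is exactly the Shapley value.
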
	 
As a result, $\lambda\cdot s_{t}$ in unbiased kernelSHAP can be replaced by $f(s_{t})$.

For the vanilla unbiased KernelSHAP \cite{covert2021improving}, the sampling distribution satisfies $q_s \propto \frac{1}{s(n-s)}$ with $\lambda = 0$, whereas the leverage score sampling uses $q_s = \frac{1}{n-1}$ and $\lambda = \frac{U([n]) - U(\emptyset)}{n}$ \cite{musco2025provably,chen2025a}. Subsequently, \citet{chen2025a} proposed a  modified variant by setting $\mathbf{q}$ to the geometric mean of these two distributions, which coincides with $\mathbf{q}^{*}$. Therefore, by Theorem~\ref{thm:concentration}, the \qc of the modified unbiased kernelSHAP is already $O(\frac{n}{\epsilon^{2}}\log\frac{1}{\delta})$. By contrast, the other two methods incur an additional multiplicative factor of $\log n$. 

\begin{restatable}{corollary}{kernelSHAPcomplexity} \label{cor:kernel complexity}
	To ensure $\mathbb{P}(\|\hat{\boldsymbol\phi} - \boldsymbol\phi\|_{2}\geq \epsilon)\leq \delta$, the unbiased kernelSHAP using leverage score sampling requires at most $\frac{72C^{2}n\log n }{\epsilon^{2}}\log\frac{2}{\delta}$ utility queries for $\epsilon \leq 4C\log n$, whereas the vanilla unbiased kernelSHAP requires $\frac{8C^{2}n\log n}{\epsilon^{2}}\log\frac{2}{\delta}$ for $\epsilon \leq Cn^{\frac{1}{2}}$. In other words, their \qcs are both $O(\frac{n\log n}{\epsilon^{2}}\log\frac{1}{\delta})$.
\end{restatable}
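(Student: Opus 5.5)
The plan is to apply Theorem~\ref{thm:concentration} directly to the i.i.d.\ summands $\mathbf{X}_t \coloneq (u_{S_t}-\lambda s_t)\mathbf{z}_{S_t} - \boldsymbol\varphi$, exactly as in the AME illustration. By Lemma~\ref{lem:generalizelambda} (with $f(s)=\lambda s$), $\mathbb{E}[\lambda s_{\mathbf{S}}\mathbf{z}_{\mathbf{S}}]=\mathbf{0}_n$, and $\mathbb{E}[u_{\mathbf{S}}\mathbf{z}_{\mathbf{S}}]=\boldsymbol\varphi$. Hence the $\mathbf{X}_t$ are zero-mean. Also $\hat{\boldsymbol\phi}^{\mathrm{kernel}}-\boldsymbol\phi=\frac1T\sum_t\mathbf{X}_t$, using $m_n=m_1=\frac1n$ for the Shapley value.

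It therefore suffices to produce, for each sampling scheme, an almost-sure bound $c$ on $\|(u_S-\lambda s)\mathbf{z}_S\|_2$ and a bound $\sigma^2$ on $\mathbb{E}\|(u_{\mathbf{S}}-\lambda s)\mathbf{z}_{\mathbf{S}}\|_2^2$. We then use $\|\mathbf{X}_t\|_2\le 2c$ (since $\|\boldsymbol\varphi\|_2\le\mathbb{E}\|\cdot\|_2\le c$) and $\mathbb{E}\|\mathbf{X}_t\|_2^2\le\sigma^2$. Setting $2\exp(-T\epsilon^2/(4\sigma^2))=\delta$ gives $T=\frac{4\sigma^2}{\epsilon^2}\log\frac2\delta$, valid for $\epsilon\le 3\sigma^2/(2c)$.

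The key computation uses the Shapley value's $m_s=\frac1n$:
\begin{equation}
\|\mathbf{z}_S\|_2^2=\frac{1}{q_s^2}\Bigl(\frac1s+\frac1{n-s}\Bigr)=\frac{n}{q_s^2\,s(n-s)} ,
\end{equation}
together with
\begin{equation}
\sum_{s=1}^{n-1}\frac{1}{s(n-s)}=\frac{2H_{n-1}}{n}\le\frac{2\log n}{n}\cdot(1+o(1)) .
\end{equation}
Here I will use a clean harmonic bound such as $H_{n-1}\le 1+\log(n-1)$, absorbing constants for $n$ beyond a small threshold. The bookkeeping of these constants is the only delicate point.

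\emph{Vanilla case} ($\lambda=0$, $q_s=\frac{1}{s(n-s)}\big/\frac{2H_{n-1}}{n}$). We get $\|\mathbf{z}_S\|_2^2=n\,s(n-s)\bigl(\frac{2H_{n-1}}{n}\bigr)^2\le nH_{n-1}^2$, the maximum being at $s=n/2$, so $c\approx C\sqrt n\log n$. Moreover
\begin{equation}
\mathbb{E}\|u_{\mathbf{S}}\mathbf{z}_{\mathbf{S}}\|_2^2\le C^2\sum_s\frac{n}{q_s s(n-s)}=C^2(n-1)\cdot 2H_{n-1}\lesssim 2C^2n\log n=\sigma^2 .
\end{equation}
This yields $T=\frac{8C^2n\log n}{\epsilon^2}\log\frac2\delta$. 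The range condition is $3\sigma^2/(2c)\approx 3C\sqrt n$, which covers $\epsilon\le C\sqrt n$.

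\emph{Leverage case} ($q_s=\frac1{n-1}$, $\lambda=\frac{u_{[n]}-u_\emptyset}{n}$). Since $|\lambda|\le 2C/n$ and $s\le n$, the scalar coefficient satisfies $|u_S-\lambda s|\le 3C$. Then $\|\mathbf{z}_S\|_2^2=\frac{(n-1)^2n}{s(n-s)}\le n(n-1)$, the maximum being at $s=1$, so $c\le 3Cn$. For the second moment,
\begin{equation}
\mathbb{E}\|\cdot\|_2^2\le 9C^2(n-1)n\sum_s\frac{1}{s(n-s)}=18C^2(n-1)H_{n-1}\lesssim 18C^2n\log n=\sigma^2 .
\end{equation}
This gives $T=\frac{72C^2n\log n}{\epsilon^2}\log\frac2\delta$, valid for $\epsilon\le 3\sigma^2/(2c)\approx 9C\log n$, which covers the stated range $\epsilon\le4C\log n$.

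The main obstacle is making the harmonic-number estimates and the ratio $3\sigma^2/(2c)$ rigorous with the exact stated constants ($8$, $72$, $4C\log n$, $Cn^{1/2}$) rather than only asymptotically. I would handle this by keeping $H_{n-1}$ explicit throughout and only at the end invoking $H_{n-1}\le\log n+1$ plus a mild lower-order slack, possibly restricting to $n\ge 3$. Both bounds are visibly $O(\frac{n\log n}{\epsilon^2}\log\frac1\delta)$.
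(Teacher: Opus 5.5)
Your proposal is correct and is essentially the paper's proof: centered summands fed into Theorem~\ref{thm:concentration}, the bound $|u_S-\lambda s|\le 3C$ in the leverage case, and the same computations of $\sup_S\|\mathbf{z}_S\|_2$ and $\mathbb{E}\|\mathbf{z}_{\mathbf{S}}\|_2^2=2(n-1)H_{n-1}$ for both sampling schemes. On the constants you flag, the paper simply uses $H_{n-1}\le\log n$. This holds for every $n\ge 2$ when $\log=\log_2$, and then gives exactly $8$ and $72$ with room to spare in the $\epsilon$-ranges; with the natural logarithm, however, $(n-1)H_{n-1}-n\ln n$ grows linearly in $n$, so no ``lower-order slack'' recovers those exact constants (they become $8$ and $72$ times $1+O(1/\log n)$), though the $O(\frac{n\log n}{\epsilon^{2}}\log\frac{1}{\delta})$ conclusion is unaffected.
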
	
This suggests that the modified unbiased KernelSHAP is superior in terms of \qc.

\textit{Remark.} We notice that \citet[Proposition E.1]{chen2025a} constructed a specific utility function to show that the modified unbiased kernelSHAP could behave worse by a multiplicative factor of $\sqrt{n}$ compared to the variant using leverage score sampling. This does not contradict our Corollary~\ref{cor:kernel complexity}, since their constructed $U$ satisfies $\|U\|_{\infty} \in \Theta(n)$, whereas our query complexity analysis assumes $\|U\|_\infty \leq C$.

\paragraph{SHAP-IQ.} As a weighted extension of kernelSHAP for approximating semi-values \cite{fumagalli2024shap}, the estimate of SHAP-IQ can be rewritten as:
\begin{equation}
	\begin{gathered}
		\hat{\boldsymbol\phi}^{\mathrm{IQ}} \coloneq \frac{1}{T}\sum_{t=1}^{T} (u_{S_{t}}-u_{\emptyset})\mathbf{z}_{S_{t}} + m_{n}\cdot(u_{[n]}-u_{\emptyset})
	\end{gathered}
\end{equation}
with the sampling distribution $q_{s} \propto \frac{1}{s(n-s)}$. It also fits into our framework, by simply translating $\{u_{S}\}_{S}$ to $\{u_{S}-u_{\emptyset}\}_{S}$. Therefore, as indicated by Eq.~\eqref{eq:inequality of D(q)}, the choice of $\mathbf{q}$ in SHAP-IQ does not minimize the \qc. As shown in Figure~\ref{fig:adaptive}, such non-optimality does affect the approximation performance.

\paragraph{Regression-adjusted approach.} Recently, \citet{witter2025regressionadjusted} proposed learning a utility function $V$, whose semi-values can be computed in polynomial time, by minimizing $\mathbb{E}[\|\hat{\boldsymbol\phi}^{\mathrm{MSR}}(U-V)\|_{2}^{2}]$. The regression-adjusted estimate is then computed as
\begin{equation}
	\begin{gathered}
		\hat{\boldsymbol\phi}^{\mathrm{adjusted}}(U) \coloneq \hat{\boldsymbol\phi}^{\mathrm{MSR}}(U-V) + \boldsymbol\phi(V).
	\end{gathered}
\end{equation}
Clearly, this approach aims to minimize the \mse.
Adopting the maximum-sample-reuse (MSR) perspective with $\overline{\mathbf{q}} \in \mathbb{R}^{n+1}$ unspecified, \citet{witter2025regressionadjusted} derived an exact expression for $\mathbb{E}[\|\hat{\boldsymbol\phi}(U-V)\|_{2}^{2}]$. They then heuristically choose $\overline{\mathbf{q}}$ so that no reweighting scheme is required to approximate this quantity. Notably, this choice coincides with the optimal solution for minimizing the \qc. Specifically, the choice of their $\overline{\mathbf{q}}^{MSR} \in \mathbb{R}^{n+1}$ can be simplified as
\begin{equation} 
	\begin{gathered}
		\overline{q}_{s}^{\mathrm{MSR}} \propto \sqrt{\frac{m_{s}^{2}}{s} + \frac{m_{s+1}^{2}}{n-s}} \ \text{ for every } 0\leq s \leq n,
	\end{gathered}
\end{equation}
where the convention here is $\frac{x}{0} \coloneq 0$. The difference is that they include $[n]$ and $\emptyset$ in the sampling pool, whereas we exclude them. In particular, Theorem~\ref{thm:complexity of our framework} continues to hold when using $\overline{\mathbf{q}}^{\mathrm{MSR}}$ (see Appendix~\ref{app:bridges}), indicating that a linear-time, linear-space randomized algorithm already exists for approximating semi-values.

\begin{figure*}[t]
	\centering
	\begin{tabular}{cccc}
		\includegraphics[width=0.225\linewidth]{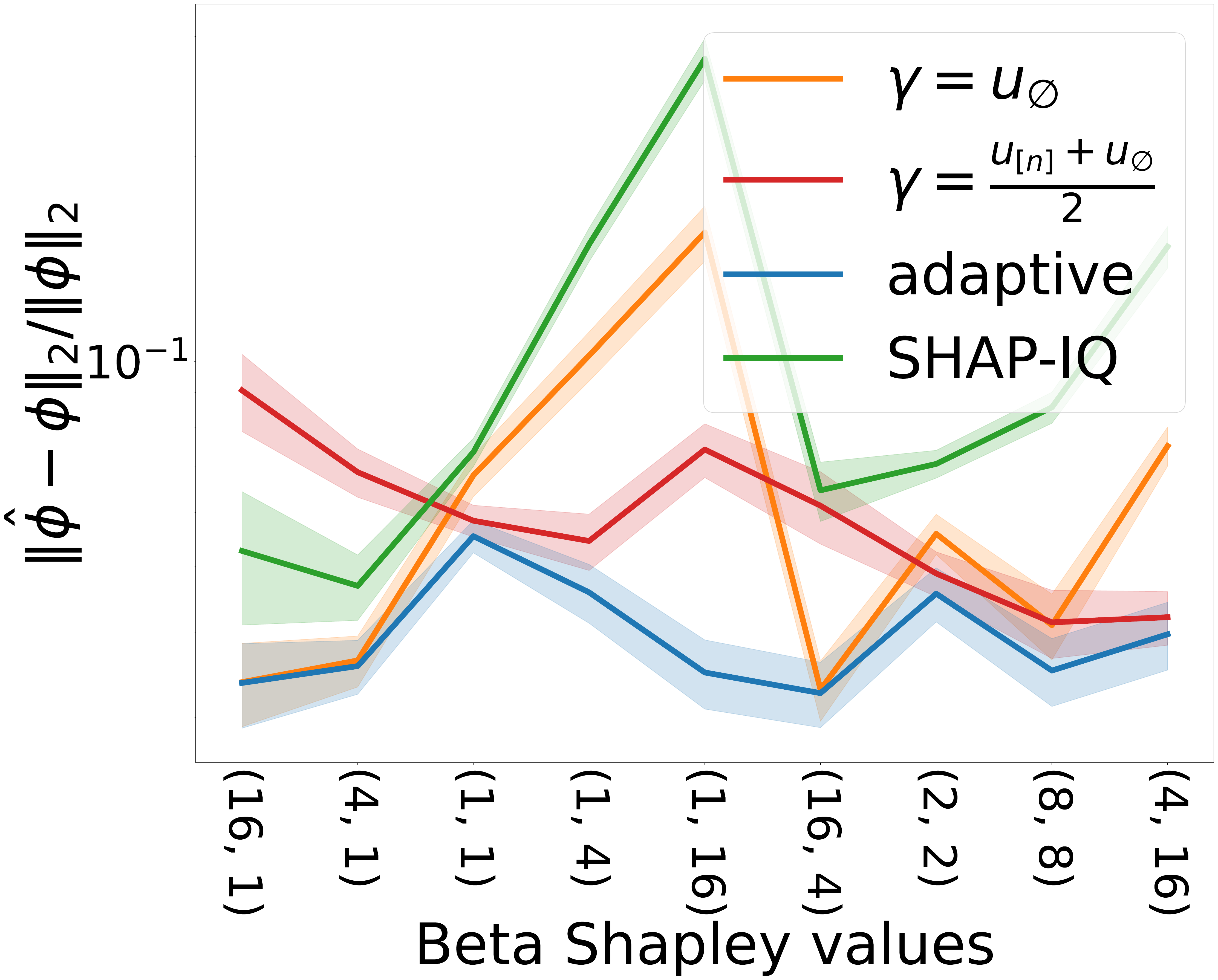} & \includegraphics[width=0.225\linewidth]{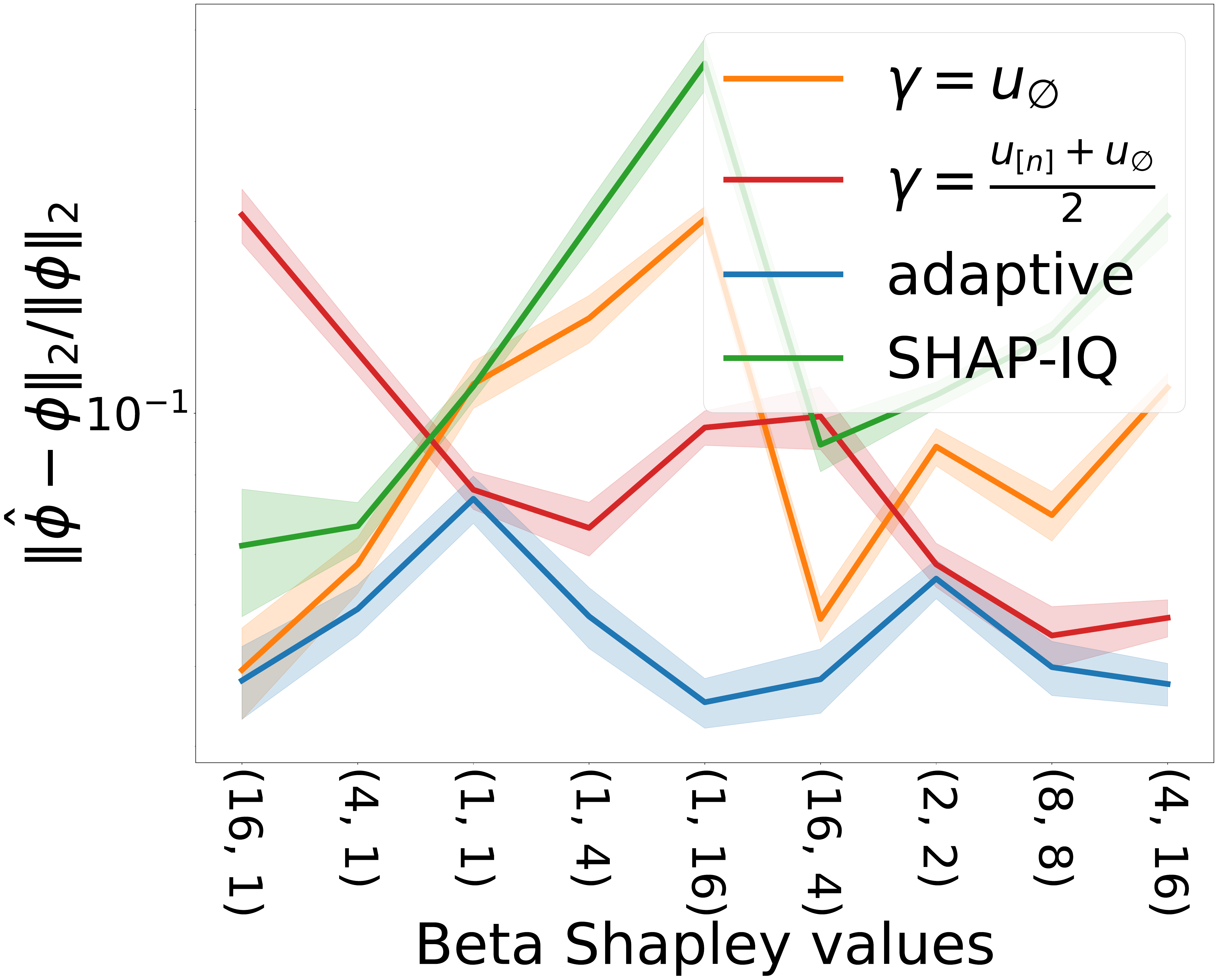} &
		\includegraphics[width=0.225\linewidth]{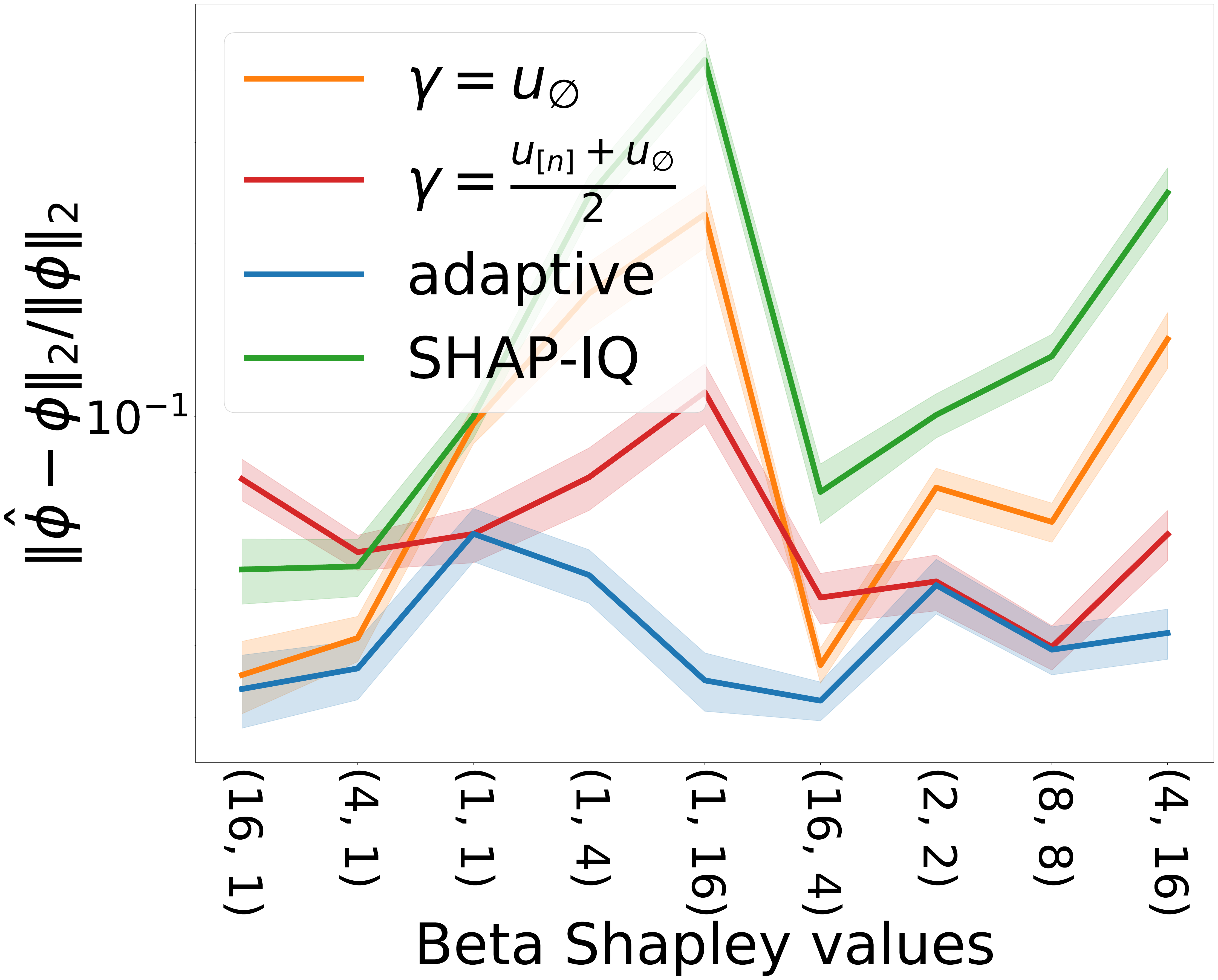}&
		\includegraphics[width=0.225\linewidth]{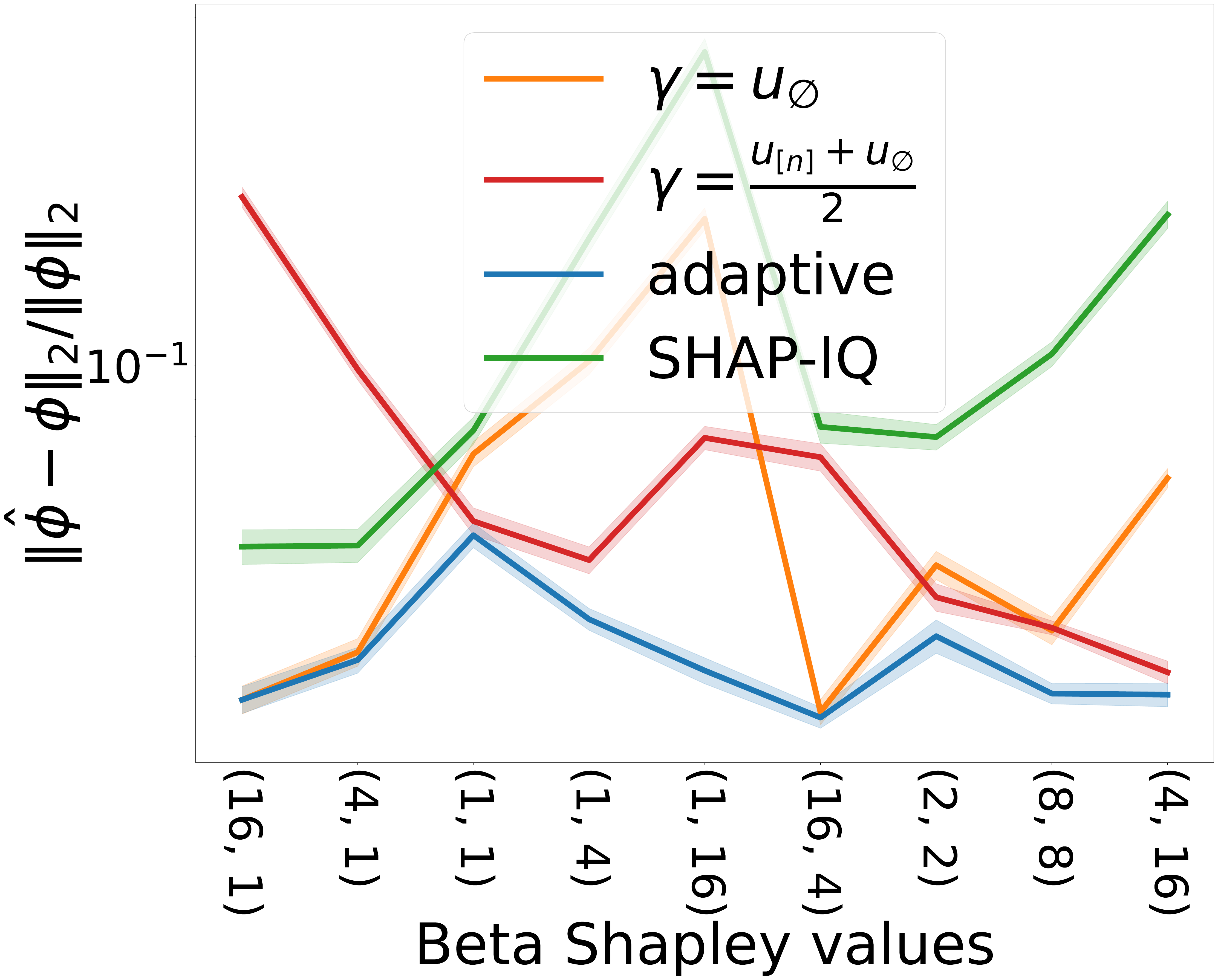}\\
		\includegraphics[width=0.225\linewidth]{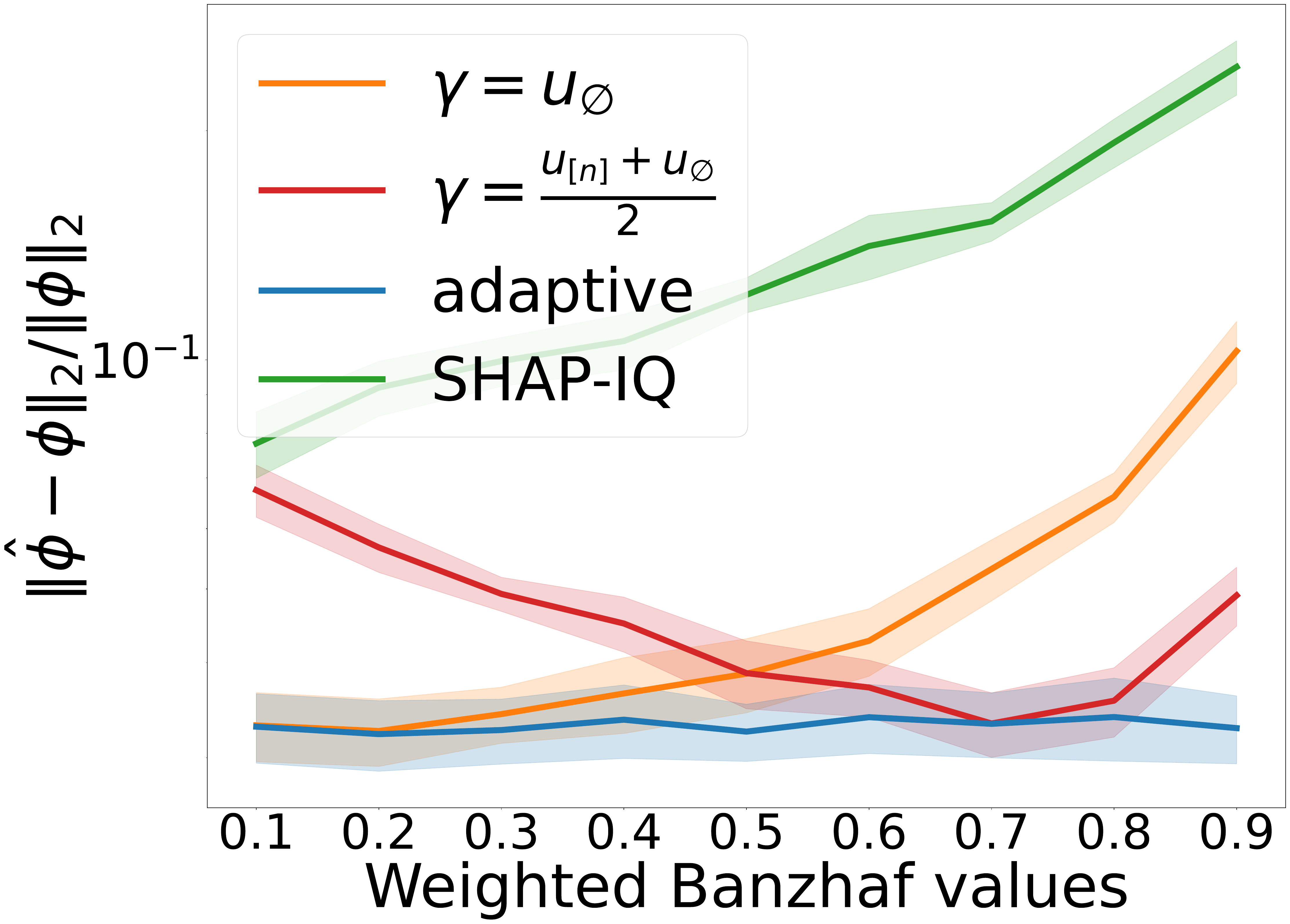} & \includegraphics[width=0.225\linewidth]{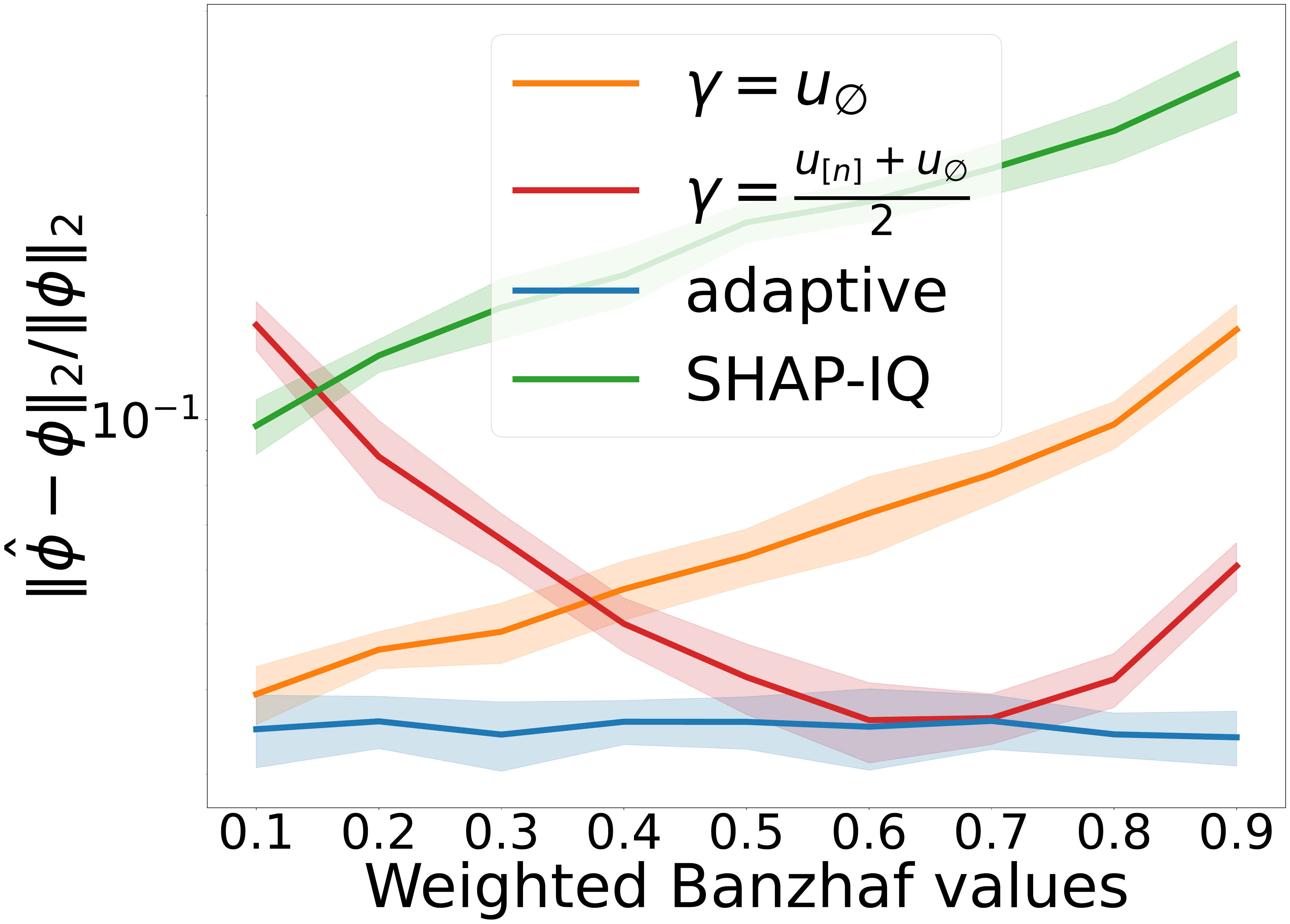} &
		\includegraphics[width=0.225\linewidth]{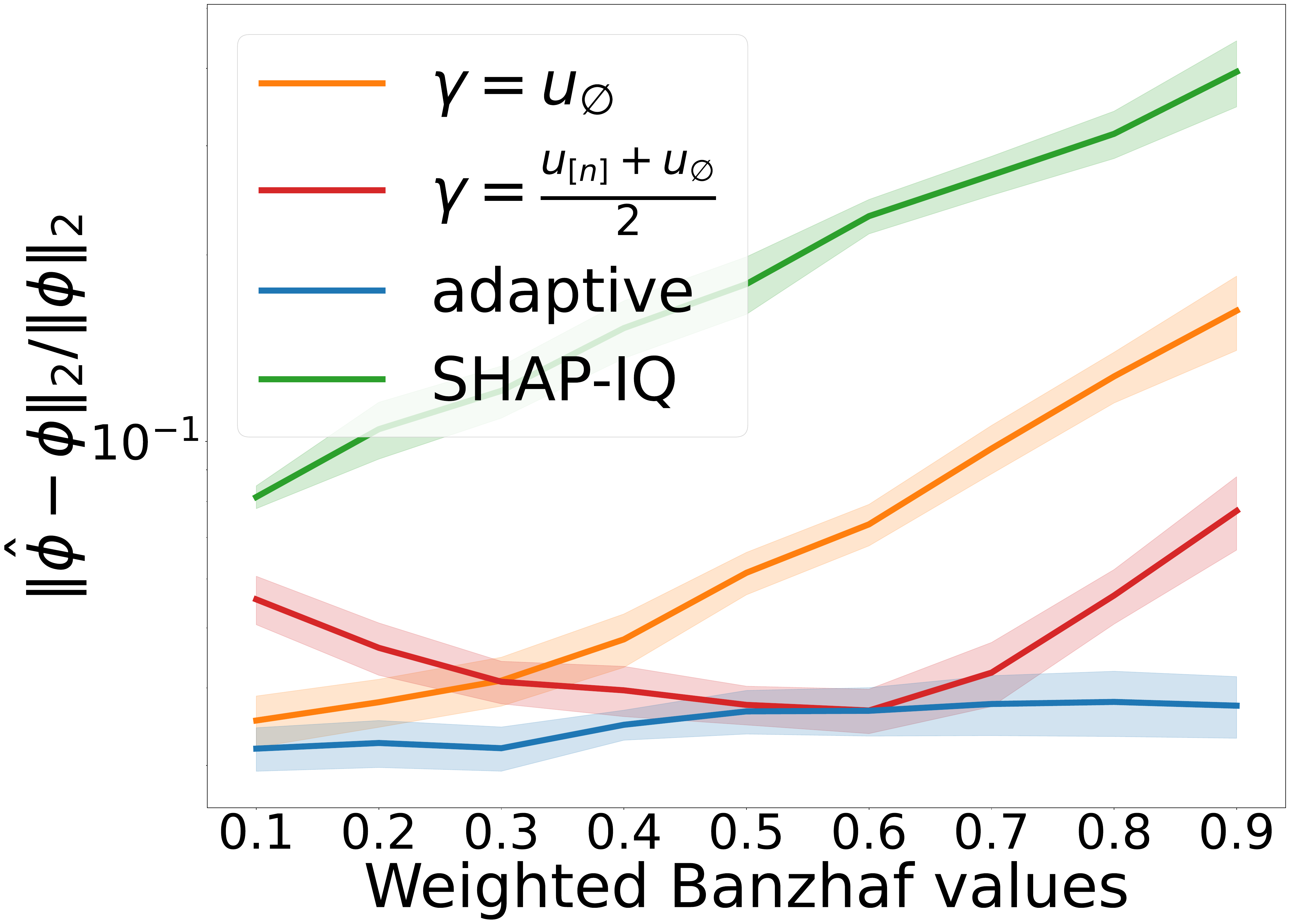}&
		\includegraphics[width=0.225\linewidth]{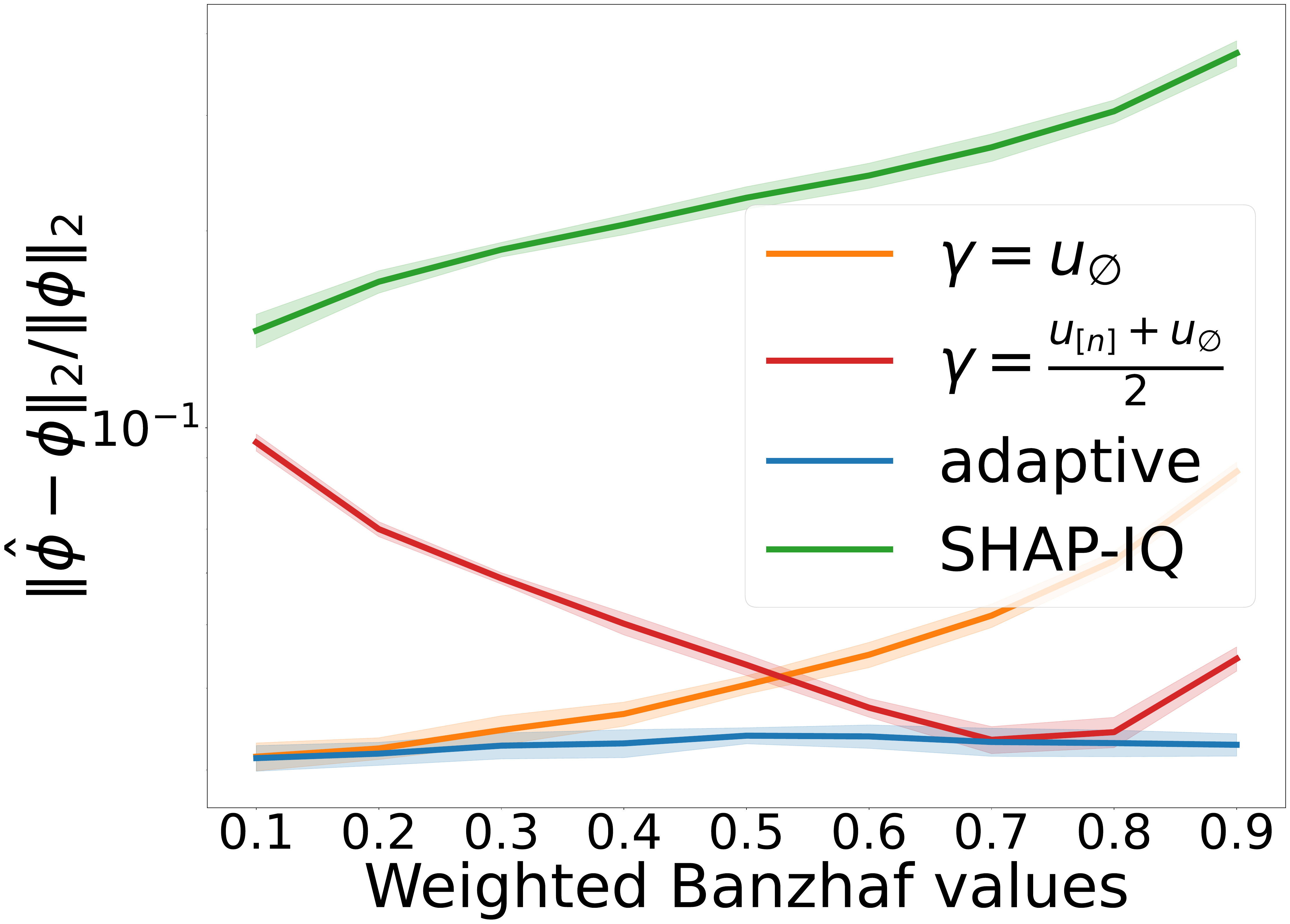}\\
		\phantom{aaaa}spambase ($n=57$) & \phantom{aaaa}FOTP ($n=51$) & \phantom{aaaa}MinibooNE ($n=50$) & \phantom{aaaa}philippine ($n=308$)
	\end{tabular}
	\caption{The relative approximation error of SHAP-IQ, $\hat{\boldsymbol\phi}^{\gamma}$ and its adaptive variant. The adaptive method corresponds to Adalina, presented in Algorithm~\ref{alg:linearappr}, which adaptively estimates the optimal $\gamma$. Weighted Banzhaf values are parameterized by $w\in (0,1)$, whereas Beta Shapley values are parameterized by $(\alpha, \beta)$, with $(1,1)$ corresponding to the Shapley value.}
	\label{fig:adaptive}
\end{figure*}
	
		\section{Our Adaptive Randomized Algorithms for Approximating Semi-Values}
In this section, by applying the well-known control variates technique, we show that there is still room for improving the existing linear-time, linear-space randomized algorithms through minimizing the \mse. From now on, we assume $\mathbf{q}^{*}$ is employed, as it optimizes the \qc.

\begin{algorithm}[t]
	\DontPrintSemicolon
	\caption{Adalina (\textbf{Ada}ptive \textbf{Lin}ear \textbf{A}pproximation)}
	\label{alg:linearappr}
	\KwIn{Weight vector $\mathbf{m} \in \mathbb{R}^{n}$ for semi-value $\boldsymbol\phi$, total number of samples $T$}
	\KwOut{Estimate $\hat{\boldsymbol\phi}^{\mathrm{Adalina}}$}
	
	Compute the sampling distribution $\mathbf{q} \in \mathbb{R}^{n-1}$ such that $q_{s} \propto \sqrt{\frac{m_{s}^{2}}{s}+\frac{m_{s+1}^{2}}{n-s}}$
	
	Initialize $\hat{\boldsymbol\varphi}, \hat{\mathbf{v}} \gets \mathbf{0}_{n},\ \hat{\gamma} \gets 0$
	
	\For{$t = 1, 2,\dots, T$}{
		Sample a subset size $s$ with probability $q_{s}$
		
		Sample a subset $S$ of size $s$ uniformly from $2^{[n]}$
		
		$\hat{\boldsymbol\varphi}\gets (1-\frac{1}{t})\cdot\hat{\boldsymbol\varphi} + \frac{1}{t}\cdot u_{S}\mathbf{z}_{S}$
		
		$\hat{\mathbf{v}} \gets (1-\frac{1}{t})\cdot\hat{\mathbf{v}} + \frac{1}{t}\cdot \mathbf{z}_{S}$
		
		$\hat{\gamma} \gets (1-\frac{1}{t}) \cdot\hat{\gamma} + \frac{1}{t}\cdot u_{S}$
	}
	$\hat{\boldsymbol\phi}^{\mathrm{Adalina}} \gets \hat{\boldsymbol\varphi} - \hat{\gamma}\hat{\mathbf{v}} + m_{n}(u_{[n]}-\hat{\gamma}) - m_{1}(u_{\emptyset}-\hat{\gamma})$ 
\end{algorithm}

Since $\boldsymbol{\phi}(U) = \mathbf{0}_{n}$ if $U$ is constant, we immediately have the following unbiased estimate for $\boldsymbol\phi$,
\begin{equation}
	\begin{gathered}
		\hat{\boldsymbol\phi}^{\gamma} \coloneq \frac1T\sum_{t=1}^{T} (u_{S_{t}}-\gamma)\mathbf{z}_{S_{t}} + \mathbf{b}
	\end{gathered}
\end{equation}
where $\mathbf{b} \coloneq [m_{n}(u_{[n]}-\gamma) - m_{1}(u_{\emptyset}-\gamma)]\mathbf{1}_{n}$.
Observe that this reduces to SHAP-IQ when $q_{s} \propto \sfrac{1}{s(n-s)}$ and $\gamma = u_{\emptyset}$.
If $m_{1} = m_{n}$, which is satisfied by symmetric semi-values, then by Theorem~\ref{thm:complexity of our framework}, 
\begin{equation}
	\begin{gathered}
		\mathbb{E}[\|\hat{\boldsymbol\phi}^{\gamma} - \boldsymbol\phi\|_{2}^{2}] = \frac{nD^{*}\mathbb{E}[(u_{\mathbf{S}}-\gamma)^{2}] - \|\boldsymbol\varphi\|_{2}^{2}}{T}.
	\end{gathered}
\end{equation}
This indicates that $\gamma$ adjusts the \mse. Empirically, this is confirmed in Figure~\ref{fig:choice of gamma}. In particular, the shapes of the curves align well with $\mathbb{E}[(u_{S}-\gamma)^{2}]$, indicating that there exists a unique optimal $\gamma^{*}$ that minimizes it. Theoretically, $\gamma^{*} = \mathbb{E}[u_{S}]$. 
By Theorem~\ref{thm:complexity of our framework}, the distribution underlying $\mathbb{E}[u_{S}^{2}]$ is the same as that used to approximate $\boldsymbol\phi$, which means we can approximate $\gamma^{*}$ and $\boldsymbol\phi$ simultaneously. This leads to our adaptive randomized algorithm, Adalina, for approximating semi-values:
\begin{equation}
	\begin{gathered}
		\hat{\boldsymbol\phi}^{\mathrm{Adalina}} \coloneq \frac1T \sum_{t=1}^{T} (u_{S_{t}}-\hat{\gamma})\mathbf{z}_{S_{t}} + \hat{\mathbf{b}},
	\end{gathered}
\end{equation}
where $\hat{\mathbf{b}}\coloneq [m_{n}(u_{[n]}-\hat{\gamma}) - m_{1}(u_{\emptyset}-\hat{\gamma})] \mathbf{1}_{n}$ and $\hat{\gamma} \coloneq \frac{1}{T}\sum_{t=1}^{T}u_{S_t} $.
Its procedure is summarized in Algorithm~\ref{alg:linearappr}. Note that Adalina consumes $\Theta(n)$ space. In particular, it comes with the following improved \mse.

\begin{figure*}[t]
	\centering
	\begin{tabular}{ccc}
		\multicolumn{3}{c}{\includegraphics[width=0.7\linewidth]{legend.pdf}} \\
		\includegraphics[width=0.3\linewidth]{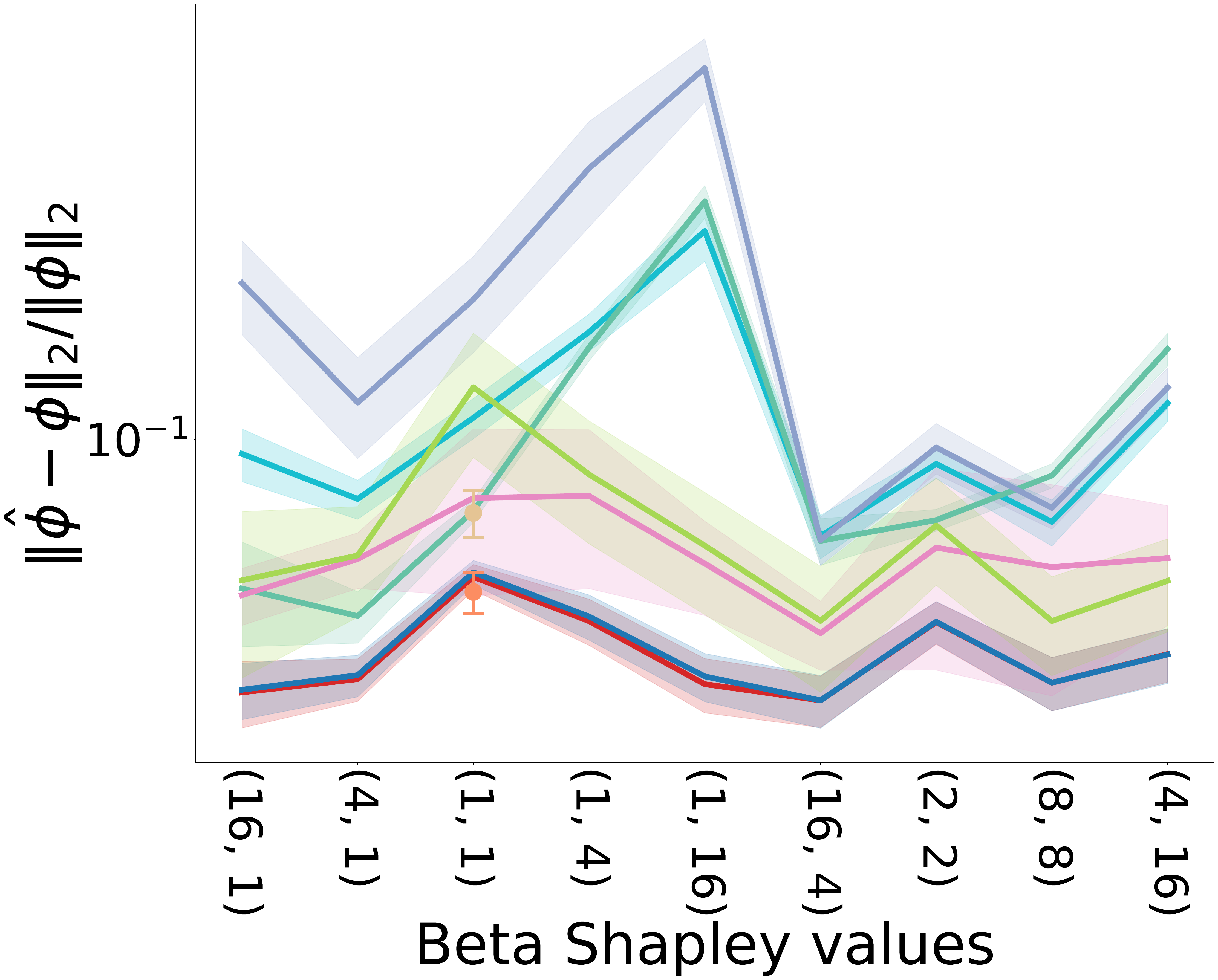} & \includegraphics[width=0.3\linewidth]{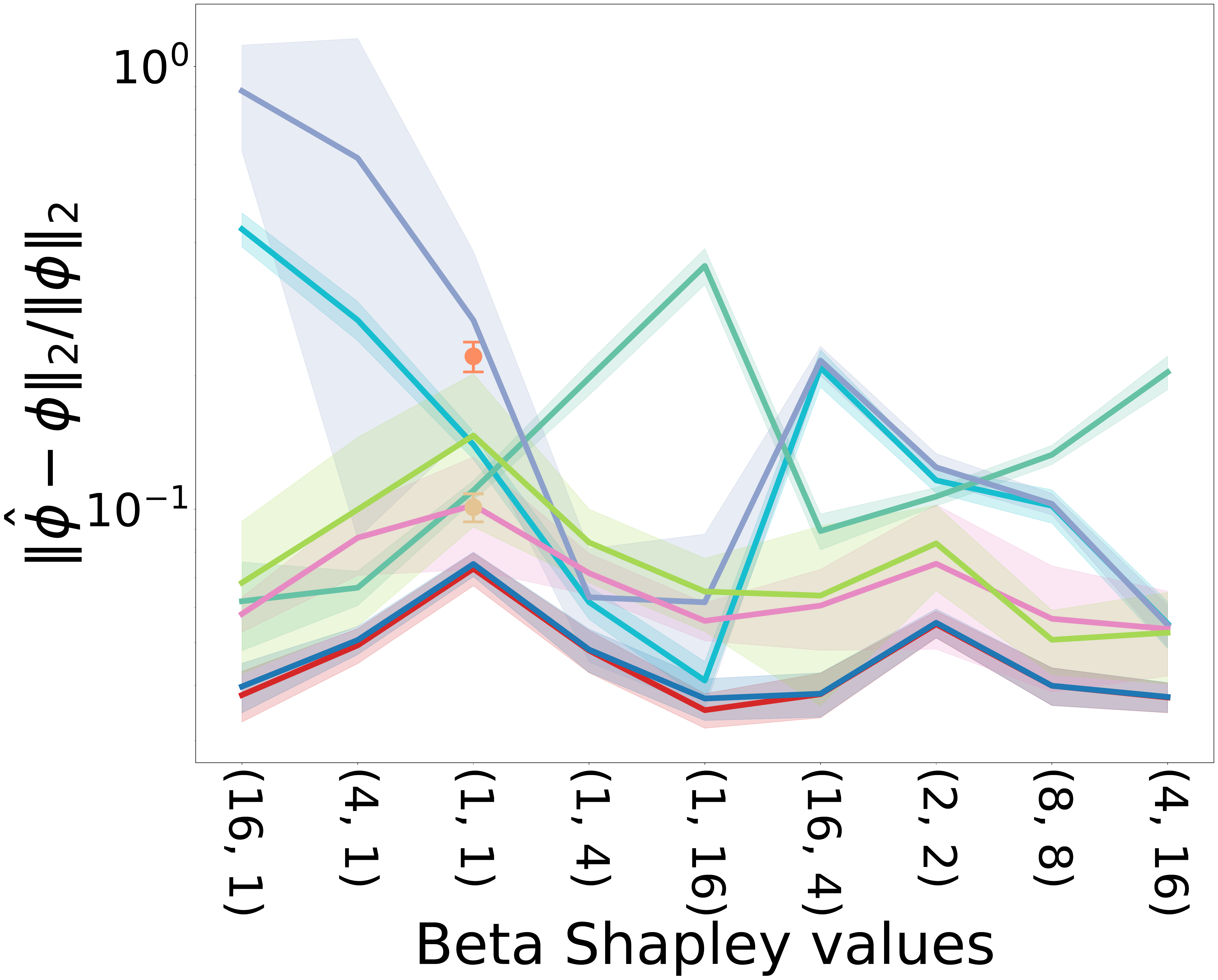} &
		\includegraphics[width=0.3\linewidth]{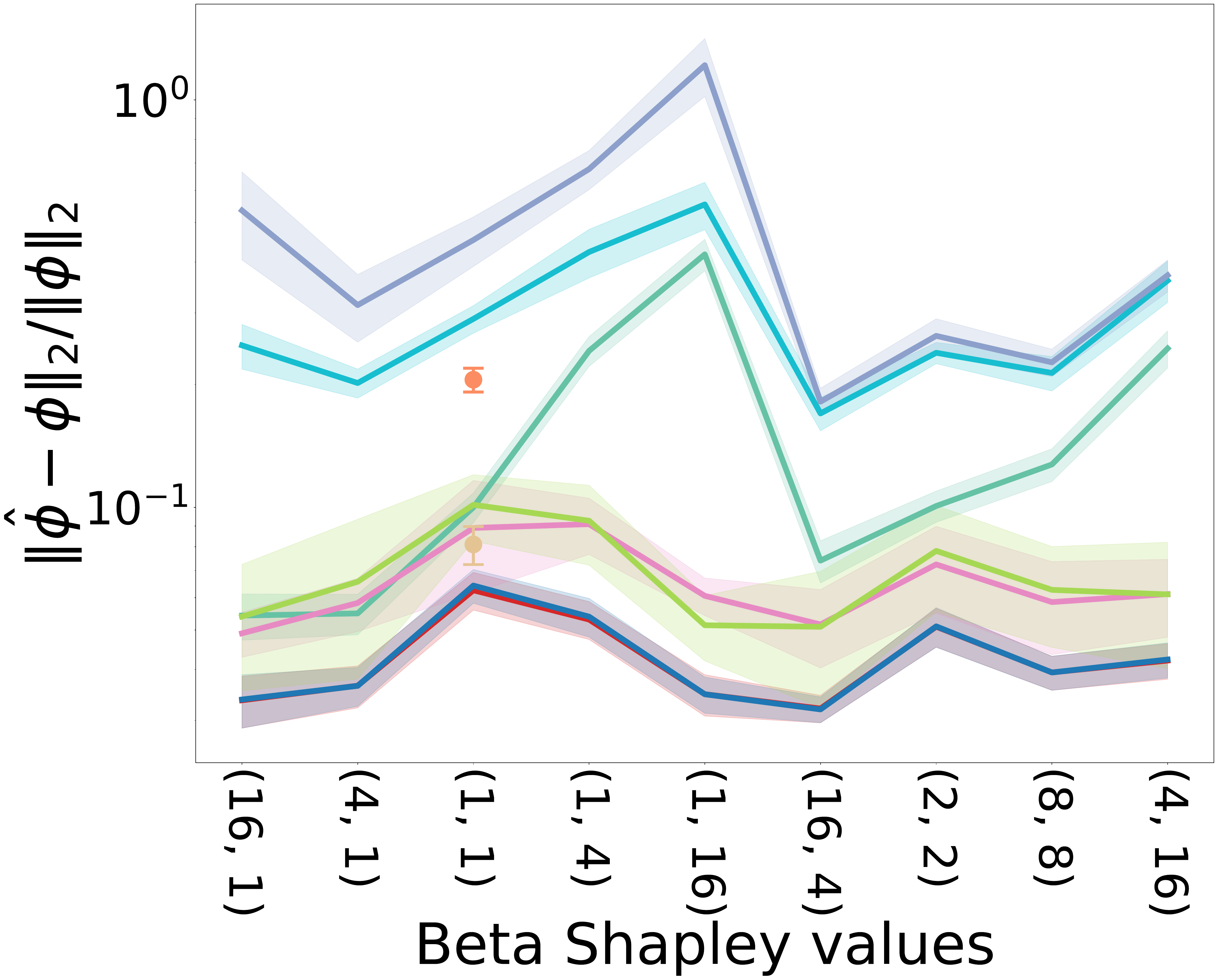} \\
		\includegraphics[width=0.3\linewidth]{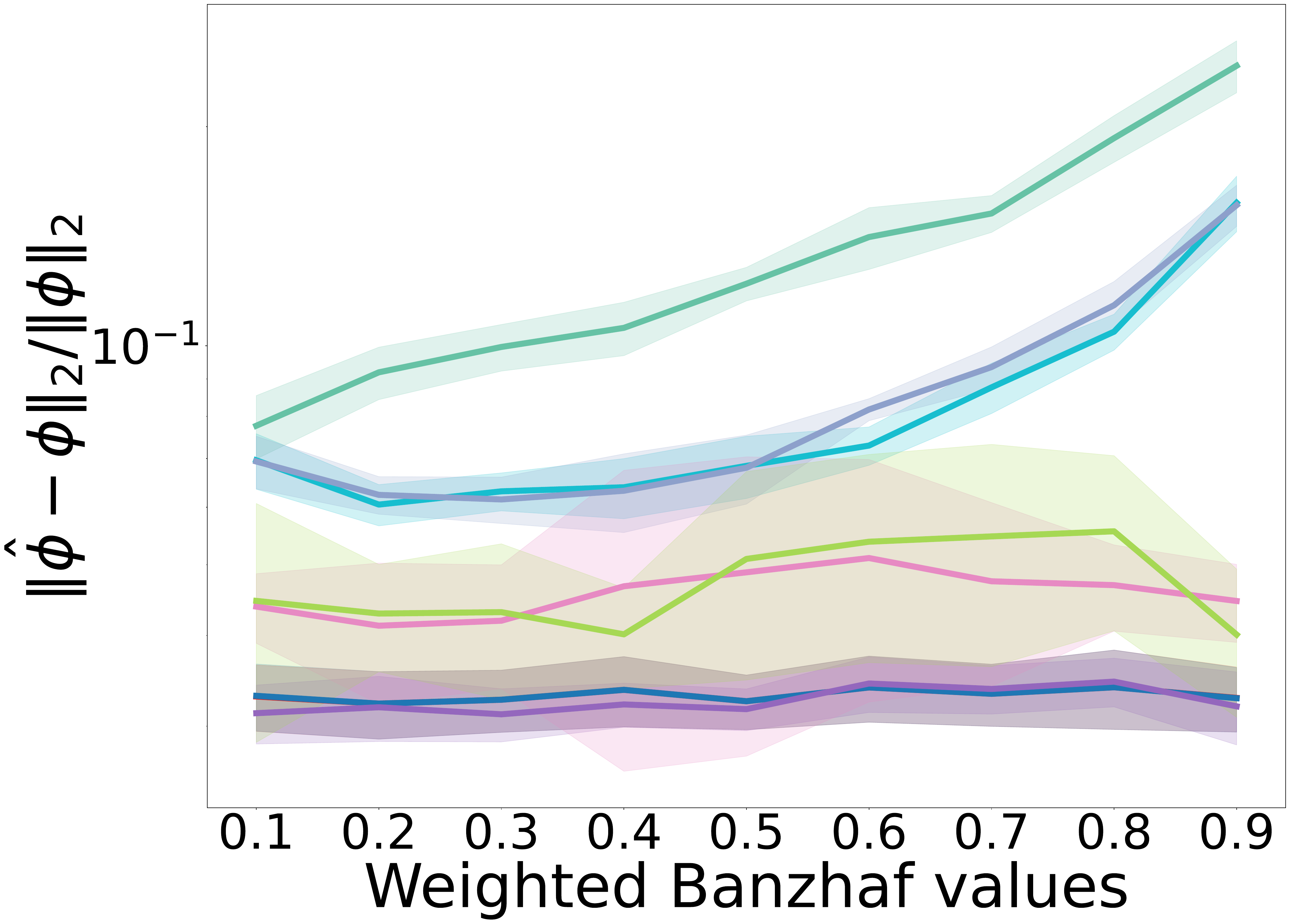} & \includegraphics[width=0.3\linewidth]{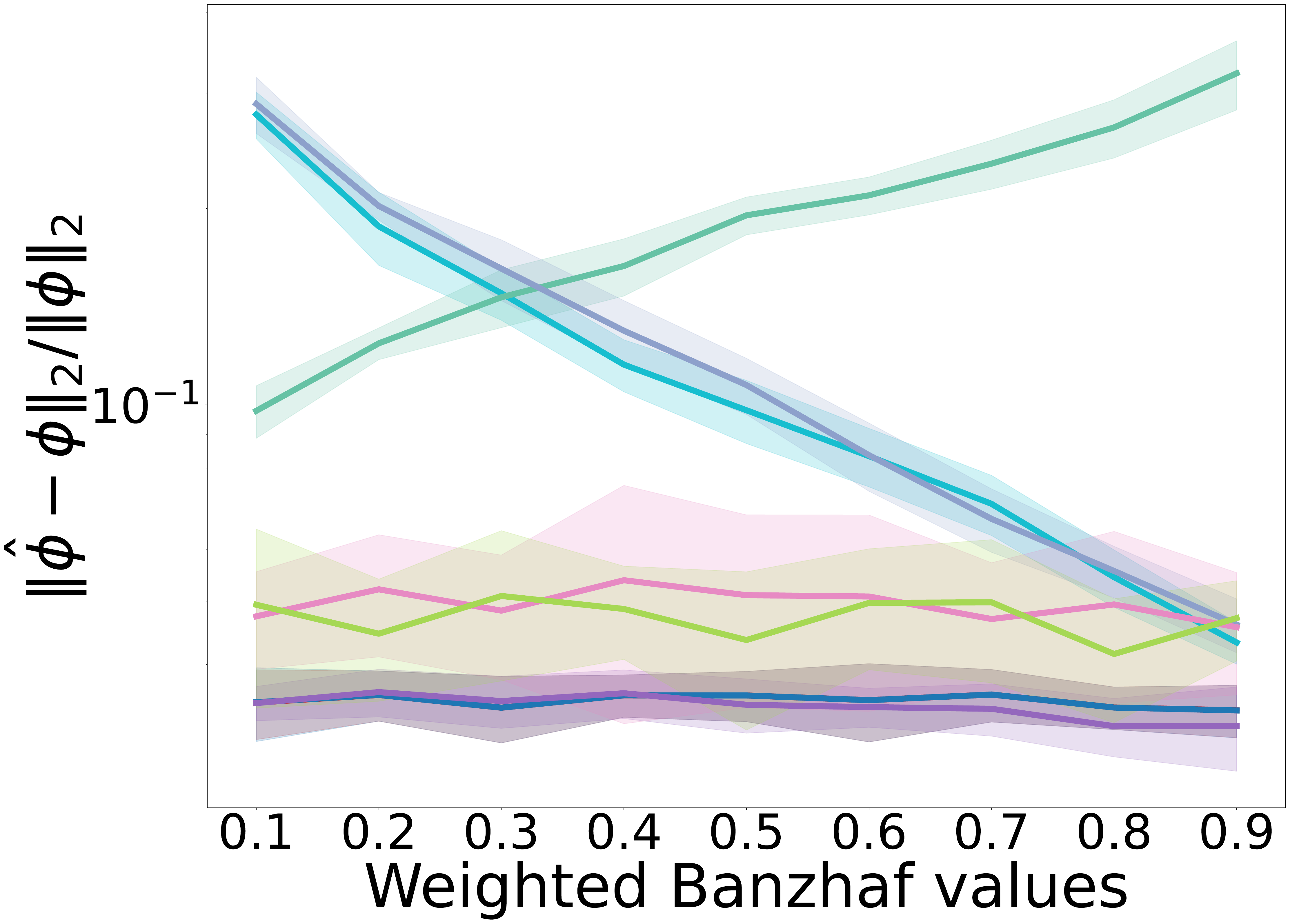} &
		\includegraphics[width=0.3\linewidth]{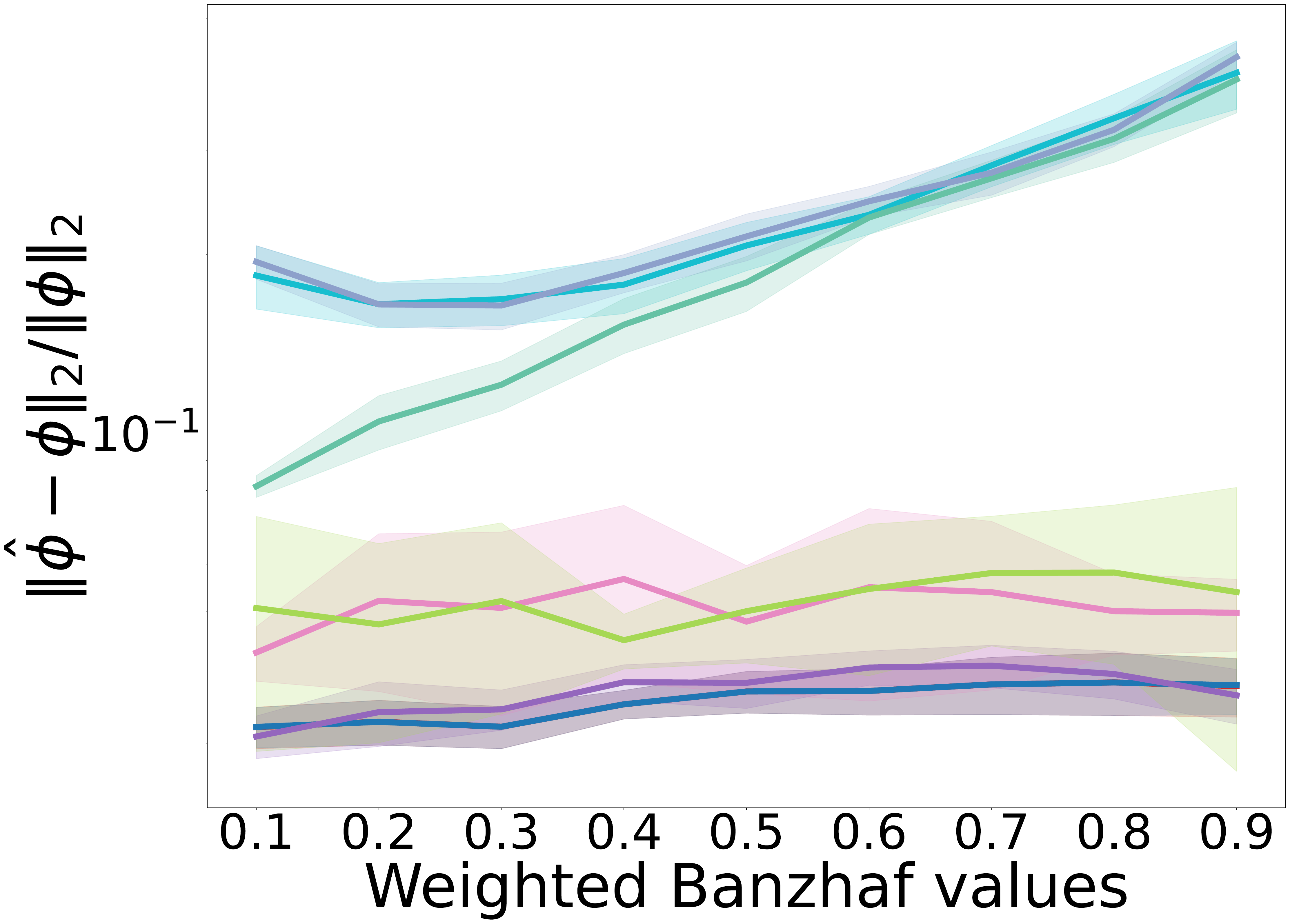} \\
		\phantom{aaaa}spambase ($n=57$) & \phantom{aaaa}FOTP ($n=51$) & \phantom{aaaa}MinibooNE ($n=50$)
	\end{tabular}
	\caption{The relative approximation error of different randomized algorithms. Weighted Banzhaf values are parameterized by $w\in (0,1)$, whereas Beta Shapley values are parameterized by $(\alpha, \beta)$, with $(1,1)$ corresponding to the Shapley value.}
	\label{fig:comparison}
\end{figure*}

\begin{restatable}{theorem}{improvedvariance} \label{thm:improvedvariance}
	For semi-values satisfying $m_{1}=m_{n}$, which include the Shapley value and the Banzhaf value, we have
	\begin{equation} \label{eq:improvedvariance}
		\begin{aligned}
			&\ \mathbb{E}[\|\hat{\boldsymbol\phi}^{\mathrm{Adalina}}-\boldsymbol\phi\|_{2}^{2}] \\
			\leq&\ \frac{1}{T}\left( nD^{*}\mathbb{E}[(u_{\mathbf{S}}-\gamma^{*})^{2}] - \|\boldsymbol\varphi\|_{2}^{2}\right) + \frac{6nD^{*}\|U\|_{\infty}^{2}}{T(T-1)}\\
			=&\ \mathbb{E}[\|\hat{\boldsymbol\phi}^{\gamma^{*}} - \boldsymbol\phi\|_{2}^{2}] + \frac{6nD^{*}\|U\|_{\infty}^{2}}{T(T-1)} .
		\end{aligned}
	\end{equation} 
	For every $0<\epsilon\leq2C$, it requires $\frac{36nD^{*}C^{2}}{\epsilon^{2}}\log\frac{4}{\delta}$ utility evaluations to achieve $P(\|\hat{\boldsymbol\phi}^{\mathrm{Adalina}}-\boldsymbol\phi\|_{2}\geq\epsilon)\leq\delta$. In other words, the \qc of $\hat{\boldsymbol\phi}^{\mathrm{Adalina}}$ is $O(\frac{n}{\epsilon^{2}}\log\frac{1}{\delta})$ for semi-values with $D^{*}\in O(1)$.
\end{restatable}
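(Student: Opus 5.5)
Since $m_1=m_n$, the boundary term simplifies: $\hat{\mathbf b}=m_n(u_{[n]}-u_\emptyset)\mathbf 1_n$ does not depend on $\hat\gamma$. Hence
\begin{equation}
\hat{\boldsymbol\phi}^{\mathrm{Adalina}}-\boldsymbol\phi = \big(\hat{\boldsymbol\varphi}^{\gamma^*}-\boldsymbol\varphi\big) - (\hat\gamma-\gamma^*)\,\hat{\mathbf v},
\end{equation}
where $\hat{\boldsymbol\varphi}^{\gamma^*}=\frac1T\sum_t (u_{S_t}-\gamma^*)\mathbf z_{S_t}$ and $\hat{\mathbf v}=\frac1T\sum_t\mathbf z_{S_t}$. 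The first term is exactly the error of $\hat{\boldsymbol\phi}^{\gamma^*}$. Its MSE is given by Theorem~\ref{thm:complexity of our framework} applied to $U-\gamma^*$, and equals $\frac1T(nD^*\mathbb E[(u_{\mathbf S}-\gamma^*)^2]-\|\boldsymbol\varphi\|_2^2)$. This uses $\mathbb E[\mathbf z_{\mathbf S}]=\mathbf 0$, which holds because constant utilities have zero semi-value.

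To bound the MSE, I expand the square. Write $a_t=u_{S_t}-\gamma^*$, $\mathbf x_t=a_t\mathbf z_{S_t}-\boldsymbol\varphi$ and $\mathbf z_t=\mathbf z_{S_t}$, all i.i.d. with $\mathbb E[a_t]=0$, $\mathbb E[\mathbf z_t]=\mathbf 0$, $\mathbb E[\mathbf x_t]=\mathbf 0$. Then $\hat\gamma-\gamma^*=\frac1T\sum a_t$, and the correction term is $\frac1{T^2}\sum_{s,t}a_s\mathbf z_t$. I will compute two quantities.

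\emph{Cross term.} $-2\,\mathbb E\langle \frac1T\sum_r\mathbf x_r,\frac1{T^2}\sum_{s,t}a_s\mathbf z_t\rangle$. By independence and zero means, only index patterns with no singleton factor survive: $r=s=t$, plus the pairings $r=s\neq t$, $r=t\neq s$ and $s=t\neq r$. Of these, $r=t\neq s$ gives $\mathbb E[a_s]\,\mathbb E\langle\mathbf x,\mathbf z\rangle=0$, and the other two likewise contain a lone zero-mean factor. So the cross term is of order $\frac1{T^2}\mathbb E[a\langle\mathbf x,\mathbf z\rangle]$. Using $\|\mathbf z_S\|_2^2=nD^*$, this is $\frac1{T^2}(nD^*\mathbb E[a^2]-\mathbb E[a]\cdots)$, which is $O(nD^*\|U\|_\infty^2/T^2)$.

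\emph{Quadratic term.} $\mathbb E\|\frac1{T^2}\sum_{s,t}a_s\mathbf z_t\|^2=\frac1{T^4}\sum\mathbb E[a_sa_{s'}\langle\mathbf z_t,\mathbf z_{t'}\rangle]$. Its surviving patterns contribute $O(T^2)\cdot nD^*\|a\|_\infty^2$, which gives $O(nD^*\|U\|_\infty^2/T^2)$.

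It remains to match the constant. Since $|a|\le 2\|U\|_\infty$, I need to track sign and count carefully: the $r=s=t$ cross contribution is $-\frac2{T^2}nD^*\mathbb E[a^2]$, which is favorable. Collecting everything, and possibly replacing a factor $T^2$ by $T(T-1)$ via the exact counts, should give the stated $\frac{6nD^*\|U\|_\infty^2}{T(T-1)}$. I expect this bookkeeping to be the main obstacle. As a fallback, I would instead bound $\mathbb E[(\hat\gamma-\gamma^*)^2\|\hat{\mathbf v}\|^2]$ and the cross term separately, using $\|\mathbf z\|^2=nD^*$ and $|a|\le 2\|U\|_\infty$.

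For the query complexity I use a union bound. First,
\begin{equation}
\|\hat{\boldsymbol\phi}^{\mathrm{Adalina}}-\boldsymbol\phi\|\le\|\hat{\boldsymbol\varphi}^{0}-\boldsymbol\varphi\|+|\hat\gamma|\,\|\hat{\mathbf v}\|,
\end{equation}
where $\hat{\boldsymbol\varphi}^0$ corresponds to $\gamma=0$. Alternatively, I can center at $\gamma^*$ and use $|\hat\gamma-\gamma^*|\,\|\hat{\mathbf v}\|\le 2C\|\hat{\mathbf v}\|$. I then make each of the two terms at most $\epsilon/2$ with probability $1-\delta/2$, which is where the $\log\frac4\delta$ comes from:

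\begin{itemize}
\item Apply Theorem~\ref{thm:concentration} to $\mathbf x_t$. The centered bound is $\|\mathbf x_t\|\le 2\cdot 2C\sqrt{nD^*}$ and the variance is at most $4C^2nD^*$.
\item Apply Theorem~\ref{thm:concentration} to $\mathbf z_t$, with variance $nD^*$ and bound $2\sqrt{nD^*}$, requiring $\|\hat{\mathbf v}\|\le\epsilon/(4C)$.
\end{itemize}

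Each application needs on the order of $\frac{16\cdot nD^*C^2}{\epsilon^2}\log\frac4\delta$ queries. Choosing the split of $\epsilon$ gives a constant of at most $36$. The range conditions of Theorem~\ref{thm:concentration} reduce to $\epsilon\lesssim C\sqrt{nD^*}$, which holds for $\epsilon\le 2C$ as long as $nD^*\ge1$.
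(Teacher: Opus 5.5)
Your argument takes a different route from the paper in both halves. The ideas are sound, but two computations need repair.

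\emph{MSE.} The paper rewrites $\hat{\boldsymbol\varphi}=\frac{T-1}{T^{2}}\overline{\boldsymbol\Psi}$ using the leave-one-out terms $\overline{\boldsymbol\psi}_t$, which are unbiased, and then runs a bias--variance expansion with crude bounds. You instead split the error as $(\hat{\boldsymbol\varphi}^{\gamma^*}-\boldsymbol\varphi)-(\hat\gamma-\gamma^*)\hat{\mathbf v}$ and count index patterns. This is more transparent, and if carried out it yields the exact MSE.

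The slip to fix is in the $r=s=t$ cross term. Since $a$ and $\mathbf z$ come from the same draw, $\mathbb E[a\langle\boldsymbol\varphi,\mathbf z\rangle]=\langle\boldsymbol\varphi,\mathbb E[a\mathbf z]\rangle=\|\boldsymbol\varphi\|_2^2$, not a multiple of $\mathbb E[a]$. For the same reason, in the quadratic term the pairings $s=t\neq s'=t'$ and $s=t'\neq s'=t$ each contribute $\|\boldsymbol\varphi\|_2^2$, while $s=s'\neq t=t'$ contributes $nD^*\mathbb E[a^2]$. Collecting everything gives
\[
\mathbb E\|\hat{\boldsymbol\phi}^{\mathrm{Adalina}}-\boldsymbol\phi\|_2^2=\mathbb E\|\hat{\boldsymbol\phi}^{\gamma^*}-\boldsymbol\phi\|_2^2-\frac{nD^*\mathbb E[a^2]}{T^2}+\frac{2(2T-1)}{T^3}\|\boldsymbol\varphi\|_2^2 .
\]
As a check, at $T=1$ this correctly returns $\|\boldsymbol\varphi\|_2^2$. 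Since $\|\boldsymbol\varphi\|_2\le\sqrt{nD^*}\,\|U\|_\infty$, the excess is at most $4nD^*\|U\|_\infty^2/T^2$. That is well inside the stated $6nD^*\|U\|_\infty^2/(T(T-1))$, and no crude bound $|a|\le 2\|U\|_\infty$ is needed.

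\emph{Query complexity.} The paper combines three ingredients:
\begin{itemize}
\item Hoeffding's inequality for the scalar $\hat\gamma$;
\item the deterministic bound $\|\hat{\mathbf v}\|_2\le\sqrt{nD^*}$;
\item Theorem~\ref{thm:complexity of our framework} applied to $\hat{\boldsymbol\phi}^{\gamma^*}$ with range $2C$.
\end{itemize}
This gives the constant $(\sqrt2+4)^2\le 36$.

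The $\gamma^*$-centred version you work out in your bullets does not reach $36$. With threshold $\epsilon/2$ and $\sigma^2=4C^2nD^*$, Theorem~\ref{thm:concentration} needs $T\ge 64\,nD^*C^2\epsilon^{-2}\log\frac4\delta$. Requiring $\|\hat{\mathbf v}\|_2<\epsilon/(4C)$ also costs $64$. Your ``$16$'' drops the factor $4$ that comes from shrinking the thresholds, and no other split of $\epsilon$ brings the larger of the two requirements below $64$.

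Your first decomposition does work, and it even beats the paper's constant:
\begin{itemize}
\item bound $\|\hat{\boldsymbol\varphi}^0-\boldsymbol\varphi\|_2<\epsilon/2$ via Theorem~\ref{thm:complexity of our framework} with range $C$;
\item combine the deterministic bound $|\hat\gamma|\le C$ with $\|\hat{\mathbf v}\|_2<\epsilon/(2C)$, obtained from Theorem~\ref{thm:concentration} using $\|\mathbf z_t\|_2=\sqrt{nD^*}$ and $\sigma^2=nD^*$.
\end{itemize}
Each event needs only $T\ge 16\,nD^*C^2\epsilon^{-2}\log\frac4\delta$. The range conditions reduce to $\epsilon\le 3C\sqrt{nD^*}$. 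To deduce this from $\epsilon\le 2C$, you must actually prove $nD^*\ge1$ rather than assume it. It follows from $n\ge2$ and $D^*\ge\frac12$, since $\sqrt{D^*}\ge\frac{1}{\sqrt2}\sum_{s=1}^{n-1}(m_s+m_{s+1})=\frac{2-m_1-m_n}{\sqrt2}\ge\frac1{\sqrt2}$.
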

As the baseline, $\mathbb{E}[\|\hat{\boldsymbol\phi}-\boldsymbol\phi\|_{2}^{2}] = \frac{1}{T}\left( nD^{*}\mathbb{E}[u^{2}_{\mathbf{S}}] - \|\boldsymbol\varphi\|_{2}^{2} \right)$, which is stated in Theorem~\ref{thm:complexity of our framework}.
It follows that the \mse of
$\hat{\boldsymbol\phi}^{\mathrm{Adalina}}$ is fast approaching that of $\hat{\boldsymbol\phi}^{\gamma^{*}}$ as $T\to \infty$; see Figure~\ref{fig:adaptive} for its performance.

We note that our proof relies on the condition $\mathbb{E}[\mathbf{z}_{\mathbf{S}}] = \mathbf{0}_{n}$, which clearly holds when $m_{1} = m_{n}$ according to Eq.~\eqref{eq:W1}. This assumption can be immediately removed if $\overline{q}^{\mathrm{MSR}}$ is used instead, since $\boldsymbol\phi(U) = \mathbf{0}_{n}$ whenever $U$ is constant. In this case, Theorems~\ref{thm:complexity of our framework} and~\ref{thm:improvedvariance} remain valid with $\|\boldsymbol\varphi\|_{2}^{2}$ and $D^{*}$ replaced by $\|\boldsymbol\phi\|_{2}^{2}$ and $D^{\mathrm{MSR}}$, respectively. Further details are provided in Appendix~\ref{app:bridges}, with the corresponding procedure summarized in Algorithm~\ref{alg:adalin-w} (Adalina-All). 

\paragraph{Adalina with paired sampling.} As proved in Theorem~\ref{thm:improvedvariance}, the \mse of $\hat{\boldsymbol\phi}^{\mathrm{Adalina}}$ matches that of $\hat{\boldsymbol\phi}^{\gamma^{*}}$ up to a lower-order term of order $\frac{1}{T^{2}}$. Therefore, our analysis of $\hat{\boldsymbol\phi}^{\gamma^{*}}$ naturally carries over to $\hat{\boldsymbol\phi}^{\mathrm{Adalina}}$. According to Theorem~\ref{thm:paired sampling}, while using paired sampling,\footnote{Note that $\sigma_{\tilde{\mathbf{q}}}^{2}$ in Theorem~\ref{thm:improvedvariance} is invariant under translations of $\{u_{S}\}_{S}$.} the \mse of $\hat{\boldsymbol\phi}^{\gamma^{*}}$ is 
\begin{equation}
	\begin{gathered}
		\frac{nD^{*}(\operatorname{Var}[u_{\mathbf{S}}] - \operatorname{Cov}(u_{\mathbf{S}}, u_{[n]\setminus\mathbf{S}}))- 2\|\boldsymbol\varphi\|_{2}^{2}}{T} .
	\end{gathered}
\end{equation}
Consequently, the \mse improves if and only if
\begin{equation}
	\begin{gathered}
		nD^{*}\operatorname{Cov}(u_{\mathbf{S}}, u_{[n]\setminus\mathbf{S}}) + \|\boldsymbol\varphi\|_{2}^{2} > 0 .
	\end{gathered}
\end{equation}
However, if we naively apply paired sampling to Algorithms~\ref{alg:linearappr} and~\ref{alg:adalin-w}, the resulting estimate reduces to $\hat{\boldsymbol\phi}^{0}$; that is, the effects of paired sampling and adaptivity cancel each other out. Indeed, one can verify that $\hat{\mathbf{v}} = \mathbf{0}$ when paired sampling is used.
Nevertheless, as shown in the proof of Theorem~\ref{thm:paired sampling}, applying paired sampling is equivalent to approximating $\frac{U-U^{c}}{2}$ instead. Therefore, to incorporate the effect of paired sampling into Adalina, we may simply run Adalina to approximate $\frac{U-U^{c}}{2}$ using half as many sampled subsets.

		\section{Empirical Results}
	In this section, we examine the performance of our adaptive randomized algorithm, Adalina and its variant Adalina-All.
	
	\paragraph{Baselines.} Since our focus is on the approximation quality of $\Theta(n)$-space randomized algorithms, the baselines we consider include: MSR-Banzhaf \cite{wang2023data}, MSR-Prob (see $\hat{\boldsymbol\phi}^{\mathrm{MSR}}$ in Appendix~\ref{app:regression-adjusted}), SHAP-IQ \cite{fumagalli2024shap}, unbiased kernelSHAP \cite{chen2025a}, AME \cite{lin2022measuring}, ARM \cite{kolpaczki2024approximating}, and GELS and GELS-Shapley \cite{li2024faster}. In particular, while the original AME requires $\Theta(n^{2})$ space and an additional $O(n^{3})$ time for matrix inversion, we instead use its linear-space variant provided by \citet{li2024one}. Among these baselines, MSR-Banzhaf can only approximate weighted Banzhaf values, whereas unbiased kernelSHAP and GELS-Shapley are designed specifically for approximating the Shapley value. All the others can approximate a wide range of semi-values.
	
	\paragraph{Utility functions.} Each utility function is defined using a trained (gradient boosting) decision tree $f$ and an instance $\mathbf{x} \in \mathbb{R}^{n}$. Specifically, $U_{f}^{\mathbf{x}}(S) \coloneq \mathbb{E}_{\mathbf{X}_{[n]\setminus S}}[f(\mathbf{x}_{S}, \mathbf{X}_{[n]\setminus S})]$. Therefore, the number of players is equal to the number of features. We follow the path-dependent definition given in \citet[Algorithm 1]{lundberg2020local}. In particular, the semi-values of $U_{f}^{\mathbf{x}}$ can be computed in polynomial time \cite{Yu2022linear,muschalik2024beyond}, providing ground-truths for evaluating the performance of different randomized algorithms. We adopt the numerically stable version recently developed by \citet{li2026treegradranker}.\footnote{\url{https://github.com/watml/TreeGrad}.} We employ twelve datasets from OpenML for training (gradient boosting) decision trees, which are (1) spambase ($n=57$), (2) FOTP ($n=51$) \cite{bridge2014machine}, (3) MinibooNE ($n=50$) \cite{roe2005boosted}, (4) philippine ($n=308$), (5) GPSP ($n=32$) \cite{madeo2013gesture}, (6) jannis ($n=54$), (7) supperconduct ($n=81$), (8) wave\_energy ($n=48$), (9) BT ($n=77$) \citep{kawala2013predictions}, (10) har ($n=561$) \citep{anguita2012human}, (11) Fashion-MNIST ($n=784$) \citep{xiao2017fashion}, and (12) CIFAR\_small ($n=3,072$) \citep{krizhevsky2009learning}.
	All gradient boosting decision trees are trained using GradientBoostingClassifier and GradientBoostingRegressor from the scikit-learn library \cite{pedregosa2011scikit} with the number of trees set to $10$. 
	
	Except for the last three datasets, each estimate is computed using $1,000$ \queries per player; for example, when $n=10$, this corresponds to $10,000$ \queries per estimate. This quantity is set to $100$ for har and Fashion-MNIST, and to $20$ for CIFAR\_small.
	All results are averaged over $10$ random seeds, with the standard deviation reported. For reproducibility, all other sources of randomness are fixed to $2026$.
	More details and experimental results are in Appendix~\ref{app:exp}. 
	
	As presented in Figure~\ref{fig:comparison}, except for the Shapley value, i.e., the Beta Shapley value with parameter $(1,1)$, our  Adalina performs consistently well. Although Adalina does not improve the \mse for non-symmetric semi-values, its estimates closely match those of Adalina-All, which theoretically achieves improved \mse for all semi-values. For the Shapley value, unbiased kernelSHAP can be significantly better, suggesting the possibility that $\inf_{\lambda\in \mathbb{R}}\mathbb{E}[(u_{S}-\lambda\cdot s)^{2}] < \inf_{\gamma\in\mathbb{R}}\mathbb{E}[(u_{S}-\gamma)^{2}]$
    and indicating that there is still more room to better approximate the Shapley value.
    In particular, Lemma~\ref{lem:generalizelambda} suggests a path for future work.
	
		\section{Conclusion}
	In this work, we adopt a holistic perspective to systematically establish a theoretical framework for designing linear-time, linear-space randomized algorithms that approximate semi-values. Our framework bridges the recent works, including OFA, unbiased kernelSHAP, SHAP-IQ and the regression-adjust approach, and provides sharper \qcs. It also characterizes when paired sampling boosts performance, which is empirically verified in Figure~\ref{fig:paired sampling}.
	In particular, our work enables the explicit minimization of the \mse for each utility function, through which we propose the first adaptive randomized algorithm, Adalina, with provably improved \mse. Empirically, Adalina consistently performs well against baselines.
	We view our framework as the first concrete step towards designing more efficient adaptive randomized algorithms for semi-value estimation.

	\section*{Acknowledgment}
	This research is supported by the National Research Foundation Singapore and the Singapore Ministry of Digital Development and Innovation, National AI Group under the AI Visiting Professorship Programme (award number AIVP-$2024$-$001$). 
	YY gratefully acknowledges NSERC and CIFAR for funding support. 
	
	\section*{Impact Statement}
	This paper presents work whose goal is to advance the field of machine learning. There are many potential societal consequences of our work, none of which we feel must be specifically highlighted here.

	\bibliography{references}
	\bibliographystyle{icml2026}
	
	\newpage
	\appendix
	\onecolumn
	\section{Proofs} \label{app:proofs}
\begin{restatable}[{\citeauthor{yurinsky1995sums}, \citeyear{yurinsky1995sums}, Theorem 3.3.4}]{theorem}{boundcosh} \label{thm:hibert cosh bound}
	Let $\mathbf{S} = \sum_{i=1}^{M} \mathbf{X}_{i} $ where $\{\mathbf{X}_{i}\}_{i=1}^{M}$ are independent zero-mean random vectors. Then, for every $\lambda > 0$,
	\begin{equation}
		\begin{gathered}
			\mathbb{E}[\cosh(\lambda\|\mathbf{S}\|_{2})] \leq \prod_{i=1}^{M}\mathbb{E}[\exp(\lambda\|\mathbf{X}_{i}\|_{2}) - \lambda\|\mathbf{X}_{i}\|_{2}] .
		\end{gathered}
	\end{equation}
\end{restatable}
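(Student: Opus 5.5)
We will prove the bound by induction on $M$, following Yurinsky's martingale-type argument. The engine is a one-step inequality: for a fixed vector $\mathbf{a}$ in the Hilbert space and an independent zero-mean random vector $\mathbf{X}$,
\begin{equation}
\mathbb{E}[\cosh(\lambda\|\mathbf{a}+\mathbf{X}\|_{2})] \leq \cosh(\lambda\|\mathbf{a}\|_{2})\,\mathbb{E}[\exp(\lambda\|\mathbf{X}\|_{2})-\lambda\|\mathbf{X}\|_{2}].
\end{equation}
Given this, set $\mathbf{S}_{k}=\sum_{i\le k}\mathbf{X}_{i}$. Condition on $\mathbf{S}_{M-1}$ and apply the inequality with $\mathbf{a}=\mathbf{S}_{M-1}$ and $\mathbf{X}=\mathbf{X}_{M}$, which is legitimate by independence. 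Taking expectations and iterating down to $\mathbf{S}_{0}=\mathbf{0}$, where $\cosh(0)=1$, yields the stated product.

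To prove the one-step inequality, we use the fact that $\cosh$ of the norm is a smooth function of the vector. Define $g(t)=\cosh(\lambda\|\mathbf{a}+t\mathbf{X}\|_{2})$ for $t\in[0,1]$, and apply Taylor's formula with integral remainder:
\begin{equation}
g(1)=g(0)+g'(0)+\int_{0}^{1}(1-t)g''(t)\,\mathrm{d}t.
\end{equation}
We have $g'(0)=\lambda\sinh(\lambda\|\mathbf{a}\|)\langle \mathbf{a}/\|\mathbf{a}\|,\mathbf{X}\rangle$. This term is linear in $\mathbf{X}$, so it vanishes in expectation because $\mathbb{E}[\mathbf{X}]=\mathbf{0}$. (When $\mathbf{a}=\mathbf{0}$ it is zero anyway, since $\cosh$ is even and smooth.)

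For the second derivative, write $r=\|\mathbf{a}+t\mathbf{X}\|$. Differentiating $h(r)=\cosh(\lambda r)$ along the direction $\mathbf{X}$ gives
\begin{equation}
g''(t)=h''(r)\,\langle \mathbf{e},\mathbf{X}\rangle^{2}+\frac{h'(r)}{r}\bigl(\|\mathbf{X}\|^{2}-\langle \mathbf{e},\mathbf{X}\rangle^{2}\bigr),
\end{equation}
where $\mathbf{e}$ is the unit vector in the direction of $\mathbf{a}+t\mathbf{X}$. Because $\sinh(x)/x\le\cosh(x)$, we get $h'(r)/r\le\lambda^{2}\cosh(\lambda r)=h''(r)$, and therefore $g''(t)\le\lambda^{2}\|\mathbf{X}\|^{2}\cosh(\lambda\|\mathbf{a}+t\mathbf{X}\|)$. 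Next, the triangle inequality together with $\cosh(x+y)\le\cosh(x)e^{|y|}$ gives
\begin{equation}
\cosh(\lambda\|\mathbf{a}+t\mathbf{X}\|)\le\cosh(\lambda\|\mathbf{a}\|)\,e^{\lambda t\|\mathbf{X}\|}.
\end{equation}
It remains to evaluate $\int_{0}^{1}(1-t)x^{2}e^{tx}\,\mathrm{d}t=e^{x}-1-x$ with $x=\lambda\|\mathbf{X}\|$. Putting the pieces together,
\begin{equation}
\mathbb{E}[g(1)]\le\cosh(\lambda\|\mathbf{a}\|)\bigl(1+\mathbb{E}[e^{\lambda\|\mathbf{X}\|}-1-\lambda\|\mathbf{X}\|]\bigr),
\end{equation}
which is exactly the one-step inequality.

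The main technical obstacle is the non-smoothness of the norm at $\mathbf{a}+t\mathbf{X}=\mathbf{0}$. The plan is to observe that $\cosh(\lambda\|\mathbf{v}\|)=\sum_{k}\lambda^{2k}\|\mathbf{v}\|^{2k}/(2k)!$ is a power series in $\|\mathbf{v}\|^{2}$, hence $C^{\infty}$ everywhere. The Hessian bound above extends to $\mathbf{v}=\mathbf{0}$ by continuity, since there the Hessian equals $\lambda^{2}I$. A second point to check is integrability. Expectations can be taken under the assumption $\mathbb{E}[e^{\lambda\|\mathbf{X}_{i}\|}]<\infty$; otherwise the right-hand side is infinite and the claim is trivial. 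Conditioning can then be handled with Fubini, because all the terms involved are nonnegative or integrable.
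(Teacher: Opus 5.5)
Your proof is correct, but it takes a different route from the paper's. The paper neither conditions nor differentiates. It expands $\mathbb{E}[\cosh(\lambda\|\mathbf{S}\|_2)]=\sum_{\ell}\lambda^{2\ell}\mathbb{E}[\|\mathbf{S}\|_2^{2\ell}]/(2\ell)!$ and writes $\|\mathbf{S}\|_2^{2\ell}=\langle\mathbf{S},\mathbf{S}\rangle^{\ell}$ as a sum over index tuples $(I,J)$. It discards every tuple in which some index occurs exactly once (zero mean plus independence) and bounds the surviving products of inner products by Cauchy--Schwarz. It then counts the tuples with a given occupancy vector $\kappa$ by the multinomial coefficient $(2\ell)!/\prod_i\kappa_i!$ and factorizes $\mathbb{E}[\prod_i\|\mathbf{X}_i\|_2^{\kappa_i}]$ by independence. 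Finally it enlarges the range to all $\kappa$ with $\kappa_i\neq 1$, so the sum collapses to $\prod_i\sum_{k\neq 1}\lambda^k\mathbb{E}[\|\mathbf{X}_i\|_2^k]/k!$. You instead put all the Hilbert-space geometry into the single Hessian estimate $\nabla^2\cosh(\lambda\|\mathbf{v}\|_2)\preceq\lambda^2\cosh(\lambda\|\mathbf{v}\|_2)\,I$. You get the factor $e^{x}-x$ from $\int_0^1(1-t)x^2e^{tx}\,\mathrm{d}t=e^x-1-x$ and then peel off one summand at a time. The paper's argument is purely algebraic and avoids your bookkeeping about smoothness at the origin and integrability. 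However, it uses full independence twice: to kill the singleton terms and to factor the moments. Your argument uses at step $k$ only that $\mathbf{X}_k$ has zero conditional mean given $\mathbf{X}_1,\dots,\mathbf{X}_{k-1}$, plus a bound on the conditional factor. It therefore extends to Hilbert-space martingale differences and, via the same smoothness property, to 2-smooth Banach spaces. The moment expansion cannot do this, since a term such as $\mathbb{E}[\langle\mathbf{X}_1,\mathbf{X}_2\rangle\|\mathbf{X}_2\|_2^2]$ need not vanish under martingale dependence.
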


\begin{proof}
	By Taylor expansion (and the monotone convergence theorem),
	\begin{equation}
		\begin{gathered}
			\mathbb{E}[\cosh(\lambda\|\mathbf{S}\|_{2})] = \sum_{\ell=0}^{\infty} \frac{\lambda^{2\ell}}{(2\ell)!}\mathbb{E}[\|\mathbf{S}\|_{2}^{2\ell}] .
		\end{gathered}
	\end{equation}
	We begin by bounding each moment $\mathbb{E}[\|\mathbf{S}\|_{2}^{2\ell}]$.
	Expanding the square Euclidean norm we have
	\begin{equation}
		\begin{aligned}
			\|\mathbf{S}\|_{2}^{2\ell} &= {\langle \mathbf{S}, \mathbf{S} \rangle}^{\ell} = \sum_{I \in [M]^{[\ell]}} \sum_{J \in [M]^{[\ell]}} \prod_{k=1}^{\ell} \langle\mathbf{X}_{I(k)}, \mathbf{X}_{J(k)}\rangle, 
		\end{aligned}
	\end{equation}
	where $[M] \coloneq \{1,2,\dots,M\}$. 
	For every $(I, J)$ and $i \in [M]$, define
	\begin{equation}
		\begin{gathered}
			\kappa_{i}(I, J) \coloneq \sum_{k=1}^{\ell} \left( \pred{I(k)=i} + \pred{J(k)=i} \right) .
		\end{gathered}
	\end{equation}
	Then, since $\{\mathbf{X}_{i}\}_{i=1}^{M}$ are zero-mean, we have \begin{equation}
		\begin{gathered}
			\mathbb{E}\left[\prod_{k=1}^{\ell} \langle\mathbf{X}_{I(k)}, \mathbf{X}_{J(k)}\rangle\right] = 0 \ \text{ if }\ \kappa_{i}(I,J) = 1 \ \text{ for some } i \in [M].
		\end{gathered}
	\end{equation} 
	For the other cases,
	\begin{equation}
		\begin{gathered}
			\mathbb{E}\left[\prod_{k=1}^{\ell} \langle\mathbf{X}_{I(k)}, \mathbf{X}_{J(k)}\rangle\right] \leq \mathbb{E}\left[\prod_{k=1}^{\ell} \left( \|\mathbf{X}_{I(k)}\|_{2}\cdot\|\mathbf{X}_{J(k)}\|_{2} \right)\right] = \mathbb{E}\left[\prod_{i=1}^{M}\|\mathbf{X}_{i}\|_{2}^{\kappa_{i}(I,J)}\right] .
		\end{gathered}
	\end{equation}	
	Let
	\begin{equation}
		\begin{gathered}
			K_{\ell} \coloneqq \{\kappa = (\kappa_{i})\in \mathbb{Z}^{M}_{+} \mid \sum_{i=1}^{M}\kappa_{i} = 2\ell \ \text{ and }\ \kappa_{i} \not= 1 \ \text{ for every } i \in [M]\} .
		\end{gathered}
	\end{equation}
	Then, rearranging the sum and applying the above inequality, we have
	\begin{equation}
		\begin{gathered}
			\mathbb{E}\left[\|\mathbf{S}\|_{2}^{2\ell}\right] \leq \sum_{\kappa \in K_{\ell}}\ \sum_{(I, J) \colon \kappa(I,J)=\kappa} \mathbb{E}\left[\prod_{i=1}^{M}\|\mathbf{X}_{i}\|_{2}^{\kappa_{i}}\right] .
		\end{gathered}
	\end{equation}
	For each $\kappa \in K_{\ell}$, there are $2\ell$ entries in $(I, J)$ to be determined, with the constraint that $\kappa_i$ entries are chosen to be $i \in [M]$. It follows that
	\begin{equation}
		\begin{gathered}
			|\{(I,J)\colon \kappa(I,J)=\kappa\}| = \frac{(2\ell)!}{\prod_{i=1}^{M}\kappa_{i}!}.
		\end{gathered}
	\end{equation}
	Consequently,
	\begin{equation}
		\begin{gathered}
			\mathbb{E}\left[\|\mathbf{S}\|_{2}^{2\ell}\right] \leq \sum_{\kappa \in K_{\ell}} (2\ell)!\mathbb{E}\left[\prod_{i=1}^{M}\frac{\|\mathbf{X}_{i}\|_{2}^{\kappa_{i}}}{\kappa_{i}!}\right] = \sum_{\kappa \in K_{\ell}} (2\ell)!\prod_{i=1}^{M}\frac{\mathbb{E}[\|\mathbf{X}_{i}\|_{2}^{\kappa_{i}}]}{\kappa_{i}!}
		\end{gathered} ,
	\end{equation} 
	where the equality comes from the independence among $\{\mathbf{X}_{i}\}_{i=1}^{M}$. As a result,
	\begin{equation}
		\begin{gathered}
			\mathbb{E}[\cosh(\lambda\|\mathbf{S}\|_{2})] \leq \sum_{\ell=0}^{\infty} \sum_{\kappa \in K_{\ell}}\prod_{i=1}^{M} \frac{\lambda^{\kappa_{i}}\mathbb{E}[\|\mathbf{X}_{i}\|_{2}^{\kappa_{i}}]}{\kappa_{i}!} .
		\end{gathered}
	\end{equation}
	Since
	\begin{equation}
		\begin{gathered}
			\bigcup_{\ell=0}^{\infty}K_{\ell} \subseteq (\mathbb{Z}_{+} \setminus \{\mathbf{1}\})^{M},
		\end{gathered}
	\end{equation}
	we eventually have
	\begin{equation}
		\begin{gathered}
			\mathbb{E}[\cosh(\lambda\|\mathbf{S}\|_{2})] \leq \sum_{\kappa_{1}\not=1}\cdots\sum_{\kappa_{M}\not=1} \prod_{i=1}^{M} \frac{\lambda^{\kappa_{i}}\mathbb{E}[\|\mathbf{X}_{i}\|_{2}^{\kappa_{i}}]}{\kappa_{i}!} = \prod_{i=1}^{M} \sum_{\kappa_i\ne 1} \frac{\lambda^{\kappa_{i}}\mathbb{E}[\|\mathbf{X}_{i}\|_{2}^{\kappa_{i}}]}{\kappa_{i}!}
			=
			\prod_{i=1}^{M} \mathbb{E}[\exp(\lambda\|\mathbf{X}_{i}\|_{2}) - \lambda\|\mathbf{X}_{i}\|_{2}] ,
		\end{gathered}
	\end{equation}	
	which completes the proof.
\end{proof}

\Hilbertconcentration*
\begin{proof}
	The proof presented here is routine in establishing Bernstein's inequality.
	Let $\mathbf{S} \coloneq \sum_{i=1}^{M}\mathbf{X}_{i}$ and $\lambda > 0$, and
	we begin by applying Markov's inequality to obtain
	\begin{equation}
		\begin{gathered}
			\mathbb{P}\left( \|\mathbf{S}\|_{2} \geq \epsilon \right) = \mathbb{P}\left( \cosh(\lambda\|\mathbf{S}\|_{2}) \geq \cosh(\lambda\epsilon) \right) \leq \frac{\mathbb{E}[\cosh(\lambda\|\mathbf{S}\|_{2})]}{\cosh(\lambda\epsilon)}.
		\end{gathered}
	\end{equation}
	Then, by Theorem~\ref{thm:hibert cosh bound}, 
	\begin{equation}
		\begin{gathered}
			\mathbb{E}[\cosh(\lambda\|\mathbf{S}\|_{2})] \leq \left( \mathbb{E}\left[ \exp\left( \lambda\|\mathbf{X}_{i}\|_{2} - \lambda\|\mathbf{X}_{i}\|_{2} \right) \right] \right)^{M} .
		\end{gathered}
	\end{equation}
	Introducing the non-decreasing and non-negative function $h(x)\coloneq \frac{e^{x}-1-x}{x^{2}}$,
	\begin{equation}
		\begin{gathered}
			\mathbb{E}[\exp(\lambda\|\mathbf{X}_{i}\|_{2}) - \lambda\|\mathbf{X}_{i}\|_{2}] = 1 + \mathbb{E}[\lambda^{2}\|\mathbf{X}_{i}\|_{2}^{2}\cdot h(\lambda\|\mathbf{X}_{i}\|_{2})]
			\leq 1 + \lambda^{2}\sigma^{2}\cdot h(\lambda C).
		\end{gathered}
	\end{equation}
	We continue to bound $h$ by Taylor expansion (and the simple fact that $k! \geq 1 \cdot 2 \cdot 3^{k-2}$):
	\begin{equation}
		\begin{gathered}
			h(\lambda C) = \sum_{k=2}^{\infty}\frac{(\lambda C)^{k-2}}{k!} \leq \frac{1}{2}\sum_{k=2}^{\infty}\frac{(\lambda C)^{k-2}}{3^{k-2}} = \frac12\sum_{k=0}^\infty \left(\frac{\lambda C}{3}\right)^k.
		\end{gathered}
	\end{equation}
	If $\lambda < \frac{3}{C}$, by combining the previous two inequalities, we have
	\begin{equation}
		\begin{gathered}
			\mathbb{E}[\exp(\lambda\|\mathbf{X}_{i}\|_{2}) - \lambda\|\mathbf{X}_{i}\|_{2}] \leq 1 + \frac{\lambda^{2}\sigma^{2}}{2(1-\frac{C}{3}\lambda)} \leq \exp\left( \frac{\lambda^{2}\sigma^{2}}{2(1-\frac{C}{3}\lambda)} \right),
		\end{gathered}
	\end{equation}
	where the last inequality comes from $1+x \leq e^{x}$. As a result,
	\begin{equation}
		\begin{gathered}
			\mathbb{E}[\cosh(\lambda\|\mathbf{S}\|_{2})] \leq \exp\left( \frac{M\lambda^{2}\sigma^{2}}{2(1-\frac{C}{3}\lambda)} \right).
		\end{gathered}
	\end{equation}
	Since $\cosh(x) \geq \frac{e^{x}}{2}$,
	\begin{equation}
		\begin{gathered}
			\mathbb{P}(\|\mathbf{S}\|_{2}\geq \epsilon) \leq 2\exp\left( \frac{M\lambda^{2}\sigma^{2}}{2(1-\frac{C}{3}\lambda)} - \lambda\epsilon \right).
		\end{gathered}
	\end{equation}
	Putting $\lambda = \frac{\epsilon}{M\sigma^{2}+\frac{C}{3}\epsilon}$, which meets the constraint $\lambda < \frac{3}{C}$, we have
	\begin{equation}
		\begin{gathered}
			\mathbb{P}(\|\mathbf{S}\|_{2}\geq \epsilon) \leq 2\exp\left( -\frac{\epsilon^{2}}{2(M\sigma^{2}+\frac{C}{3}\epsilon)} \right),
		\end{gathered}
	\end{equation}
	which is equivalent to $\mathbb{P}(\|\frac{1}{M}\sum_{i=1}^{M}\mathbf{X}_{i}\|_{2}\geq \epsilon) \leq 2\exp\left( -\frac{M\epsilon^{2}}{2(\sigma^{2}+\frac{C}{3}\epsilon)} \right)$. If $\frac{C}{3}\epsilon \leq \sigma^{2}$, which is equivalent to $\epsilon \leq \frac{3\sigma^{2}}{C}$, the result follows.
\end{proof}

\linearappr*

\begin{proof}
	Recall that $D(\mathbf{q}) = \sum_{s=1}^{n-1}\frac{n}{q_{s}}\left( \frac{m_{s}^{2}}{s} + \frac{m_{s+1}^{2}}{n-s} \right)$.
	Let $A \coloneq \sum_{s=1}^{n-1}\sqrt{n\left( \frac{m_{s}^{2}}{s} + \frac{m_{s+1}^{2}}{n-s} \right)}$. Then, the unique optimal choice of $\mathbf{q}$ can be written as 
	\begin{equation}
		\begin{gathered}
			q_{s} = q_s^* \coloneq \frac{1}{A}\sqrt{n\left( \frac{m_{s}^{2}}{s} + \frac{m_{s+1}^{2}}{n-s} \right)}.
		\end{gathered}
	\end{equation}
	In particular, 
	\begin{equation}
		\begin{gathered}
			D^{*} = A\cdot\sum_{s=1}^{n-1}\sqrt{n\left( \frac{m_{s}^{2}}{s} + \frac{m_{s+1}^{2}}{n-s} \right)} = A^{2}.
		\end{gathered}
	\end{equation}
	For any $\mathbf{z}_{S}$, 
	\begin{equation}
		\begin{gathered}
			\|\mathbf{z}_{S}\|_{2}^{2} = \frac{n^{2}}{q_{s}^{2}}\left( \frac{m_{s}^{2}}{s} + \frac{m_{s+1}^{2}}{n-s} \right) = n\cdot A^{2} = n\cdot D^{*}.
		\end{gathered}
	\end{equation}
	Therefore, we have
	\begin{equation}
		\begin{gathered}
			\mathbb{E}[\|u_{\mathbf{S}}\mathbf{z}_{\mathbf{S}} - \boldsymbol\varphi\|_{2}^{2}] = \mathbb{E}[\|u_{\mathbf{S}}\mathbf{z}_{\mathbf{S}}\|_{2}^{2}] - \|\boldsymbol\varphi\|_{2}^{2} \leq \mathbb{E}[\|u_{\mathbf{S}}\mathbf{z}_{\mathbf{S}}\|_{2}^{2}] =  n\cdot D^{*}\cdot \sum_{\emptyset\subsetneq S\subsetneq [n]}q_{s}\binom{n}{s}^{-1}u_{S}^{2} \\ 
			\text{ and }\ \|u_{S}\mathbf{z}_{S} - \boldsymbol\varphi\|_{2} 
			\leq \|u_{S}\mathbf{z}_{S}\|_{2} + \mathbb{E}[\|u_{\mathbf{S}}\mathbf{z}_{\mathbf{S}}\|_{2}] \leq 2C\sqrt{nD^{*}}.
		\end{gathered}
	\end{equation}
	Invoking Theorem~\ref{thm:concentration} yields the desired result.
\end{proof}

\pairedsampling*
\begin{proof}
	Recall that $p_{s} = p_{n+1-s}$ for every $s \in [n]$ if $\boldsymbol\phi$ corresponds to a symmetric semi-value. As a result, we have 
	\begin{equation}
		\begin{gathered}
			m_{s} = m_{n+1-s} \ \text{ for every }\ s \in [n].
		\end{gathered}
	\end{equation}
	Let $T$ be the total number of utility queries. Then, after sampling $\frac{T}{2}$ subsets $R_{1}, R_{2}, \dots, R_{\frac{T}{2}}$, the sequence of subsets used to compute $\hat{\boldsymbol\phi}$ is
	\begin{equation}
		\begin{gathered}
			S_{1} = R_{1},\ S_{2} = [n]\setminus R_{1},\ S_{3} = R_{2},\ S_{4} = [n]\setminus R_{3},\dots, S_{T-1} = R_{\frac{T}{2}},\ S_{T} = [n]\setminus R_{\frac{T}{2}}.
		\end{gathered}
	\end{equation}
	Recall that the estimate is computed as $\hat{\boldsymbol\varphi} = \sum_{t=1}^{T} u_{S_{t}} \mathbf{z}_{S_{t}}$, which can be rewritten as
	\begin{equation}
		\begin{gathered}
			\hat{\boldsymbol\varphi} = \frac{2}{T} \sum_{t=1}^{\frac{T}{2}} \frac{1}{2}\left( u_{R_{t}}\mathbf{z}_{R_{t}} + u_{[n]\setminus R_{t}}\mathbf{z}_{[n]\setminus R_{t}} \right).
		\end{gathered}
	\end{equation}
	Specifically,
	\begin{equation}
		\begin{gathered}
			\frac{1}{2}\left( u_{R_{t}}\mathbf{z}_{R_{t}} + u_{[n]\setminus R_{t}}\mathbf{z}_{[n]\setminus R_{t}} \right) = \frac{u_{R_{t}} - u_{[n]\setminus R_{t}}}{2}\cdot \mathbf{z}_{R_{t}}.
		\end{gathered}
	\end{equation}
	For symmetric semi-values, we have
	\begin{equation}
		\begin{gathered}
			\mathbb{E}\left[\frac{u_{\mathbf{S}}-u_{[n]\setminus \mathbf{S}}}{2}\cdot \mathbf{z}_{S}\right] = \frac{\boldsymbol\varphi + \boldsymbol\varphi}{2} = \boldsymbol\varphi.
		\end{gathered}
	\end{equation}
	Then,
	\begin{equation}
		\begin{gathered}
			\mathbb{E}[\|\hat{\boldsymbol\phi} - \boldsymbol\phi\|_{2}^{2}] = \frac{2\left( \mathbb{E}[\|\frac{u_{\mathbf{S}}-u_{[n]\setminus \mathbf{S}}}{2}\cdot\mathbf{z}_{\mathbf{S}}\|_{2}^{2}] - \|\boldsymbol\varphi\|_{2}^{2} \right)}{T} = \frac{n\cdot D(\mathbf{q}) \cdot \mathbb{E}_{\tilde{\mathbf{q}}}[\frac{(u_{\mathbf{S}}-u_{[n]\setminus \mathbf{S}})^{2}}{2}] - 2\|\boldsymbol\varphi\|_{2}^{2}}{T}.
		\end{gathered}
	\end{equation}
	Since $\mathbb{E}_{\tilde{\mathbf{q}}}[u_{\mathbf{S}}^{2}] = \mathbb{E}_{\tilde{\mathbf{q}}}[u_{[n]\setminus \mathbf{S}}^{2}]$,
	\begin{equation}
		\begin{gathered}
			\mathbb{E}_{\tilde{\mathbf{q}}}\left[\frac{(u_{\mathbf{S}} - u_{[n]\setminus \mathbf{S}})^{2}}{2}\right] =  \mathbb{E}_{\tilde{\mathbf{q}}}[u_{\mathbf{S}}^{2}] - \mathbb{E}_{\tilde{\mathbf{q}}}[u_{\mathbf{S}}\cdot u_{[n]\setminus \mathbf{S}}].
		\end{gathered}
	\end{equation}
\end{proof}

\generalizelambda*
\begin{proof}
	Observe that
	\begin{equation}
		\begin{gathered}
			\mathbb{E}[v_{\mathbf{S}}\mathbf{z}_{\mathbf{S}}] = \mathbf{W}^{\mathsf{T}}\mathbf{v}.
		\end{gathered}
	\end{equation}
	where $\mathbf{w}_{S}$ is the $S$-th column of $\mathbf{W}^{\mathsf{T}}$.
	For the Shapley value,
	\begin{equation}
		\begin{gathered}
			W_{S, i} = \begin{cases}
				\frac{1}{n}\binom{n-1}{s-1}^{-1}, & i\in S,\\
				-\frac{1}{n}\binom{n-1}{s}^{-1}, & \text{otherwise.}
			\end{cases}
		\end{gathered}
	\end{equation}
	Specifically,
	\begin{equation}
		\begin{aligned}
			(\mathbf{W}^{\mathsf{T}}\mathbf{v})_{i} &= \frac{1}{n}\sum_{i \in S} \binom{n-1}{s-1} f(s) - \frac{1}{n}\sum_{i\not\in S}\binom{n-1}{s}f(s)\\
			&= \frac{1}{n}\sum_{s=1}^{n-1}\binom{n-1}{s-1}^{-1}\binom{n-1}{s-1}f(s) - \frac{1}{n}\sum_{s=1}^{n-1}\binom{n-1}{s}^{-1}\binom{n-1}{s}f(s)\\
			&=\frac{1}{n}\sum_{s=1}^{n-1}[f(s)-f(s)] = 0.
		\end{aligned}
	\end{equation}
\end{proof}

\kernelSHAPcomplexity*
\begin{proof}
	Given a sequence of sampled subsets $\{S_{t}\}_{t=1}^{T}$, 
	\begin{equation}
		\begin{gathered}
			\hat{\boldsymbol\phi}^{\mathrm{leverage}} \coloneq \frac{1}{T}\sum_{t=1}^{T} \left( u_{S_{t}} - \frac{[u_{[n]}-u_{\emptyset}]s_{t}}{n} \right)\cdot \mathbf{z}_{S_{t}} + \frac{u_{[n]}-u_{\emptyset}}{n}\mathbf{1}_{n}.
		\end{gathered}
	\end{equation}
	Specifically,
	\begin{equation}
		\begin{gathered}
			(\mathbf{z}_{S})_i = \frac{n}{s}\pred{i\in S} - \frac{n}{n-s}\pred{i \not\in S}.
		\end{gathered}
	\end{equation}
	Then, for $n>1$,
	\begin{equation}
		\begin{gathered}
			\mathbb{E}\left[\left\|\left( u_{\mathbf{S}}- \frac{(u_{[n]}-u_{\emptyset})\cdot|\mathbf{S}|}{n} \right)\mathbf{z}_{\mathbf{S}}\right\|_{2}^{2}\right] <= 9C^{2}\mathbb{E}[\|\mathbf{z}_{\mathbf{S}}\|_{2}^{2}] = 9C^{2} \sum_{s=1}^{n-1} \frac{n^2}{s(n-s)} \leq 18C^{2}n\log n,\\
			\left\|\left( u_{S}- \frac{(u_{[n]}-u_{\emptyset})\cdot s}{n} \right)\mathbf{z}_{S}\right\|_{2} \leq 3C\|\mathbf{z}_{S}\|_{2} = 3nC\sqrt{\frac{n}{s(n-s)}} \leq 6nC.
		\end{gathered}
	\end{equation}
	The results follow by applying Theorem~\ref{thm:concentration}. For the vanilla kernelSHAP,
	\begin{equation}
		\begin{gathered}
			\hat{\boldsymbol\phi}^{\mathrm{vanilla}} \coloneq \frac{1}{T}\sum_{t=1}^{T} u_{S_{t}}\mathbf{z}_{S_{t}} + \frac{u_{[n]}-u_{\emptyset}}{n}\mathbf{1}_{n} \ \text{ where }\ \mathbf{z}_{S} =\begin{cases}
				\frac{2H_{n-1}}{n} (n-s), & i \in S,\\
				-\frac{2H_{n-1}}{n} s, & \text{otherwise.}
			\end{cases} 
		\end{gathered}
	\end{equation}
	Here, $H_{n-1} = \sum_{k=1}^{n-1}\frac{1}{k} \leq \log n$. Then,
	\begin{equation}
		\begin{gathered}
			\mathbb{E}[\|u_{\mathbf{S}}\mathbf{z}_{\mathbf{S}}\|_{2}^{2}] \leq C^{2}\mathbb{E}[\|\mathbf{z}_{\mathbf{S}}\|_{2}^{2}] = C^{2}\cdot 2H_{n-1} (n-1) \leq 2C^{2}n\log n,\\
			\|u_{S}\mathbf{z}_{S}\|_{2} \leq C\|\mathbf{z}_{S}\|_{2} = \frac{2CH_{n-1}}{n}\sqrt{n(n-s)n} \leq 2Cn^{\frac{1}{2}}\log n.
		\end{gathered}
	\end{equation}
\end{proof}

\begin{restatable}{lemma}{sumweights}
	For $\mathbf{z}_\mathbf{S}$ defined in \S\ref{sec:framework}, we always have
	\begin{equation} \label{eq:W1}
		\begin{gathered}
			\mathbb{E}[\mathbf{z}_{\mathbf{S}}] = \mathbf{W}^{\mathsf{T}}\mathbf{1}_{2^{n}-2} = (m_{1} - m_{n})\cdot\mathbf{1}_{n},
		\end{gathered}
	\end{equation}
	where $\mathbf{w}_{S}$ is the $S$-th column of $\mathbf{W}^{\mathsf{T}}$.
\end{restatable}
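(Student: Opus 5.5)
The plan is a direct computation of $\mathbb{E}[\mathbf{z}_{\mathbf{S}}]$ under the two-stage sampling of \S\ref{sec:framework}: draw the size $s$ with probability $q_s$, then draw $S$ uniformly among the $\binom{n}{s}$ subsets of that size. Each nonempty proper $S$ therefore has probability $q_s\binom{n}{s}^{-1}$. Since $\mathbf{z}_S=\frac{\binom{n}{s}}{q_s}\mathbf{w}_S$, the factors cancel and
\begin{equation}
	\mathbb{E}[\mathbf{z}_{\mathbf{S}}]=\sum_{\emptyset\subsetneq S\subsetneq[n]}\mathbf{w}_S=\mathbf{W}^{\mathsf{T}}\mathbf{1}_{2^n-2}.
\end{equation}
This gives the first equality, and it holds for every $\mathbf{q}$.

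For the second equality, I will fix a coordinate $i$ and group the sum by size $s\in[n-1]$. For a given $s$, there are $\binom{n-1}{s-1}$ subsets containing $i$, each contributing $p_s$, and $\binom{n-1}{s}$ subsets avoiding $i$, each contributing $-p_{s+1}$. Using $m_s=\binom{n-1}{s-1}p_s$ and $m_{s+1}=\binom{n-1}{s}p_{s+1}$, the $i$-th coordinate becomes
\begin{equation}
	\sum_{s=1}^{n-1}\left(m_s-m_{s+1}\right).
\end{equation}
This sum telescopes to $m_1-m_n$, which does not depend on $i$, so $\mathbf{W}^{\mathsf{T}}\mathbf{1}_{2^n-2}=(m_1-m_n)\mathbf{1}_n$.

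I don't expect a genuine obstacle. The only step needing care is the counting identity: checking that $\binom{n-1}{s}p_{s+1}$ matches the definition of $m_{s+1}$. It does, since $m_{s+1}=\binom{n-1}{(s+1)-1}p_{s+1}$. One should also note that $s$ ranges only over $[n-1]$ because $\emptyset$ and $[n]$ are excluded, and this range is exactly what makes the telescoping end at $m_1$ and $m_n$. The same computation is the specialization $f\equiv1$ of the argument in Lemma~\ref{lem:generalizelambda}, now carried out for general $m_s$ instead of $m_s=\frac1n$.
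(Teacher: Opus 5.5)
Your proof is correct, but it reaches the second equality by a different route from the paper. The paper never counts subsets. It notes that $\boldsymbol\phi(U)=\mathbf{0}_n$ for constant $U$, since every marginal contribution in Eq.~\eqref{eq:semi-value} vanishes. It then substitutes $U\equiv 1$ into the decomposition $\boldsymbol\phi=\boldsymbol\varphi+(p_nu_{[n]}-p_1u_{\emptyset})\mathbf{1}_n$ from \S\ref{sec:framework}. This yields $\mathbf{W}^{\mathsf{T}}\mathbf{1}_{2^n-2}=(p_1-p_n)\mathbf{1}_n=(m_1-m_n)\mathbf{1}_n$, because $m_1=p_1$ and $m_n=p_n$. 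The first equality is left as evident from $\mathbf{z}_S=\binom{n}{s}q_s^{-1}\mathbf{w}_S$, which is exactly your cancellation step.

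Your grouping by size and telescoping is essentially that decomposition unrolled by hand. It is self-contained and does not rely on the rewriting of Eq.~\eqref{eq:semi-value} into the $\mathbf{w}_S$ form. It also shows explicitly that the residual $m_1-m_n$ is the missing contribution of the excluded sets $\emptyset$ and $[n]$. Running your sum over $s=0,\dots,n$ with $m_0=m_{n+1}:=0$ telescopes to zero, which is exactly why $\mathbb{E}[\mathbf{z}^{\mathrm{MSR}}_{\mathbf{S}}]=\mathbf{0}_n$ for every semi-value in Appendix~\ref{app:bridges}. The paper's argument, for its part, is a one-liner that applies verbatim to any estimator that is linear in $U$ and unbiased for every $U$, with no counting redone; that is how it obtains the MSR claim.

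One small precision: your remark that the first equality holds ``for every $\mathbf{q}$'' needs $q_s>0$ at each size where $\mathbf{w}_S\neq\mathbf{0}$. Otherwise those subsets are never drawn and $\mathbf{z}_S$ is undefined. This is implicit in the paper and automatically satisfied by $\mathbf{q}^*$, which vanishes only when $m_s=m_{s+1}=0$, and then $\mathbf{w}_S=\mathbf{0}$ for $|S|=s$.
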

\begin{proof}
	If $U(S)$ is constant for all $S$, it is straightforward to see from Eq.~\eqref{eq:semi-value} that $\boldsymbol\phi(U) = \mathbf{0}_{n}$. This implies that $\mathbf{W}^{\mathsf{T}}\mathbf{1}_{2^{n}-2} = (m_{1}-m_{n})\cdot\mathbf{1}_{n}$.
\end{proof}

\improvedvariance*
\begin{proof}
	For symmetric semi-values,
	\begin{equation}
		\begin{gathered}
			m_{n}(u_{[n]}-\hat{\gamma}) - m_{1}(u_{\emptyset}-\hat{\gamma}) = m_{n}u_{[n]}-m_{1}u_{\emptyset}.
		\end{gathered}
	\end{equation}
	Let
	\begin{equation}
		\begin{gathered}
			\hat{\boldsymbol\varphi}\coloneq \frac{1}{T}\sum_{t=1}^{T} (u_{S_{t}}-\hat{\gamma})\cdot\mathbf{z}_{S_{t}},\\
			\boldsymbol\Psi \coloneq \sum_{t=1}^{T} \boldsymbol\psi_{t} \ \text{ where }\ 
			\boldsymbol\psi_{t} \coloneq u_{S_{t}}\mathbf{z}_{S_{t}} - \frac{1}{T}\sum_{r=1}^{T}u_{S_{r}}\cdot \mathbf{z}_{S_{t}}, \\ 
			\text{and }\  \overline{\boldsymbol\Psi} \coloneq \sum_{t=1}^{T}\overline{\boldsymbol\psi}_{t} \ \text{ where }\  \overline{\boldsymbol\psi}_{t} = u_{S_{t}}\mathbf{z}_{S_{t}} - \frac{1}{T-1}\sum_{r\not=t}u_{S_{r}}\cdot \mathbf{z}_{S_{t}}.
		\end{gathered}
	\end{equation}
	Then, we have
	\begin{equation}
		\begin{gathered}
			\frac{1}{T}\boldsymbol\Psi = \hat{\boldsymbol\varphi} \ \text{ and }\ \frac{1}{T}\mathbb{E}[\overline{\boldsymbol\Psi}] = \boldsymbol\varphi.
		\end{gathered}
	\end{equation}
	Besides,
	\begin{equation}
		\begin{gathered}
			\boldsymbol\psi_{t} - \overline{\boldsymbol\psi}_{t} = \frac{1}{T(T-1)}\sum_{r\not=t}u_{S_{r}} \cdot \mathbf{z}_{S_{t}} - \frac{1}{T} u_{S_{t}}\mathbf{z}_{S_t} = -\frac{1}{T}\overline{\boldsymbol\psi}_{t}.
		\end{gathered}
	\end{equation}
	Observe that
	\begin{equation}
		\begin{aligned}
			\mathbb{E}[\|\hat{\boldsymbol\phi}^{\mathrm{Adalina}}-\boldsymbol\phi\|_{2}^{2}] = \mathbb{E}[\|\hat{\boldsymbol\varphi} - \boldsymbol\varphi\|_{2}^{2}] = \frac{1}{T^{2}}\mathbb{E}[\|\boldsymbol\Psi - \overline{\boldsymbol\Psi} + \overline{\boldsymbol\Psi}-T\cdot \boldsymbol\varphi\|_{2}^{2}] = \frac{1}{T^{2}}\mathbb{E}\left[\left\|-\frac{1}{T}\overline{\boldsymbol\Psi} + \overline{\boldsymbol\Psi} - T\cdot \boldsymbol\varphi\right\|_{2}^{2}\right].
		\end{aligned}
	\end{equation}
	Specifically,
	\begin{equation}
		\begin{gathered}
			\mathbb{E}\left[\left\|-\frac{1}{T}\overline{\boldsymbol\Psi} + \overline{\boldsymbol\Psi} - T\cdot \boldsymbol\varphi\right\|_{2}^{2}\right] = \frac{1}{T^{2}}\mathbb{E}[\|\overline{\boldsymbol\Psi}\|_{2}^{2}] + \mathbb{E}[\|\overline{\boldsymbol\Psi} - T\cdot\boldsymbol\varphi\|_{2}^{2}] - \frac{2}{T}\mathbb{E}[\langle \overline{\boldsymbol\Psi}, \overline{\boldsymbol\Psi} - T\cdot \boldsymbol\varphi \rangle].
		\end{gathered}
	\end{equation}
	Since $\mathbb{E}[\overline{\Psi}] = T\cdot \boldsymbol\varphi$, there is $\mathbb{E}[\langle \overline{\boldsymbol\Psi}, \overline{\boldsymbol\Psi} - T\cdot \boldsymbol\varphi \rangle] = \mathbb{E}[\|\overline{\boldsymbol\Psi} - T\cdot \boldsymbol\varphi\|_{2}^{2}]$. Consequently,
	\begin{equation}
		\begin{aligned}
			\mathbb{E}[\|\hat{\boldsymbol\phi}^{\mathrm{Adalina}}-\boldsymbol\phi\|_{2}^{2}] = \frac{1}{T^{^{4}}}\mathbb{E}[\|\overline{\boldsymbol\Psi}\|_{2}^{2}] + \frac{T-2}{T^{3}}\mathbb{E}[\|\overline{\boldsymbol\Psi} - T\cdot \boldsymbol\varphi\|_{2}^{2}] 
			\leq \frac{1}{T^{4}}\mathbb{E}[\|\overline{\boldsymbol\Psi}\|_{2}^{2}] + \frac{1}{T^{2}}\mathbb{E}[\|\overline{\boldsymbol\Psi} - T\cdot \boldsymbol\varphi\|_{2}^{2}].
		\end{aligned}
	\end{equation}
	Recall that $\|\mathbf{z}_{S}\|_{2}^{2} = nD^{*}$ for every $S$.
	For the first term,
	\begin{equation}
		\begin{gathered}
			\frac{1}{T^{2}}\|\overline{\boldsymbol\Psi}\| \leq \frac{1}{T}\|\overline{\boldsymbol\psi}_{t}\| \leq \frac{1}{T}\sqrt{4nD^{*}\|U\|_{\infty}^{2}}, \ \text{ and thus } \ \frac{1}{T^{4}}\mathbb{E}[\|\overline{\boldsymbol\Psi}\|_{2}^{2}] \leq \frac{4nD^{*}\|U\|_{\infty}^{2}}{T^{2}}.
		\end{gathered}
	\end{equation}
	For the other term
	\begin{equation}
		\begin{gathered}
			\mathbb{E}[\|\overline{\boldsymbol\Psi} - T\cdot \boldsymbol\varphi\|_{2}^{2}] = T\cdot \mathbb{E}[\|\overline{\boldsymbol\psi}_{t} - \boldsymbol\varphi\|_{2}^{2}] + T(T-1)\cdot\mathbb{E}[\langle \overline{\boldsymbol\psi}_{t_{1}} - \boldsymbol\varphi, \overline{\boldsymbol\psi}_{t_{2}} - \boldsymbol\varphi \rangle]\ \text{ where }\ t_{1} \not= t_{2}.			
		\end{gathered}
	\end{equation}
	Since
	\begin{equation}
		\begin{gathered}
			\overline{\boldsymbol\psi}_{t} - \boldsymbol\varphi = (u_{S_{t}} - \gamma^{*})\mathbf{z}_{S_{t}} + (\gamma^{*}-\frac{1}{T}\sum_{r\not= t}u_{S_{r}})\cdot \mathbf{z}_{S_{t}} - \boldsymbol\varphi  \ \text{ and }\  \mathbb{E}[\boldsymbol\psi_{t}] = \boldsymbol\varphi,
		\end{gathered}
	\end{equation}
	we have
	\begin{equation}
		\begin{aligned}
			\mathbb{E}[\|\overline{\boldsymbol\psi}_{t} - \boldsymbol\varphi\|_{2}^{2}] = nD^{*}\mathbb{E}[(u_{\mathbf{S}}-\gamma^{*})^{2}] + \frac{nD^{*}}{T}\mathrm{Var}[u_{\mathbf{S}}] - \|\boldsymbol\varphi\|_{2}^{2}  
			\leq nD^{*}\mathbb{E}[(u_{\mathbf{S}}-\gamma^{*})^{2}] + \frac{nD^{*}}{T}\|U\|_{\infty}^{2} - \|\boldsymbol\varphi\|_{2}^{2}.
		\end{aligned}
	\end{equation}
	By Eq.~\ref{eq:W1}, for symmetric semi-values, there is 
	\begin{equation}
		\begin{gathered}
			\mathbb{E}[\mathbf{z}_{\mathbf{S}}] = \mathbf{0}_{n}.
		\end{gathered}
	\end{equation}
	As a result,
	\begin{equation}
		\begin{gathered}
			\mathbb{E}[\langle \overline{\boldsymbol\psi}_{t_{1}} - \boldsymbol\varphi, \overline{\boldsymbol\psi}_{t_{2}} - \boldsymbol\varphi \rangle] = \frac{1}{(T-1)^{2}}\|\boldsymbol\varphi\|_{2}^{2} \leq \frac{nD^{*}\|U\|_{\infty}^{2}}{(T-1)^{2}}.
		\end{gathered}
	\end{equation}
	Combining all the results yields
	\begin{equation}
		\begin{gathered}
			\mathbb{E}[\|\hat{\boldsymbol\phi}^{\mathrm{Adalina}}-\boldsymbol\phi\|_{2}^{2}] \leq \frac{1}{T}\left( nD^{*}\mathbb{E}[(u_{\mathbf{S}}-\gamma^{*})^{2}] - \|\boldsymbol\varphi\|_{2}^{2}\right) + \frac{6nD^{*}\|U\|_{\infty}^{2}}{T(T-1)} = \mathbb{E}[\|\hat{\boldsymbol\phi}^{\gamma^{*}} - \boldsymbol\phi\|_{2}^{2}] + \frac{6nD^{*}\|U\|_{\infty}^{2}}{T(T-1)}.
		\end{gathered}
	\end{equation}
	
	Next, we prove the asymptotic complexity of $\hat{\boldsymbol\phi}^{\mathrm{Adalina}}$. Using Hoeffding's inequality, with probability at least $1-\frac{\delta}{2}$, there is
	\begin{equation}
		\begin{gathered}
			|\hat{\gamma}-\gamma^{*}| < \sqrt{\frac{2C^{2}}{T}\log\frac{4}{\delta}}.
		\end{gathered}
	\end{equation}
	Meanwhile, according to Theorem~\ref{thm:complexity of our framework}, with probability at least $1-\frac{\delta}{2}$, 
	\begin{equation}
		\begin{gathered}
			\|\hat{\boldsymbol\phi}^{\gamma^{*}} - \boldsymbol\phi\| <  \sqrt{\frac{16nD^{*}C^{2}}{T}\log\frac{4}{\delta}}.
		\end{gathered}
	\end{equation}
	Here, the induced constraint $\epsilon \leq 3C\sqrt{nD^{*}}$ is equivalent to $T\geq \frac{16}{9}\log\frac{4}{\delta}$.
	Therefore, with probability at least $1-\delta$, we have both inequalities. Then,
	\begin{equation}
		\begin{gathered}
			\|\hat{\boldsymbol\phi}^{\mathrm{Adalina}}-\boldsymbol\phi\| \leq \|\hat{\boldsymbol\phi}^{\mathrm{Adalina}} - \hat{\boldsymbol\phi}^{\gamma^{*}}\| + \|\hat{\boldsymbol\phi}^{\gamma^{*}} - \boldsymbol\phi\| < \sqrt{\frac{2nD^{*}C^{2}}{T}\log\frac{4}{\delta}} + \sqrt{\frac{16nD^{*}C^{2}}{T}\log\frac{4}{\delta}} \leq \sqrt{\frac{36nD^{*}C^{2}}{T}\log\frac{4}{\delta}}.
		\end{gathered}
	\end{equation}
	As a result,
	\begin{equation}
		\begin{gathered}
			\mathbb{P}\left( \|\hat{\boldsymbol\phi}^{\mathrm{Adalina}}-\boldsymbol\phi\|\geq \sqrt{\frac{36nD^{*}C^{2}}{T}\log\frac{4}{\delta}} \right)\leq \delta.
		\end{gathered}
	\end{equation}
	Putting $\sqrt{\frac{36nD^{*}C^{2}}{T}\log\frac{4}{\delta}}\leq \epsilon$ yields $T \geq \frac{36nD^{*}C^{2}}{\epsilon^{2}}\log\frac{4}{\delta}$. Setting $\frac{36nD^{*}C^{2}}{\epsilon^{2}}\log\frac{4}{\delta}\geq\frac{16}{9}\log\frac{4}{\delta}$ yields $\epsilon\leq \frac{18C\sqrt{nD^{*}}}{4}$. Since
	\begin{equation}
		\begin{gathered}
			\sum_{s=1}^{n-1}\sqrt{n\left( \frac{m_{s}^{2}}{s} + \frac{m_{s+1}^{2}}{n-s} \right)} \geq \sum_{s=1}^{n-1}\sqrt{m_{s}^{2}+m_{s+1}^{2}}\geq \frac{1}{\sqrt{2}}\sum_{s=1}^{n-1}(m_{s}+m_{s+1}) \geq \frac{1}{\sqrt{2}},
		\end{gathered}
	\end{equation}
	we have $D^{*} \geq \frac{1}{2}$. Therefore, the constraint on $\epsilon$ can be relaxed as $\epsilon\leq 2C$.
	
\end{proof}

\section{Bridges} \label{app:bridges}
In this appendix, we discuss how our framework bridges the recent approaches \cite{li2024one,chen2025a,fumagalli2024shap,witter2025regressionadjusted}.

\subsection{OFA \cite{li2024one}}
The OFA framework makes use of
\begin{equation}
	\begin{gathered}
		\phi_{i} = \sum_{s=1}^{n} m_{s}\cdot \left(
		\phi_{i,s}^{+} - \phi_{i,s-1}^{-} \right)
		\ \text{ where }\ \phi_{i,s}^{+} = \underset{\substack{i \in \mathbf{S}\\|\mathbf{S}|=s}}{\mathbb{E}}[U(\mathbf{S})] \ \text{ and }\ \phi_{i,s-1}^{-} = \underset{\substack{i \not\in \mathbf{S}\\ |\mathbf{S}|=s-1}}{\mathbb{E}}[U(\mathbf{S})].
	\end{gathered}
\end{equation}
Here, each expectations is taken w.r.t. the corresponding uniform distribution. In light of this formula, OFA is proposed to approximate $\{\phi_{i,s}^{+}, \phi_{i,s}^{-}\}_{s=1}^{n-1}$ for every $i \in [n]$. It also employs a sampling distribution $\mathbf{q} \in \mathbb{R}^{n-1}$, where $q_{s}$ denotes the probability of sampling a subset of size $s$ from $2^{[n]}$. Given a sequence of independent sampled subsets $\{S_{t}\}_{t=1}^{T}$, OFA proceeds as follows:
\begin{equation}
	\begin{gathered}
		\hat{\phi}_{i,s}^{+} \coloneq \frac{\sum_{t=1}^{T}U(S_{t})\cdot\pred{i \in S_{t}, s_{t}=s}}{T_{i,s}^{+}} \ \text{ with }\ T_{i,s}^{+} \coloneq \sum_{t=1}^{T}\pred{i \in S_{t}, s_{t}=s},\\
		\hat{\phi}_{i,s}^{-} \coloneq \frac{\sum_{t=1}^{T}U(S_{t})\cdot\pred{i \not\in S_{t}, s_{t}=s}}{T_{i,s}^{-}} \ \text{ with }\ T_{i,s}^{-} \coloneq \sum_{t=1}^{T}\pred{i \not\in S_{t}, s_{t}=s}.
	\end{gathered}
\end{equation}
Then,
\begin{equation}
	\begin{gathered}
		\hat{\phi}_{i}^{\mathrm{OFA}} \coloneq \sum_{s=1}^{n-1}m_{s}\cdot\hat{\phi}_{i,s}^{+} - \sum_{s=1}^{n-1}m_{s+1}\cdot\hat{\phi}_{i,s}^{-} + (m_{n}\cdot u_{[n]} - m_{1}\cdot u_{\emptyset}).
	\end{gathered}
\end{equation}
In particular,
\begin{equation}
	\begin{gathered}
		\mathbb{E}\left[\frac{T_{i,s}^{+}}{T}\right] = \frac{s\cdot q_{s}}{n} \ \text{ and }\ \mathbb{E}\left[\frac{T_{i,s}^{-}}{T}\right] = \frac{(n-s)\cdot q_{s}}{n}.
	\end{gathered}
\end{equation}
To reduce OFA to a $\Theta(n)$-space version, we first set $\frac{T_{i,s}^{+}}{T} = \frac{s\cdot q_{s}}{n}$ and $\frac{T_{i,s}^{-}}{T} = \frac{(n-s)\cdot q_{s}}{n}$. Then,
\begin{equation}
	\begin{gathered}
		\hat{\phi}_{i,s}^{+} = \frac{1}{T} \cdot\frac{T}{T_{i,s}^{+}}\cdot \sum_{t=1}^{T}U(S_{t})\cdot\pred{i \in S_{t}, s_{t}=s} = \frac{1}{T}\cdot \sum_{t=1}^{T} \frac{n}{s\cdot q_{s}} U(S_{t})\cdot\pred{i \in S_{t}, s_{t}=s},\\
		\hat{\phi}_{i,s}^{-} = \frac{1}{T} \cdot\frac{T}{T_{i,s}^{-}}\cdot \sum_{t=1}^{T}U(S_{t})\cdot\pred{i \not\in S_{t}, s_{t}=s} = \frac{1}{T}\cdot \sum_{t=1}^{T} \frac{n}{(n-s)\cdot q_{s}} U(S_{t})\cdot\pred{i \not\in S_{t}, s_{t}=s}.
	\end{gathered}
\end{equation}
Next, we have
\begin{equation}
	\begin{gathered}
		\hat{\phi}_{i}^{\mathrm{OFA}} = \frac{1}{T}\cdot\sum_{t=1}^{T} U(S_{t})\cdot\left( \frac{n\cdot m_{s_{t}}}{s_{t}\cdot q_{s_{t}}}\pred{i \in S_{t}} - \frac{n\cdot m_{s_{t}+1}}{(n-s_{t})\cdot q_{s_{t}}}\pred{i\not\in S_{t}} \right) + (m_{n}\cdot u_{[n]} - m_{1}\cdot u_{\emptyset}).
	\end{gathered}
\end{equation}
This exactly recovers our $\hat{\boldsymbol\phi}$ in Eq.~\eqref{eq:unbiased} when Eq.~\eqref{eq: from W to Z} is taken into account.

\subsection{KernelSHAP \cite{chen2025a}}
We demonstrate how their unified formula for unbiased KernelSHAP can be simplified to fit into our framework.
Let $\mathbf{A} \in \mathbb{R}^{(2^{n}-2)\times n}$ be a binary matrix such that $A_{S,i} = 1$ if and only if $i \in S$. Let $\mathbf{M} \in \mathbb{R}^{(2^{n}-2)\times(2^{n}-2)}$ be a diagonal matrix such that $M_{S,S} = \frac{n-1}{(n-s)s\binom{n}{s}}$.
Additionally, let $\mathbf{Q}$ be any $n\times (n-1)$ matrix such that $\mathbf{Q}^{\mathsf{T}}\mathbf{Q} = \mathbf{I}$ and $\mathbf{Q}^{\mathsf{T}}\mathbf{1}_{n}=\mathbf{0}_{n}$. 
Given a sequence of sampled subsets $\{S_{t}\}_{t=1}^{T}$, let $\mathbf{S} \in \mathbb{R}^{T\times(2^{n}-2)}$ be a sketching matrix such that $S_{t,S_{t}} = \sqrt{\frac{\binom{n}{s_{t}}}{T\cdot q_{s_{t}}}}$ and $0$ otherwise.

The unified formula of unbiased kernelSHAP is given as
\begin{equation}
	\begin{gathered}
		\hat{\boldsymbol\phi}^{\mathrm{kernel}} = \mathbf{Q}\mathbf{U}^{\mathsf{T}}\mathbf{S}^{\mathsf{T}}\mathbf{S}\mathbf{b}_{\lambda} + \alpha\mathbf{1}_{n}\\
		\text{where }\ \mathbf{U} \coloneq \sqrt{\frac{n}{n-1}}\sqrt{\mathbf{M}}\mathbf{A}\mathbf{Q},\ \  \alpha \coloneq \frac{u_{[n]}-u_{\emptyset}}{n},\ \text{ and }\  \mathbf{b}_{\lambda} \coloneq \sqrt{\frac{n}{n-1}}(\sqrt{\mathbf{M}}\mathbf{u} - \lambda\sqrt{\mathbf{M}}\mathbf{A}\mathbf{1}_{n}).
	\end{gathered}
\end{equation}
Here, $\lambda \in \mathbb{R}$ is arbitrary.
Observe that
\begin{equation}
	\begin{gathered}
		\mathbf{Q}\mathbf{U}^{\mathsf{T}}\mathbf{S}^{\mathsf{T}}\mathbf{S}\mathbf{b}_{\lambda} = \frac{n}{n-1} \mathbf{Q}\mathbf{Q}^{\mathsf{T}}\mathbf{A}^{\mathsf{T}}\sqrt{\mathbf{M}}\mathbf{S}^{\mathsf{T}}\mathbf{S}\sqrt{\mathbf{M}}(\mathbf{u} - \lambda\mathbf{A}\mathbf{1}_{n}).
	\end{gathered}
\end{equation}
For convenience, write $\mathbf{v} = \mathbf{u} - \lambda\mathbf{A}\mathbf{1}_{n}$.
Specifically,
\begin{equation}
	\begin{gathered} \mathbf{QQ}^{\mathsf{T}}\mathbf{A}^{\mathsf{T}}\sqrt{\mathbf{M}}\mathbf{S}^{\mathsf{T}}\mathbf{S}\sqrt{\mathbf{M}}\mathbf{v} = \frac{1}{T}\sum_{t=1}^{T} \frac{\binom{n}{s_{t}}}{ q_{s_{t}}}\cdot \frac{n-1}{(n-s_{t})s_{t}\binom{n}{s_{t}}} \cdot v_{S}\cdot \mathbf{QQ}^{\mathsf{T}}\mathbf{a}_{S_{t}},
	\end{gathered}
\end{equation}
where $\mathbf{a}_{S}$ denotes the $S$-column of $\mathbf{A}^{\mathsf{T}}$.
Since $\mathbf{QQ}^{\mathsf{T}} = \mathbf{I} - \frac{1}{n}\mathbf{J}$, where $\mathbf{J}$ denotes the all-one matrix,
we have
\begin{equation}
	\begin{gathered}
		\left( \mathbf{QQ}^{\mathsf{T}}\mathbf{a}_{S} \right)_{i} = \frac{n-s}{n}\pred{i\in S} - \frac{s}{n}\pred{i\not\in S}.
	\end{gathered}
\end{equation}
Therefore,
\begin{equation}
	\begin{gathered}
		\mathbf{Q}\mathbf{U}^{\mathsf{T}}\mathbf{S}^{\mathsf{T}}\mathbf{S}\mathbf{b}_{\lambda} = \frac{1}{T}\sum_{t=1}^{T}v_{S_{t}}\mathbf{z}_{S_{t}} = \frac{1}{T}\sum_{t=1}^{T} (u_{S_{t}}-\lambda s_{t})\mathbf{z}_{S_{t}}.
	\end{gathered}
\end{equation}

\subsection{SHAP-IQ \cite{fumagalli2024shap}}
Given a sequence of sampled subsets $\{S_{t}\}_{t=1}^{T}$, the estimate produced by SHAP-IQ is
\begin{equation}
	\begin{gathered}
		\hat{\phi}^{\mathrm{IQ}}_{i} \coloneq \frac{2H_{n-1}}{T}\sum_{t=1}^{T}(u_{S_{t}}-u_{\emptyset})\cdot((n-s_{t})m_{s_{t}}\pred{i\in S_{t}} - s_{t}m_{s_{t}+1}\pred{i\not\in S_{t}}) + m_{n}\cdot(u_{[n]}-u_{\emptyset}),
	\end{gathered}
\end{equation}
where $H = \sum_{k=1}^{n-1}\frac{1}{k}$.
For SHAP-IQ,
\begin{equation}
	\begin{gathered}
		q_{s} = \frac{1}{2H_{n-1}}\cdot \frac{n}{s(n-s)}.
	\end{gathered}
\end{equation}
Then,
\begin{equation}
	\begin{gathered}
		\hat{\phi}_{i}^{\mathrm{IQ}} = \frac{1}{T}\sum_{t=1}^{T} (u_{S_{t}}-u_{\emptyset})\cdot \left( \frac{n}{q_{s_{t}}}\cdot\frac{m_{s_{t}}}{s_{t}}\pred{i\in S_{t}} - \frac{n}{q_{s_{t}}}\cdot\frac{m_{s_{t}+1}}{n-s_{t}}\pred{i\not\in S_{t}}\right) + m_{n}\cdot(u_{[n]}-u_{\emptyset}).
	\end{gathered}
\end{equation}
Consequently,
\begin{equation}
	\begin{gathered}
		\hat{\boldsymbol\phi}^{\mathrm{IQ}} = \frac{1}{T}\sum_{t=1}^{T}(u_{S_{t}}-u_{\emptyset})\mathbf{z}_{S_{t}} + m_{n}\cdot(u_{[n]} - u_{\emptyset}).
	\end{gathered}
\end{equation}

\begin{algorithm}[t]
	\DontPrintSemicolon
	\caption{Adalina-All}
	\label{alg:adalin-w}
	\KwIn{Weight vector $\mathbf{m} \in \mathbb{R}^{n}$ for semi-value $\boldsymbol\phi$, total number of samples $T$}
	\KwOut{Estimate $\hat{\boldsymbol\phi}^{\mathrm{Adalina-All}}$}
	
	Compute the sampling distribution $\mathbf{q} \in \mathbb{R}^{n+1}$ for $0\leq s\leq n$ such that $q_{s} \propto \sqrt{\frac{m_{s}^{2}}{s}+\frac{m_{s+1}^{2}}{n-s}}$ \tcp*{$\frac{x}{0}\coloneq 0$}
	
	Initialize $\hat{\boldsymbol\phi}, \hat{\mathbf{v}} \gets \mathbf{0}_{n},\ \hat{\gamma} \gets 0$ 
	
	\For{$t = 1, 2,\dots, T$}{
		Sample a subset size $s$ with probability $q_{s}$
		
		Sample a subset $S$ of size $s$ uniformly from $2^{[n]}$
		
		$\hat{\boldsymbol\phi}\gets \frac{t-1}{t}\cdot\hat{\boldsymbol\phi} + \frac{1}{t}\cdot u_{S}\mathbf{z}_{S}^{\mathrm{MSR}}$
		
		$\hat{\mathbf{v}} \gets \frac{t-1}{t}\cdot\hat{\mathbf{v}} + \frac{1}{t}\cdot \mathbf{z}_{S}^{\mathrm{MSR}}$
		
		$\hat{\gamma} \gets \frac{t-1}{t}\cdot\hat{\gamma} + \frac{1}{t}\cdot u_{S}$
	}
	$\hat{\boldsymbol\phi}^{\mathrm{Adalina-All}} \gets \hat{\boldsymbol\phi} - \hat{\gamma}\hat{\mathbf{v}}$ 
\end{algorithm}

\subsection{Regression-Adjusted Approach \cite{witter2025regressionadjusted}} \label{app:regression-adjusted}
The sampling distribution $\overline{\mathbf{q}}^{\mathrm{MSR}}\in\mathbb{R}^{n+1}$ used in this approach can be expressed as
\begin{equation}
	\begin{gathered}
		\overline{q}_{s}^{\mathrm{MSR}}\binom{n}{s}^{-1} \propto \sqrt{p_{s+1}^{2}\left( 1-\frac{s}{n}\right) + p_{s}^{2}\frac{s}{n}} \ \text{ for every }\ 0\leq s\leq n.
	\end{gathered}
\end{equation}
where $p_{0}$ and $p_{n+1}$ are arbitrary. 
Observe that
\begin{equation}
	\begin{gathered}
		p_{s+1}^{2}\cdot\left( 1-\frac{s}{n} \right) + p_{s}^{2}\cdot\frac{s}{n} = m_{s+1}^{2}\cdot\binom{n-1}{s}^{-1}\binom{n}{s}^{-1} + m_{s}^{2}\cdot\binom{n-1}{s-1}^{-1}\binom{n}{s}^{-1}.
	\end{gathered}
\end{equation}
Then,
\begin{equation}
	\begin{gathered}
		\binom{n}{s} \sqrt{p_{s+1}^{2}\cdot\left( 1-\frac{s}{n}\right) + p_{s}^{2}\cdot\frac{s}{n}} = \sqrt{n\cdot\left( \frac{m_{s}^{2}}{s} + \frac{m_{s+1}^{2}}{n-s} \right)}.
	\end{gathered}
\end{equation}
Here, the convention is $\frac{x}{0} \coloneq 0$. Eventually, we have
\begin{equation}
	\begin{gathered}
		\overline{q}_{s}^{\mathrm{MSR}} \propto  \sqrt{\frac{m_{s}^{2}}{s} + \frac{m_{s+1}^{2}}{n-s}} \ \text{ for every }\ 0\leq s\leq n.
	\end{gathered}
\end{equation} 
It is clear that $\overline{\mathbf{q}}^{\mathrm{MSR}}$ coincides with $\mathbf{q}^{*}$, except that $\overline{\mathbf{q}}^{\mathrm{MSR}}$ includes $[n]$ and $\emptyset$ in the sampling pool.

Let
\begin{equation}
	\begin{gathered}
		D^{\mathrm{MSR}} \coloneq \sum_{s=0}^{n} \frac{n}{\overline{q}_{s}^{\mathrm{MSR}}}\left( \frac{m_{s}^{2}}{s} + \frac{m_{s+1}^{2}}{n-s} \right).
	\end{gathered}
\end{equation}

Then,
\begin{equation}
	\begin{gathered}
		\left( D^{\mathrm{MSR}} \right)^{\frac{1}{2}} = \sum_{s=0}^{n}\sqrt{n\cdot\left( \frac{m_{s}^{2}}{s} + \frac{m_{s+1}^{2}}{n-s} \right)} = m_{1} + \left( D^{*} \right)^{\frac{1}{2}} + m_{n},
	\end{gathered}
\end{equation}
which leads to $(D^{*})^{\frac{1}{2}} \leq (D^{\mathrm{MSR}})^{\frac{1}{2}} \leq 2 + (D^{*})^{\frac{1}{2}}$. It indicates that $D^{\mathrm{MSR}} \in O(1)$ if and only if $D^{*} \in O(1)$. Therefore, according to \citet[Proposition 4]{li2024one}, $D^{\mathrm{MSR}} \in O(1)$ for Beta Shapley values and weighted Banzhaf values. 

Under the use of $\overline{\mathbf{q}}^{\mathrm{MSR}}$, $\mathbf{z}_{S}^{\mathrm{MSR}} = \mathbf{z}_{S}$ with $q_{s} = \overline{q}_{s}^{\mathrm{MSR}}$ for all $\emptyset \subsetneq S \subsetneq [n]$, while $\mathbf{z}_{\emptyset}^{\mathrm{MSR}} = -\frac{m_{1}}{\overline{q}_{0}^{\mathrm{MSR}}}\mathbf{1}_{n}$ and $\mathbf{z}_{[n]}^{\mathrm{MSR}} = \frac{m_{n}}{\overline{q}_{n}^{\mathrm{MSR}}}\mathbf{1}_{n}$. In this case, one can verify that $\|\mathbf{z}_{S}\|_{2}^{2} = nD^{\mathrm{MSR}}$ for every $S\subseteq  [n]$. Moreover,
\begin{equation}
	\begin{gathered}
		\mathbb{E}[\mathbf{z}_{S}^{\mathrm{MSR}}] = \mathbf{0}_{n}
	\end{gathered}
\end{equation}
for every semi-value, which is one of the keys to prove Theorem~\ref{thm:improvedvariance}. 
As a result, Theorems~\ref{thm:complexity of our framework} and~\ref{thm:improvedvariance} continue to hold when using $\overline{\mathbf{q}}^{\mathrm{MSR}}$, where $D^{*}$ and $\|\boldsymbol{\varphi}\|_{2}^{2}$ are replaced by $D^{\mathrm{MSR}}$ and $\|\boldsymbol{\phi}\|_{2}^{2}$, respectively, and the constraint on symmetric semi-values is removed. After all, the corresponding estimate is
\begin{equation}
	\begin{gathered}
		\hat{\boldsymbol\phi}^{\mathrm{MSR}} \coloneq \frac{1}{T}\sum_{t=1}^{T}u_{S_{t}}\mathbf{z}_{S_{t}}^{\mathrm{MSR}}.
	\end{gathered}
\end{equation}
Its adaptive version is given by
\begin{equation}
	\begin{gathered}
		\hat{\boldsymbol\phi}^{\mathrm{MSR-adaptive}} \coloneq  \frac{1}{T}\sum_{t=1}^{T} (u_{S_{t}}-\hat{\gamma})\cdot\mathbf{z}_{S_{t}}^{\mathrm{MSR}} \ \text{ where }\ \hat{\gamma} \coloneq \frac{1}{T}\sum_{t=1}^{T}u_{S_{t}}.
	\end{gathered}
\end{equation}
The corresponding procedure is presented in Algorithm~\ref{alg:adalin-w}.

\begin{table}[t]
	\centering
	\caption{Summary of the datasets used.}
	\label{tab:sum}
	\begin{tabular}{lrrllcr}
		\toprule
		Dataset & \#Instances & \#Features & Source & Task&\#Classes&Depth\\ \midrule
		FOTP & $ 6,118 $ & $ 51 $ & \url{https://openml.org/d/1475}& classification&$ 6 $& $20$\\
		GPSP & $ 9,873 $ & $ 32 $ & \url{https://openml.org/d/4538}&classification&$ 5 $& $10$\\
		MinibooNE & $ 130,064 $ & $ 50 $ & \url{https://openml.org/d/41150}&classification&$ 2 $& $20$\\
		philippine & $ 5,832 $ & $ 308 $ & \url{https://openml.org/d/41145}&classification&$ 2 $&$15$\\
		spambase & $ 4,601 $ & $ 57 $ & \url{https://openml.org/d/44}&classification&$ 2 $&$15$\\
		jannis & $ 83,733 $ & $ 54 $& \url{https://openml.org/d/41168}&classification&4&$40$\\
		har & $ 10,299 $ & $ 561 $& \url{https://openml.org/d/1478}&classification&6&$20$\\
		Fashion-MNIST & $ 70,000 $ & $ 784 $& \url{https://openml.org/d/40996}&classification&10&$30$\\
		CIFAR10\_small & $ 20,000 $ & $ 3,072 $& \url{https://openml.org/d/40926}&classification&10&$20$\\
		superconduct & $ 21,263 $ & $ 81 $& \url{https://openml.org/d/43174}&regression&-&$10$\\
		wave\_energy & $ 72,000 $ & $ 48 $& \url{https://openml.org/d/44975}&regression&-&$30$\\
		BT & $ 583,250 $ & $ 77 $& \url{https://openml.org/d/4549}&regression&-&$20$\\
		\bottomrule
	\end{tabular}
\end{table}

\begin{figure*}[t]
	\centering
	\begin{tabular}{ccc}
		\multicolumn{3}{c}{\includegraphics[width=0.7\linewidth]{legend.pdf}} \\
		\includegraphics[width=0.3\linewidth]{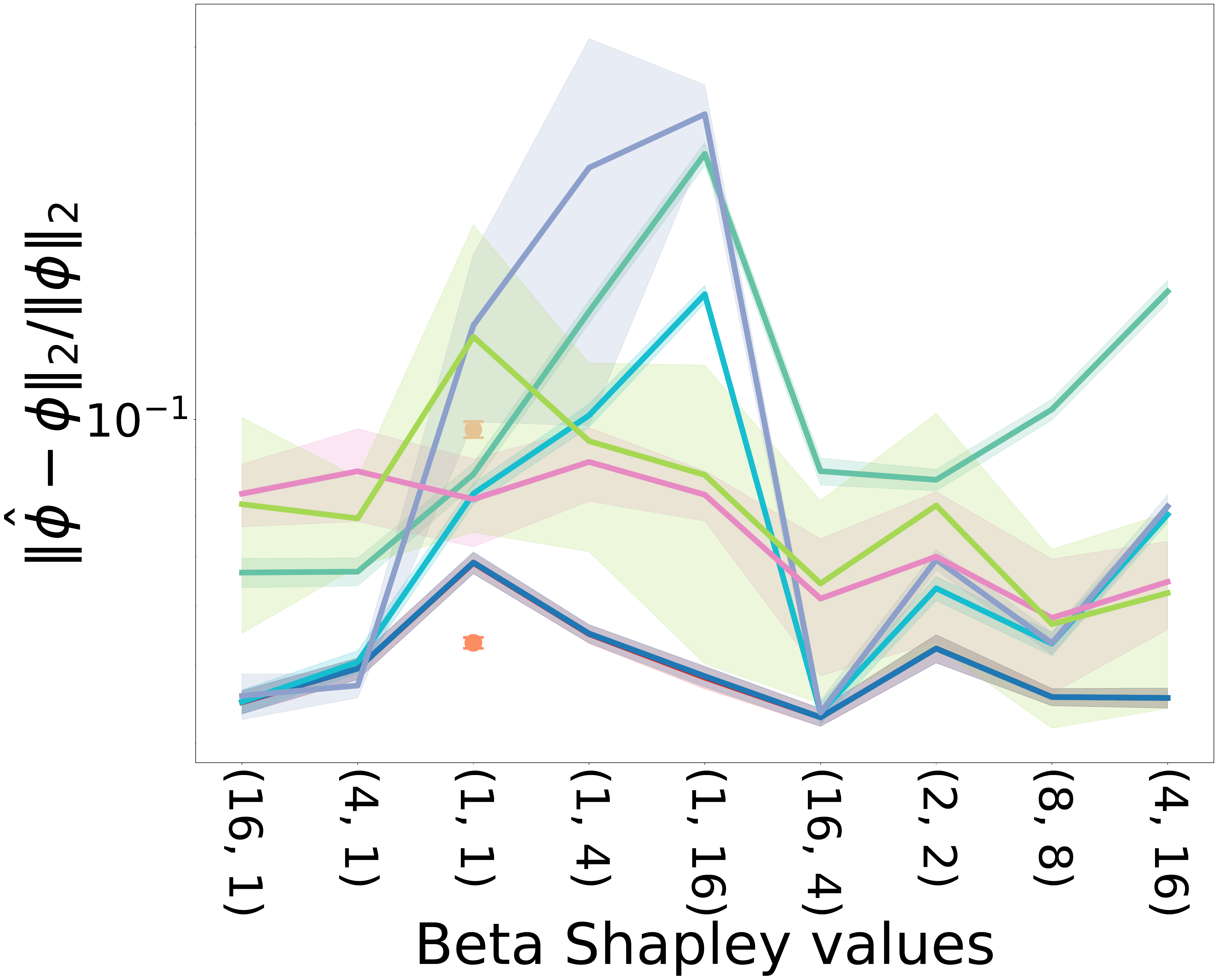} & \includegraphics[width=0.3\linewidth]{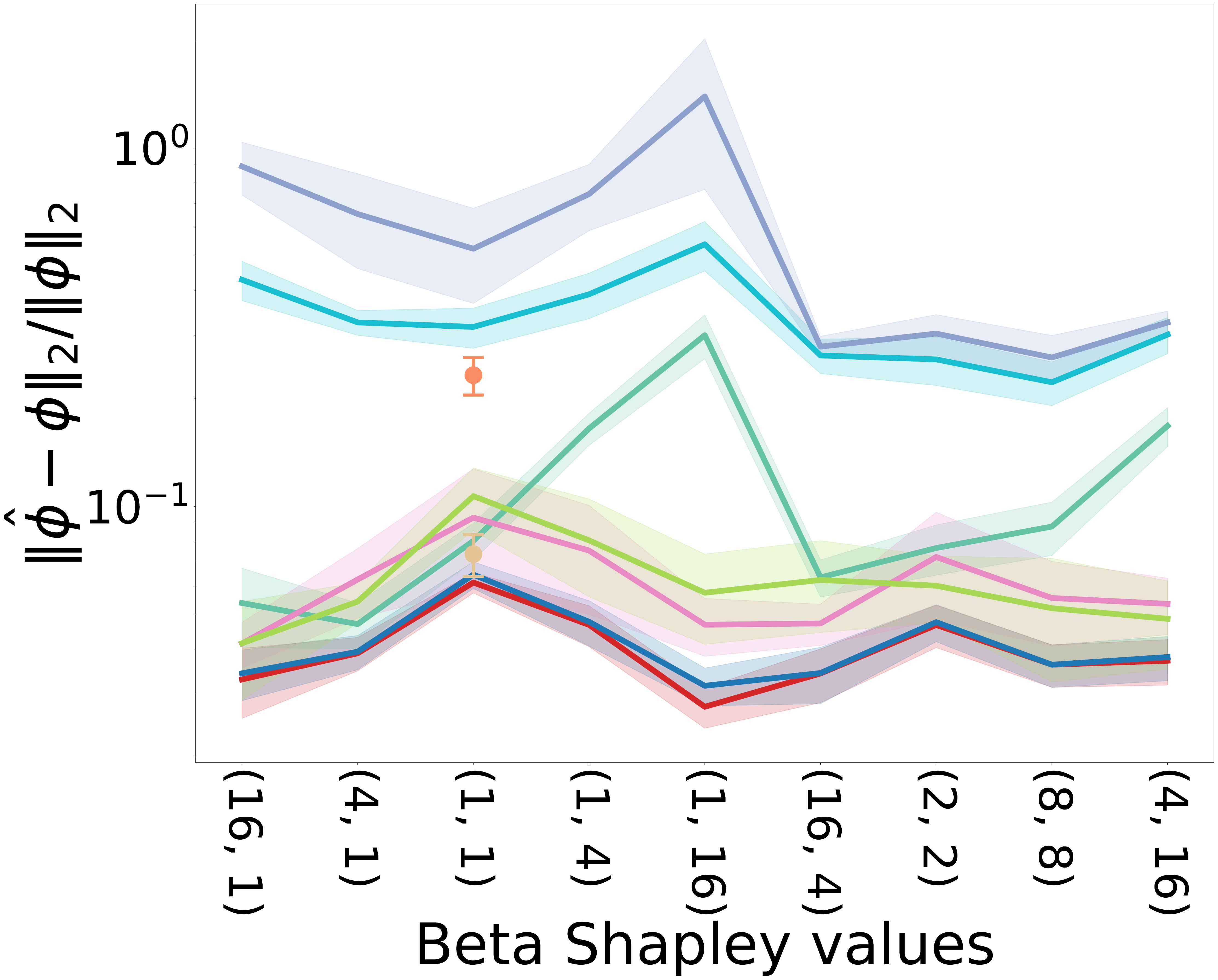} &
		\includegraphics[width=0.3\linewidth]{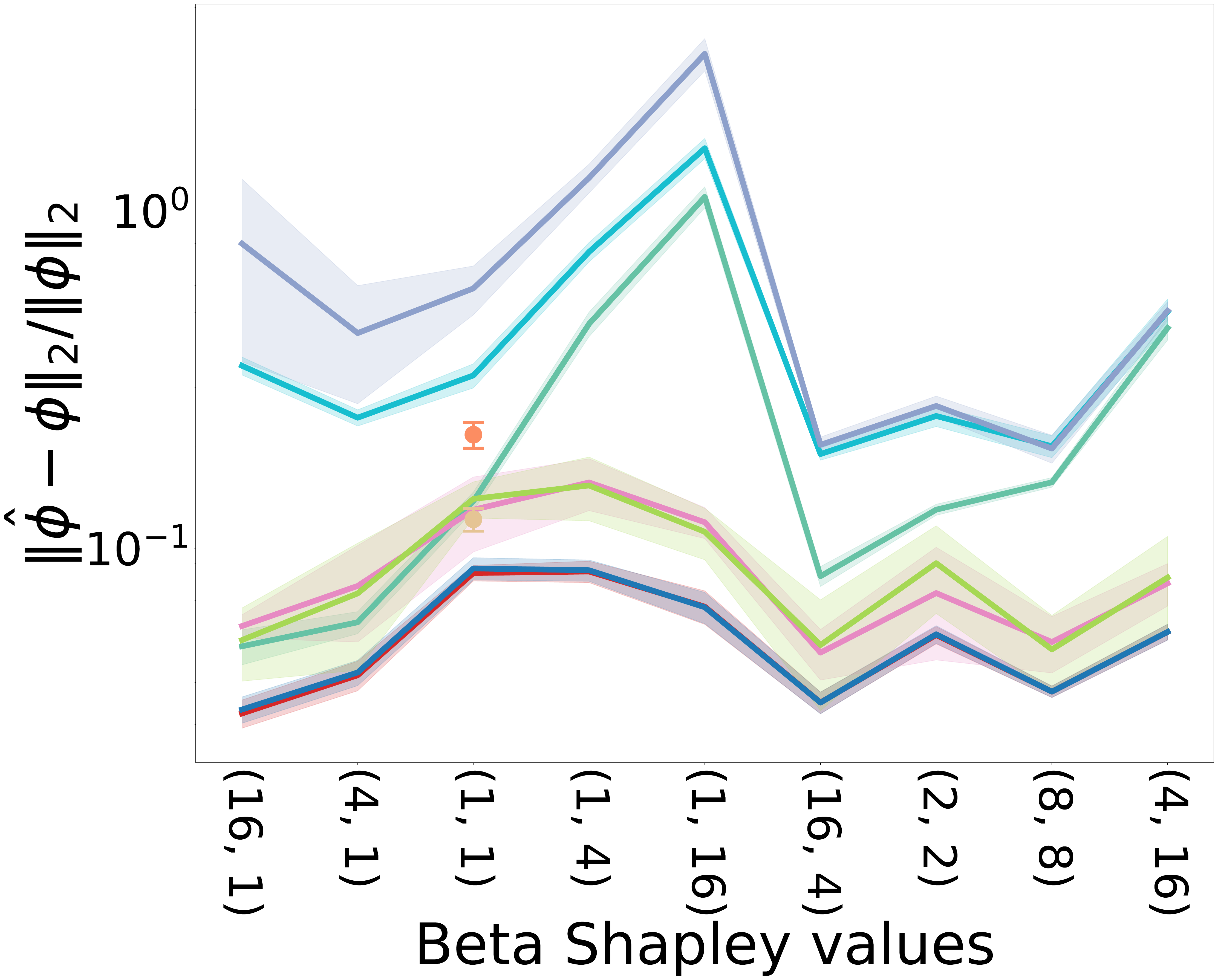} \\
		\includegraphics[width=0.3\linewidth]{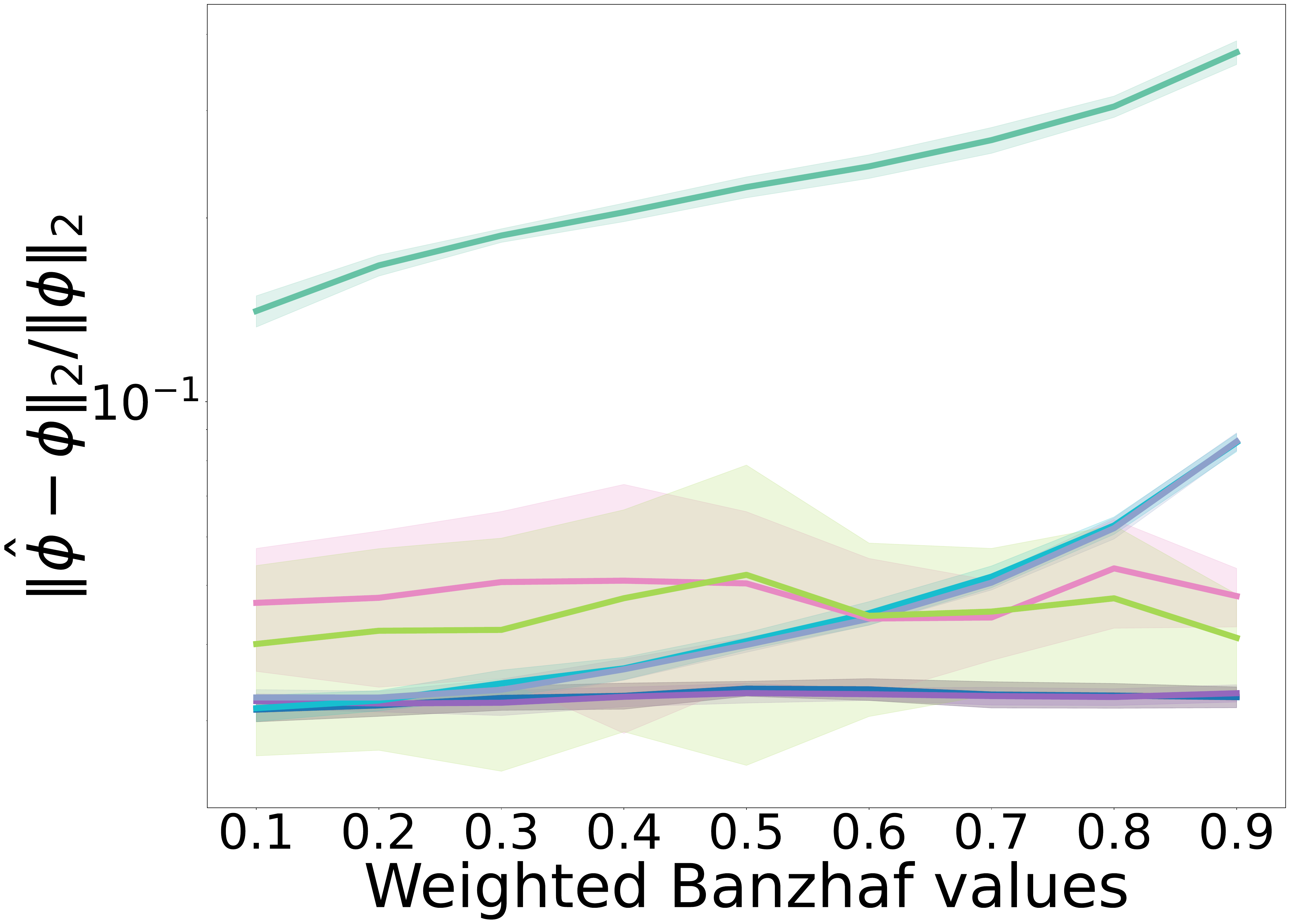} & \includegraphics[width=0.3\linewidth]{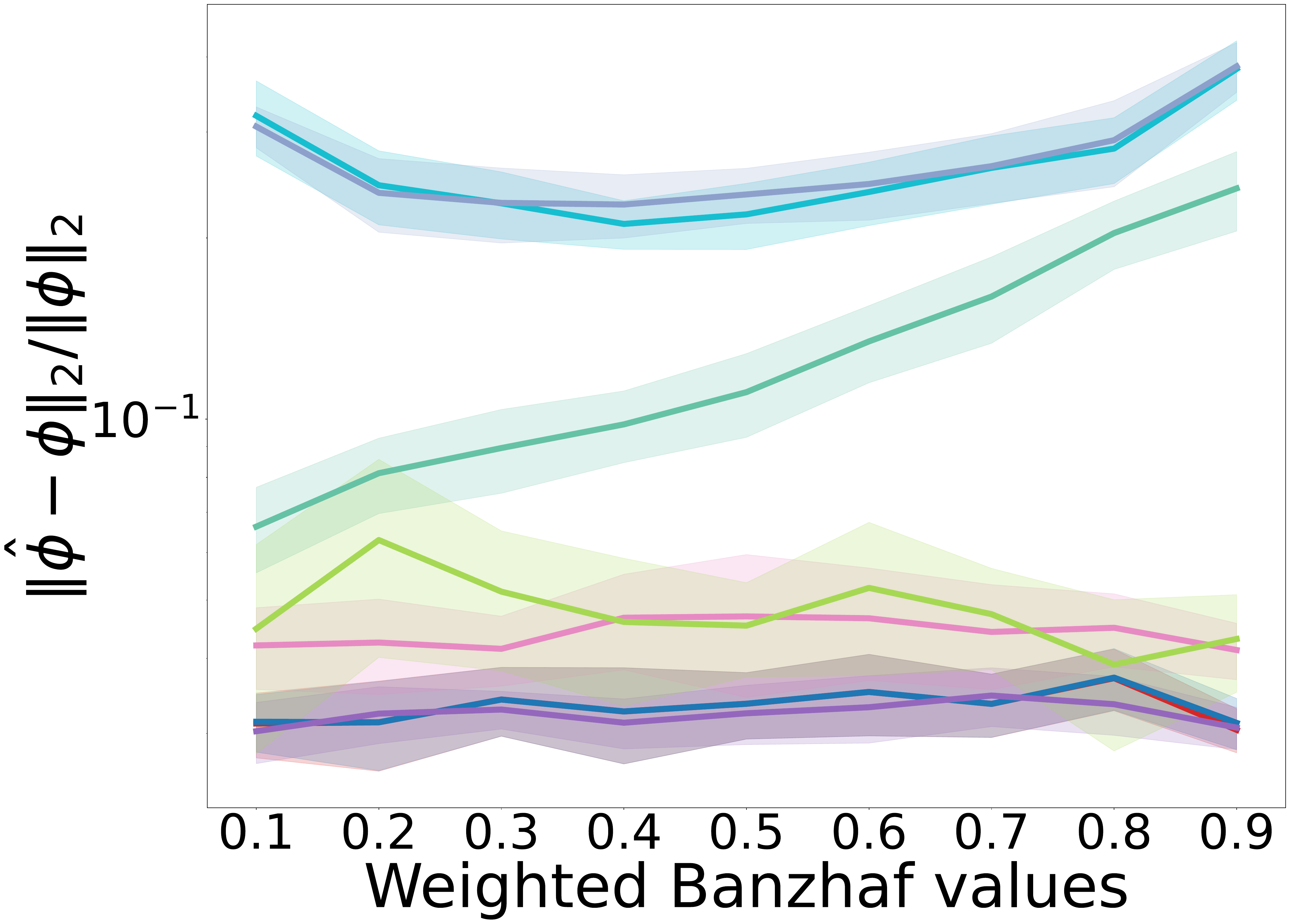} &
		\includegraphics[width=0.3\linewidth]{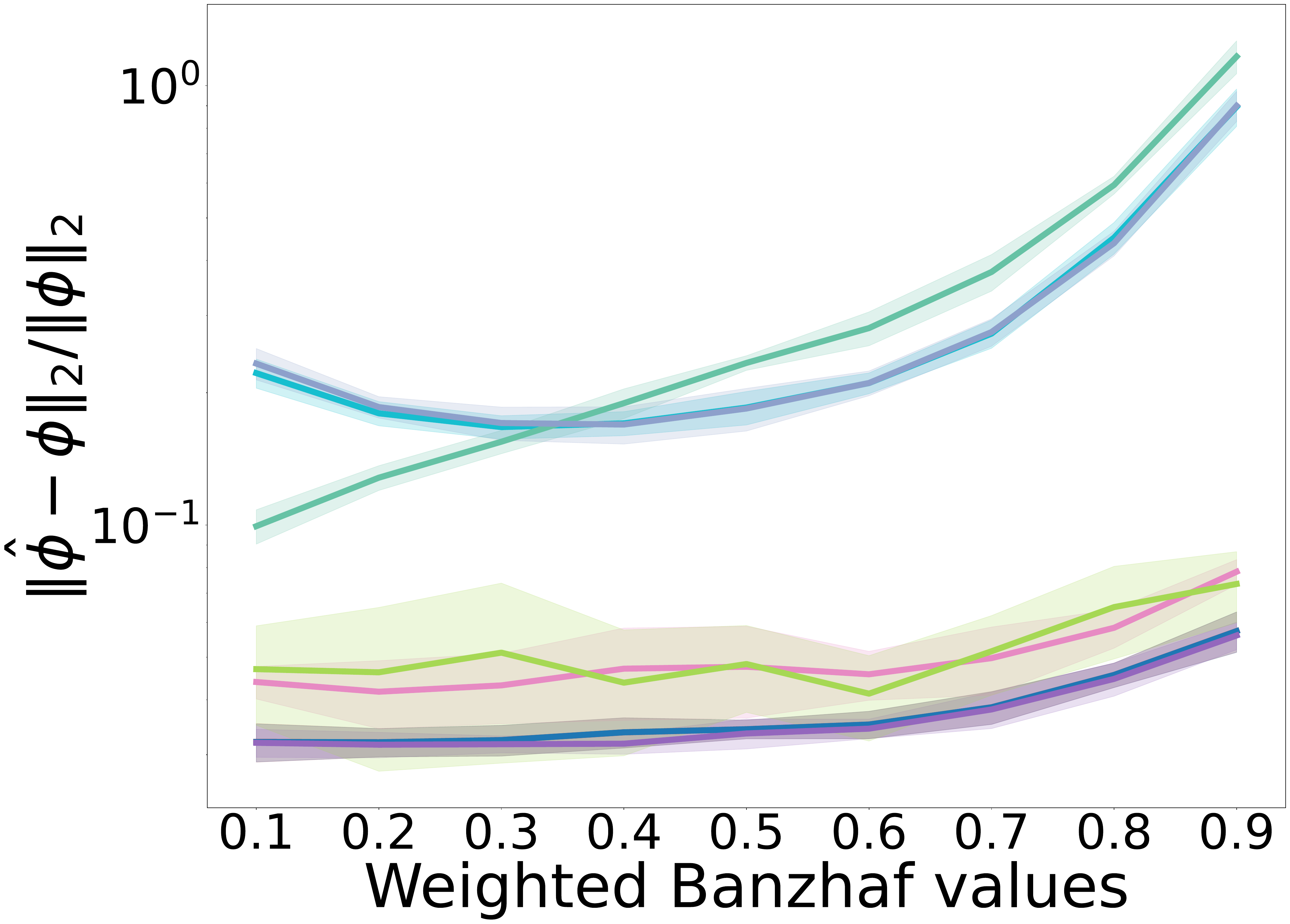} \\
		\phantom{aaaa}philippine ($n=308$) & \phantom{aaaa}GPSP ($n=32$) & \phantom{aaaa}superconduct ($n=81$)
	\end{tabular}
	\caption{The relative approximation error of different randomized algorithms. Weighted Banzhaf values are parameterized by $w\in (0,1)$, whereas Beta Shapley values are parameterized by $(\alpha, \beta)$, with $(1,1)$ corresponding to the Shapley value.}
	\label{fig:additional}
\end{figure*}

\begin{figure*}[t]
	\centering
	\begin{tabular}{ccc}
		\multicolumn{3}{c}{\includegraphics[width=0.7\linewidth]{legend.pdf}} \\
		\includegraphics[width=0.3\linewidth]{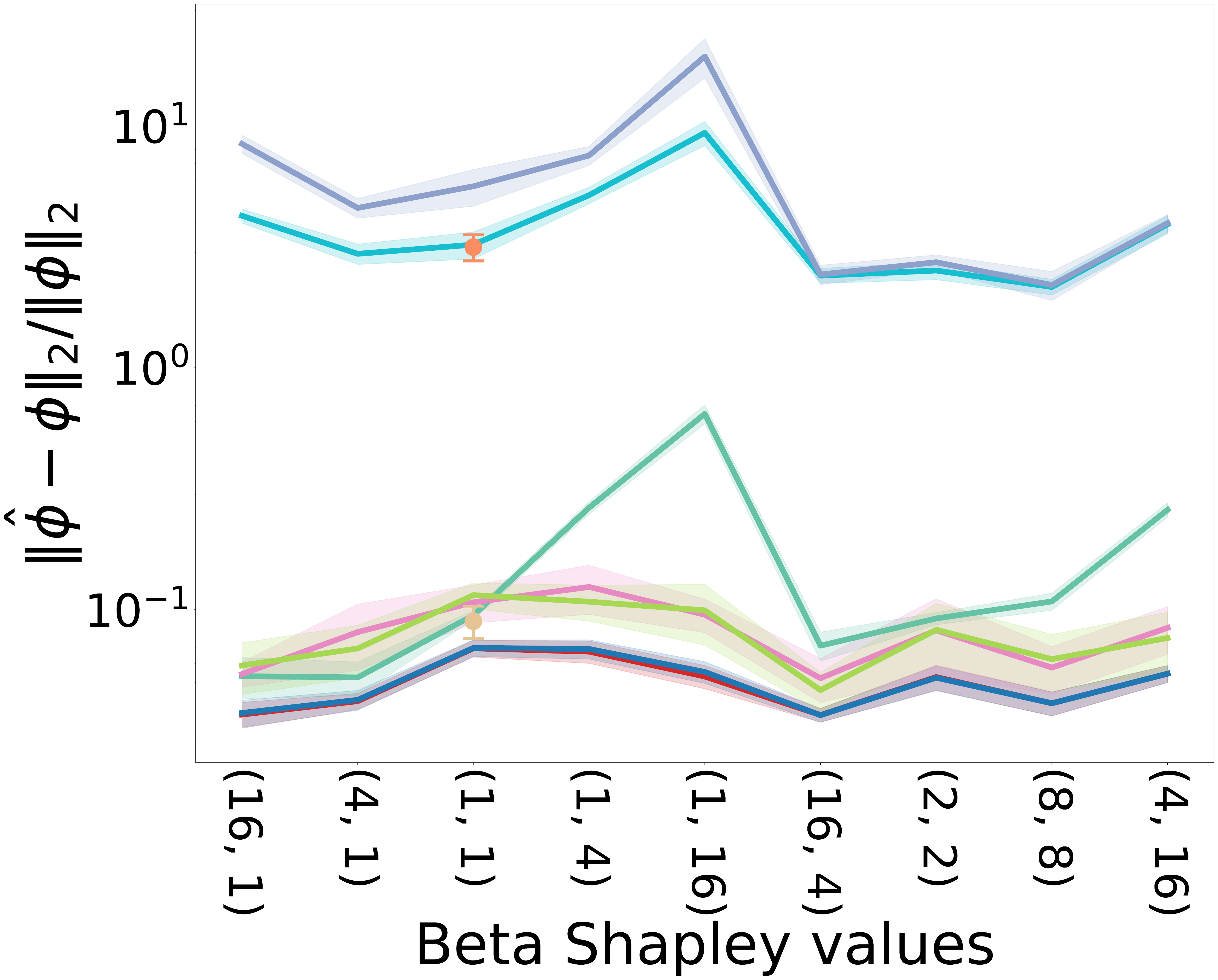} & \includegraphics[width=0.3\linewidth]{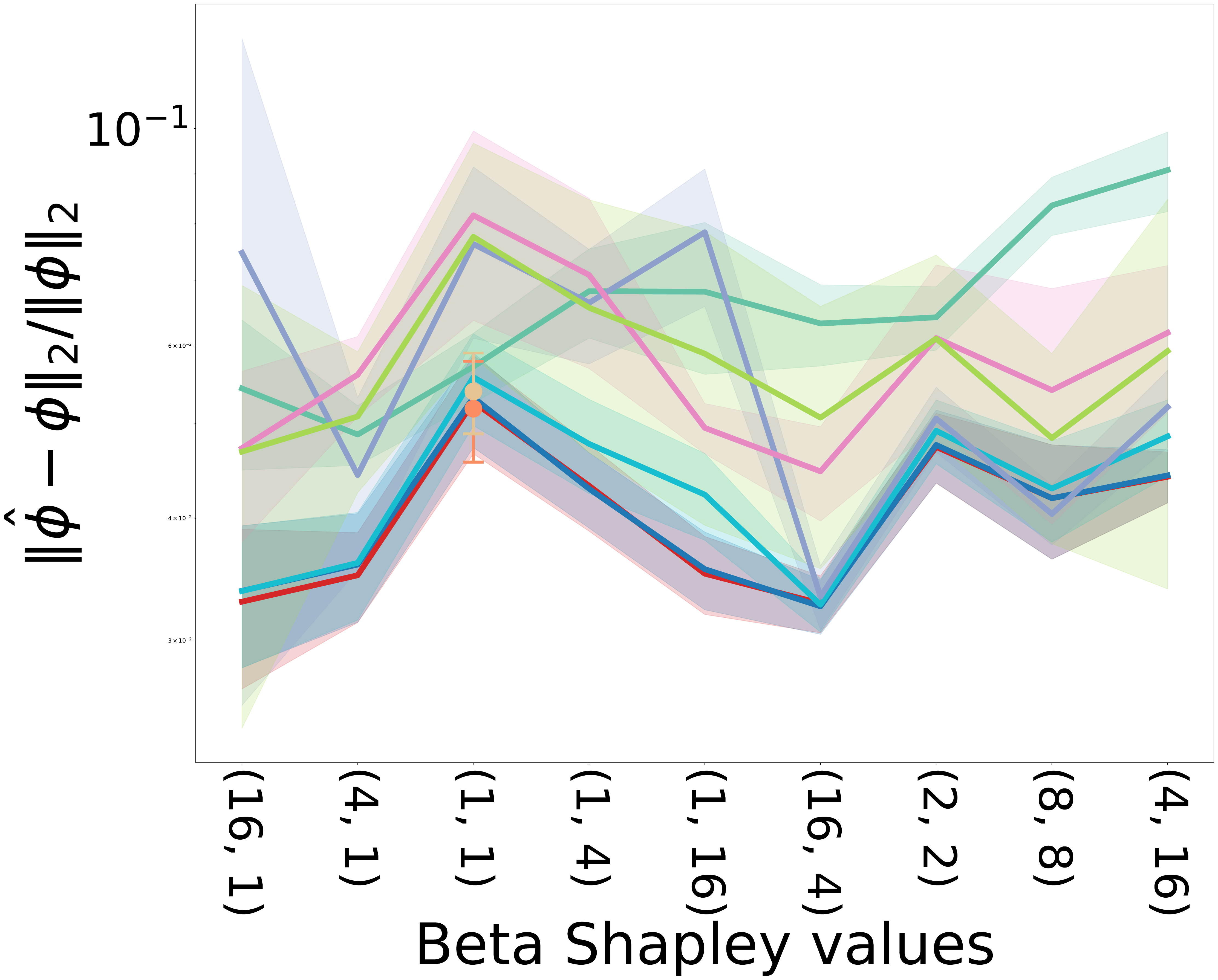} &
		\includegraphics[width=0.3\linewidth]{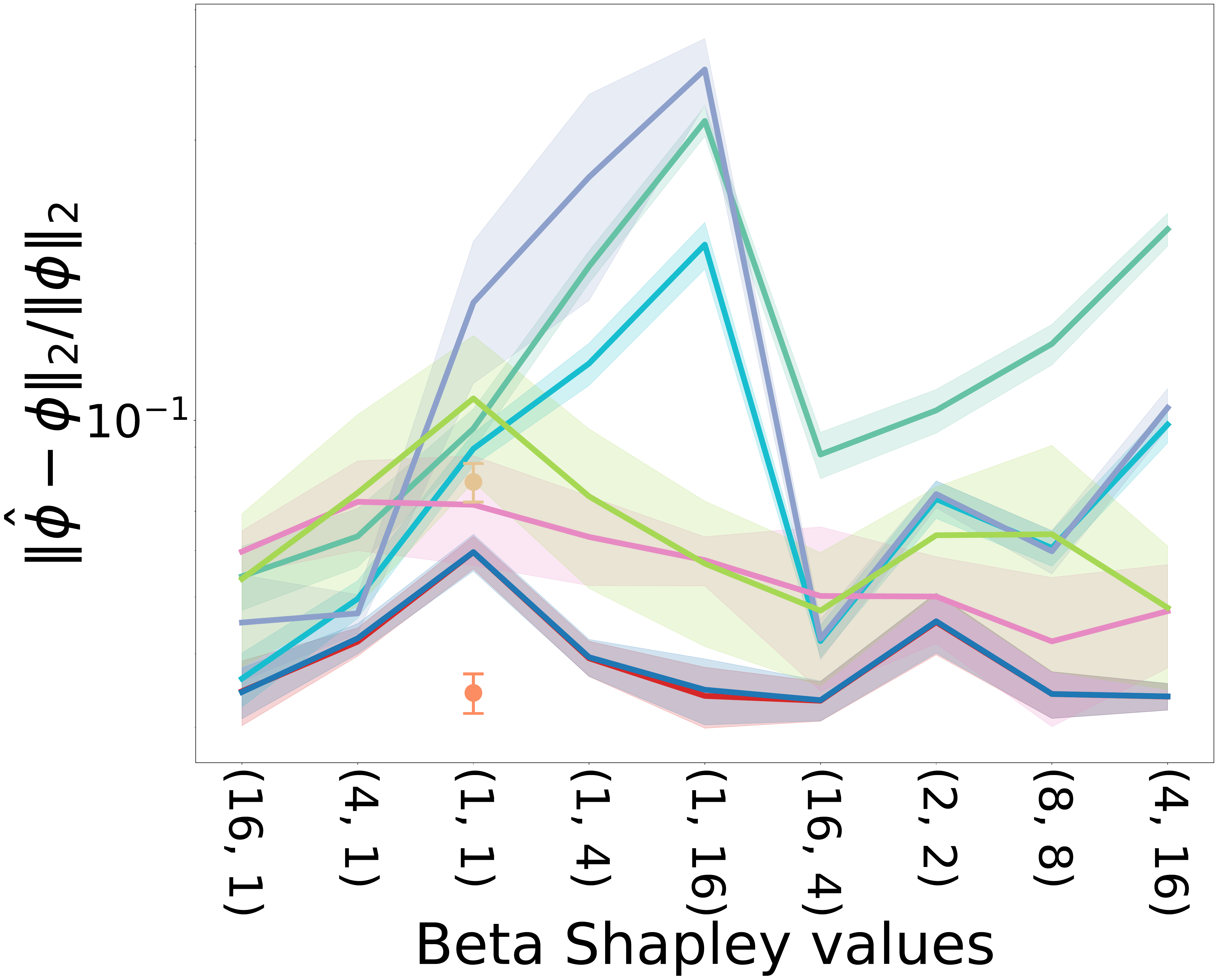} \\
		\includegraphics[width=0.3\linewidth]{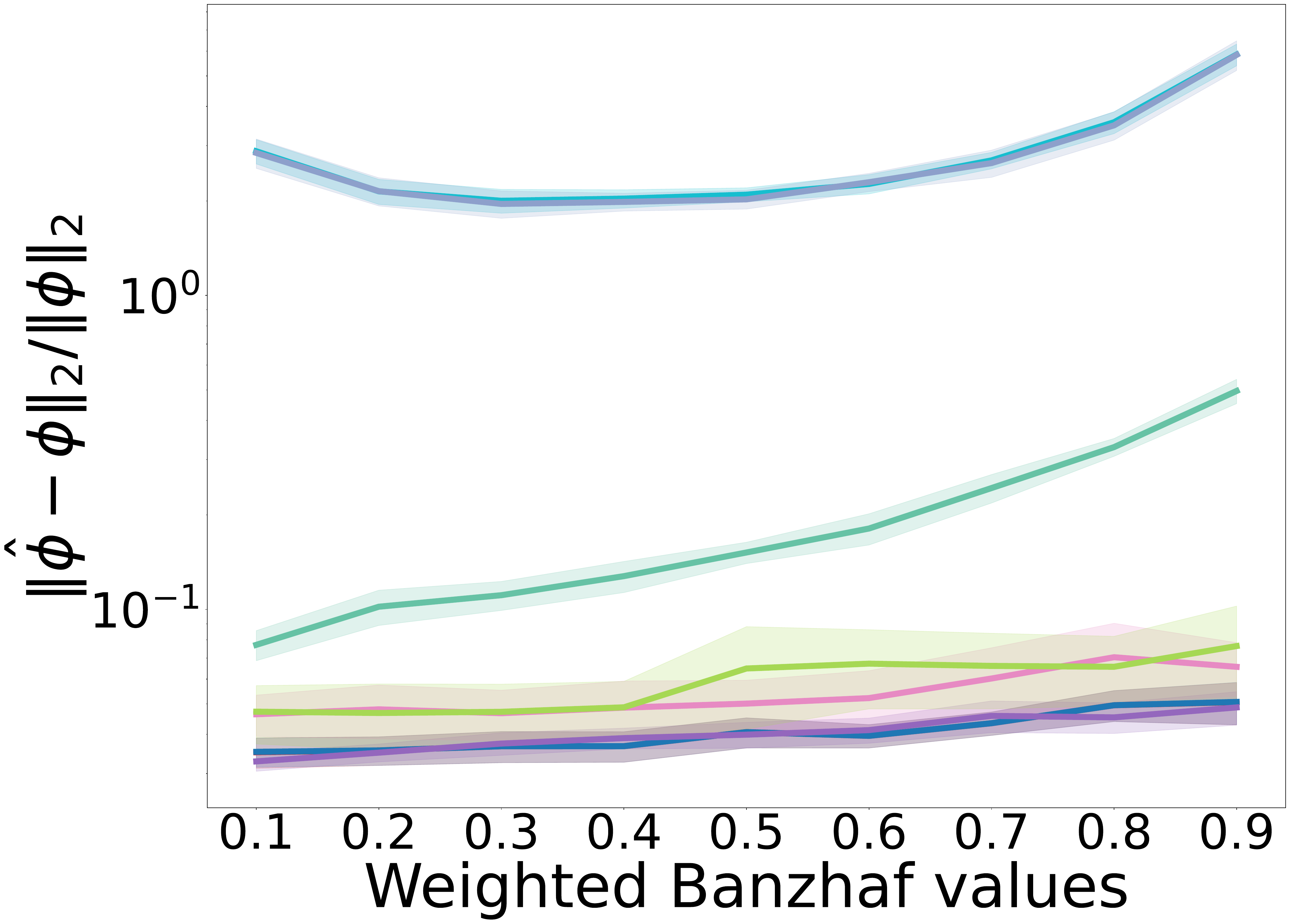} & \includegraphics[width=0.3\linewidth]{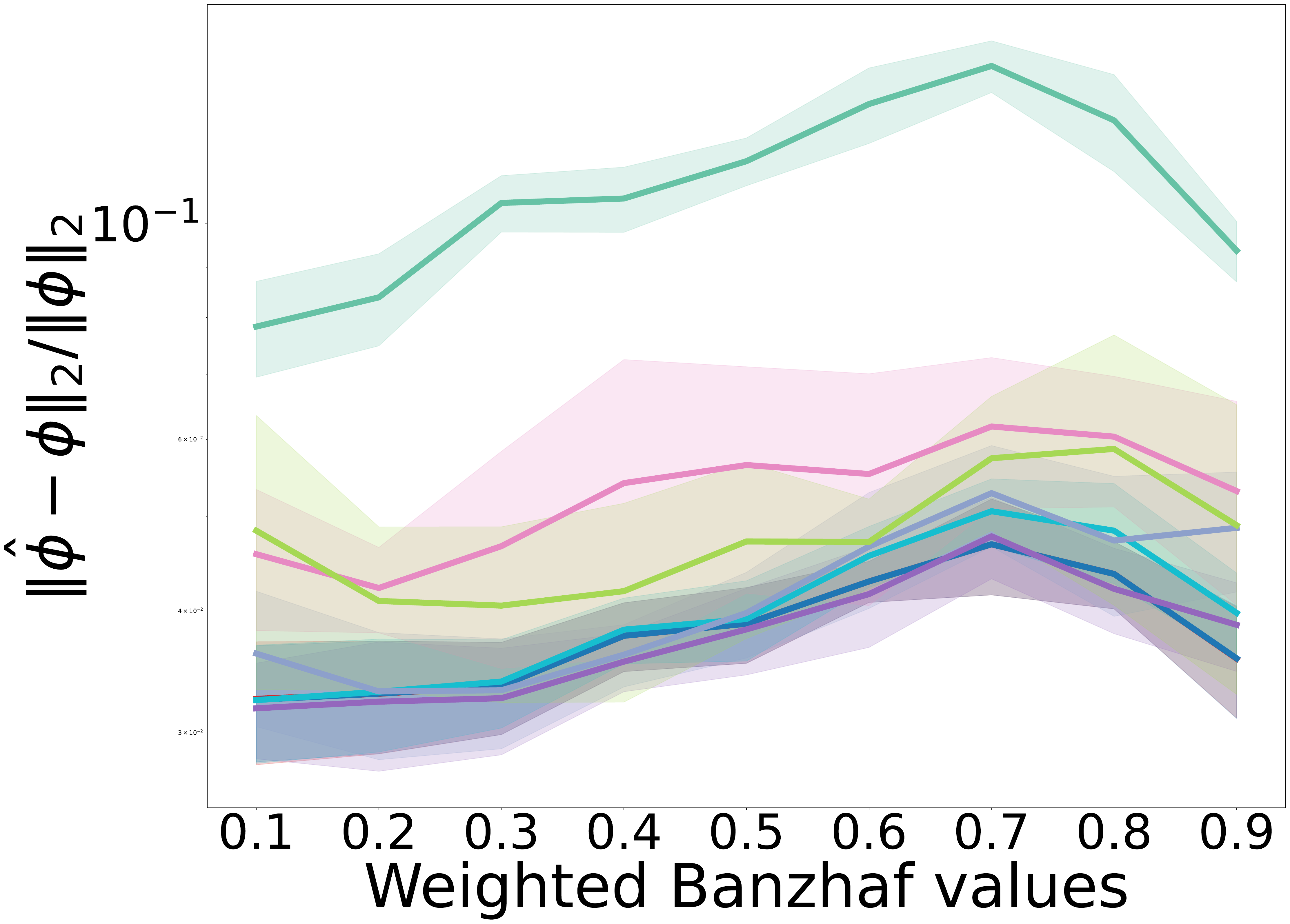} &
		\includegraphics[width=0.3\linewidth]{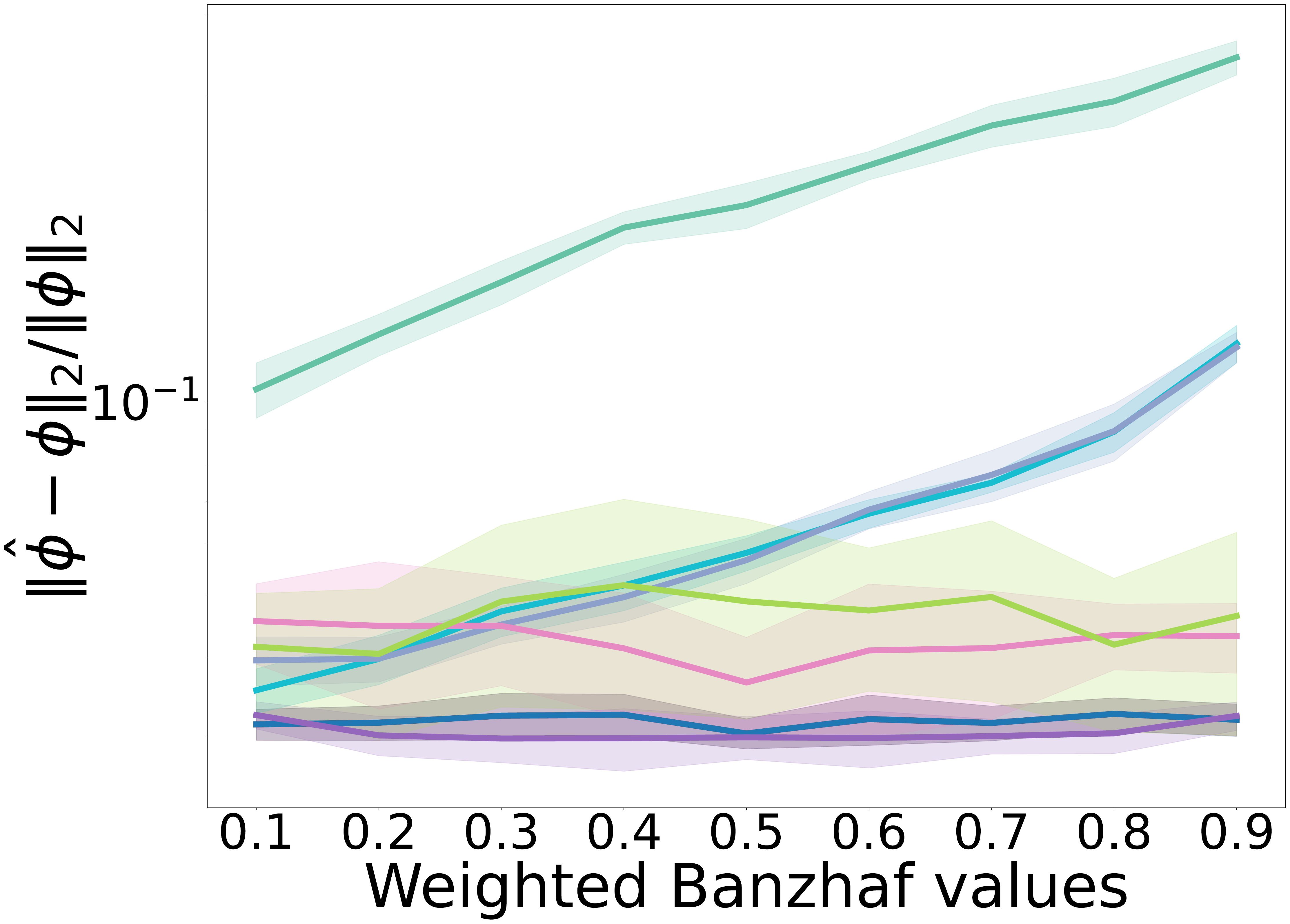} \\
		\phantom{aaaaa}jannis ($n=54$) & \phantom{aaaaaa}wave\_energy ($n=48$) & \phantom{aaaaaaa}BT ($n=77$)
	\end{tabular}
	\caption{The relative approximation error of different randomized algorithms. Weighted Banzhaf values are parameterized by $w\in (0,1)$, whereas Beta Shapley values are parameterized by $(\alpha, \beta)$, with $(1,1)$ corresponding to the Shapley value.}
	\label{fig:additional2}
\end{figure*}

\begin{figure*}[t]
	\centering
	\begin{tabular}{ccc}
		\multicolumn{3}{c}{\includegraphics[width=0.7\linewidth]{legend.pdf}} \\
		\includegraphics[width=0.3\linewidth]{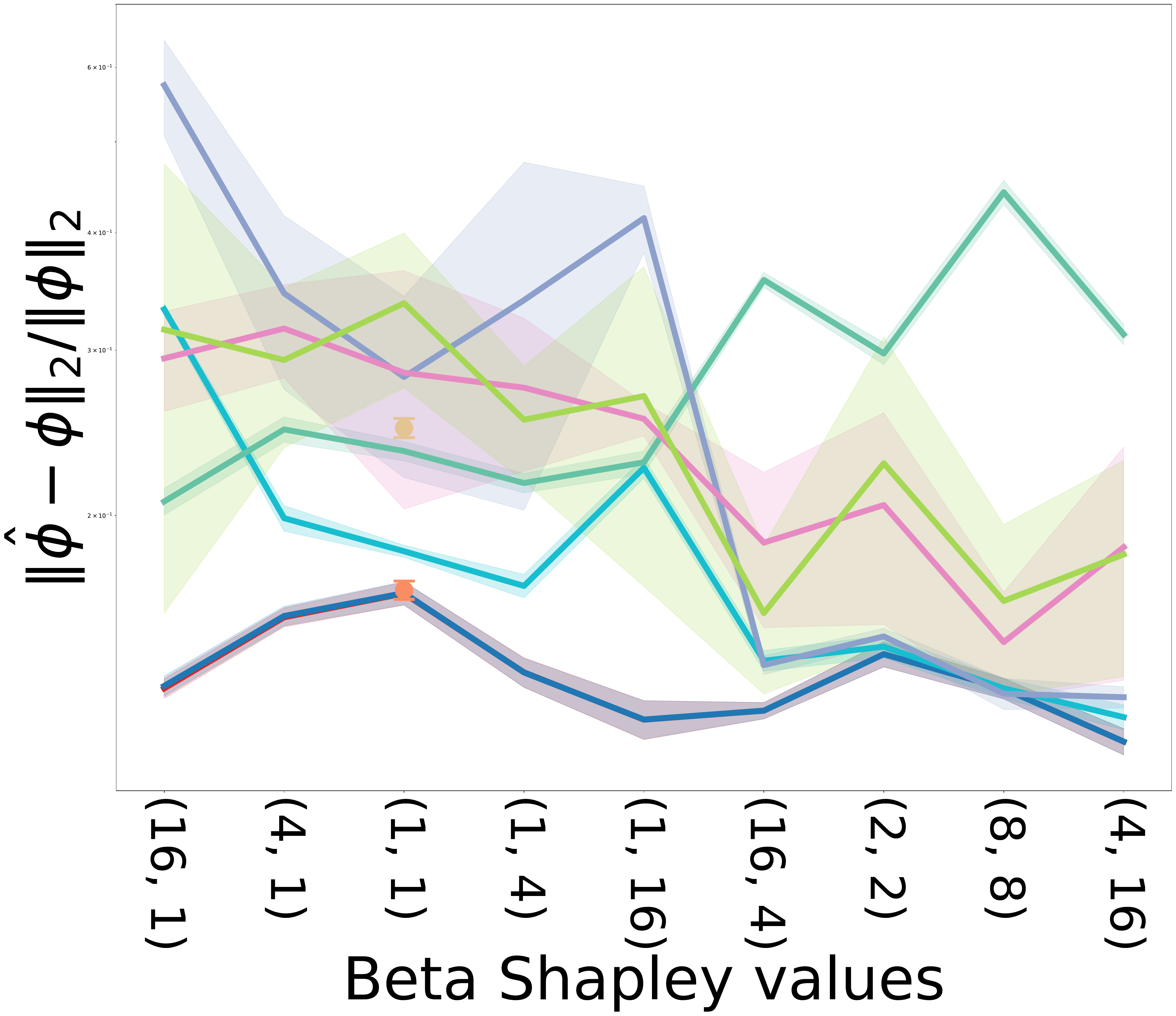} & \includegraphics[width=0.3\linewidth]{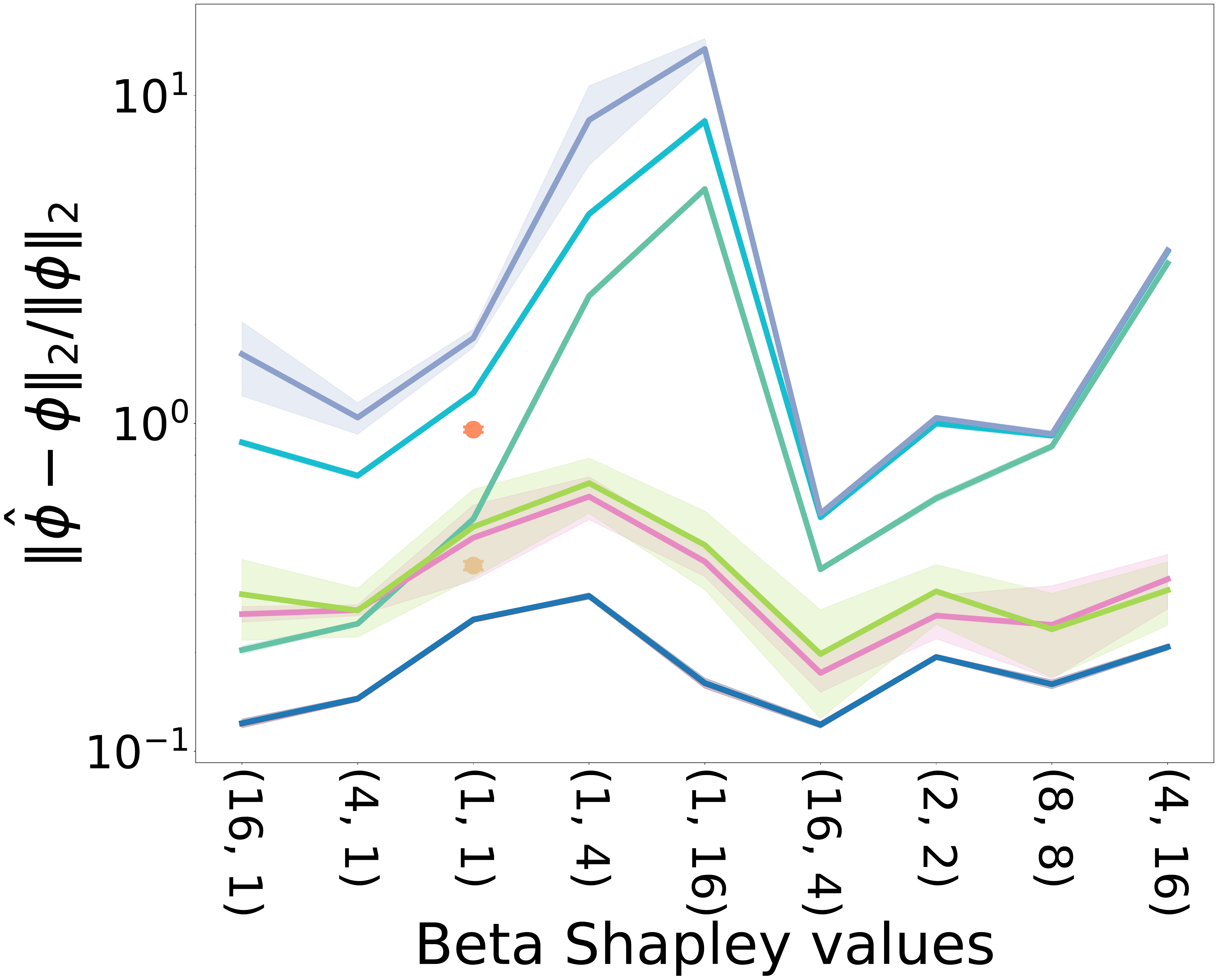} &
		\includegraphics[width=0.3\linewidth]{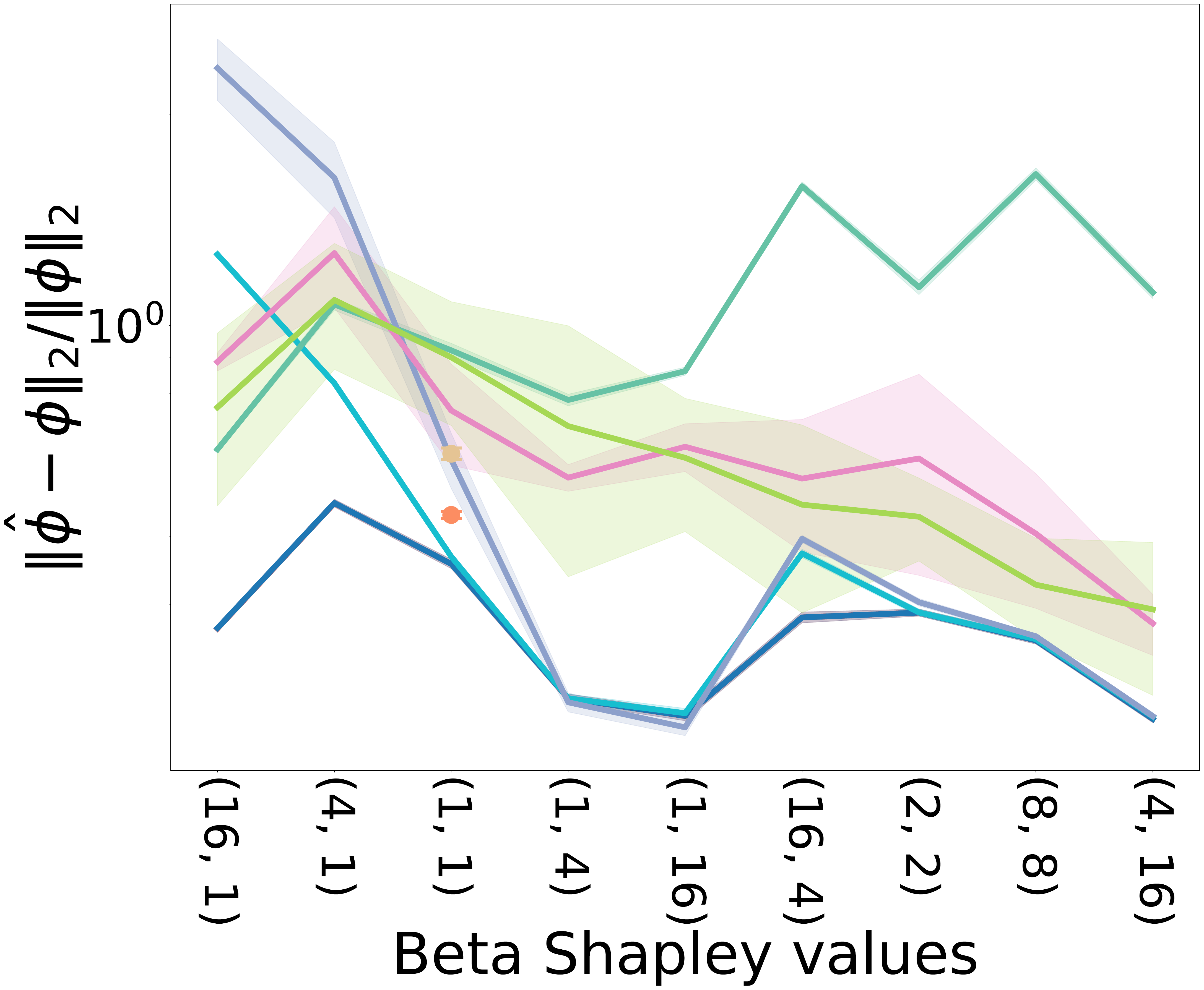} \\
		\includegraphics[width=0.3\linewidth]{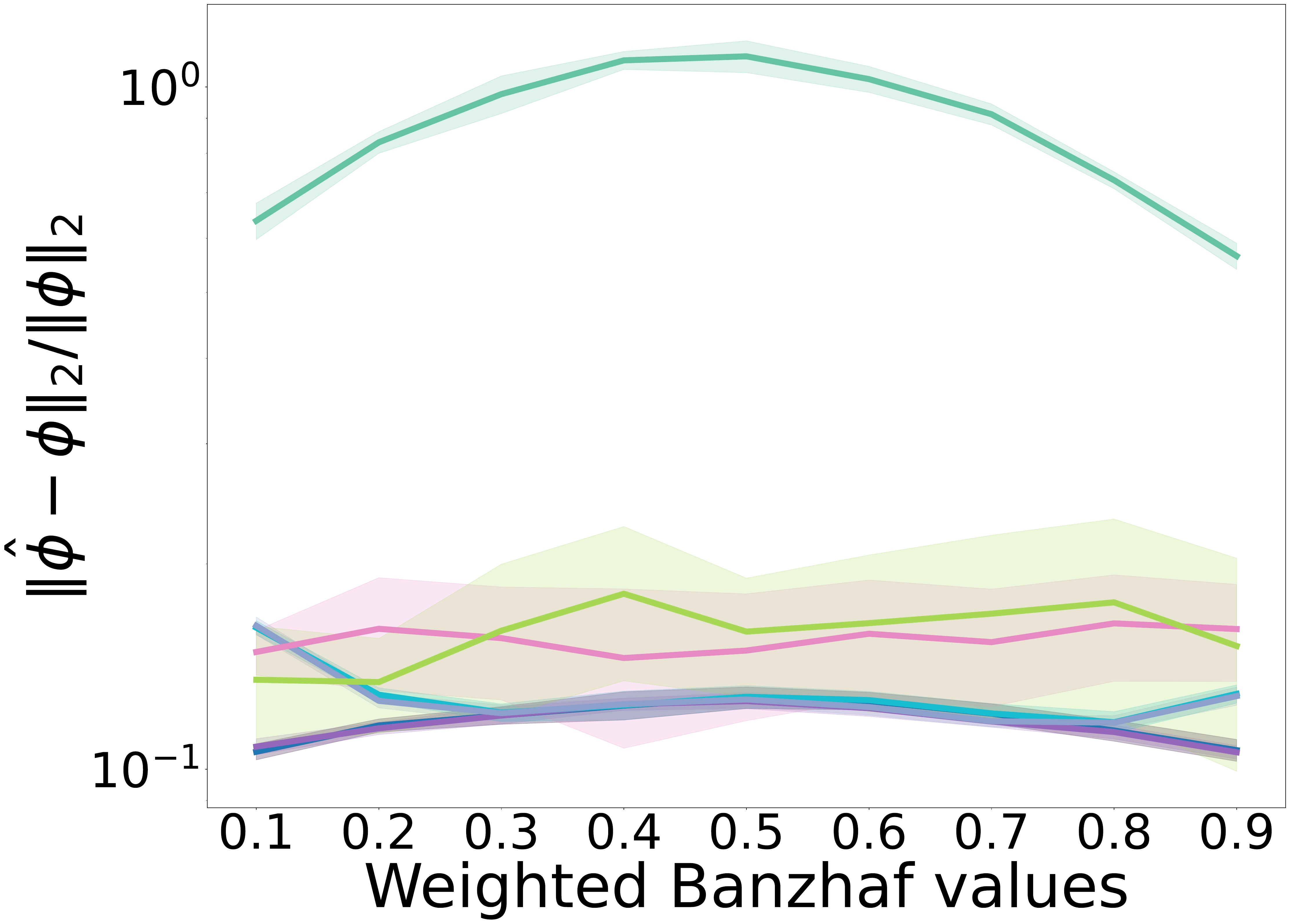} & \includegraphics[width=0.3\linewidth]{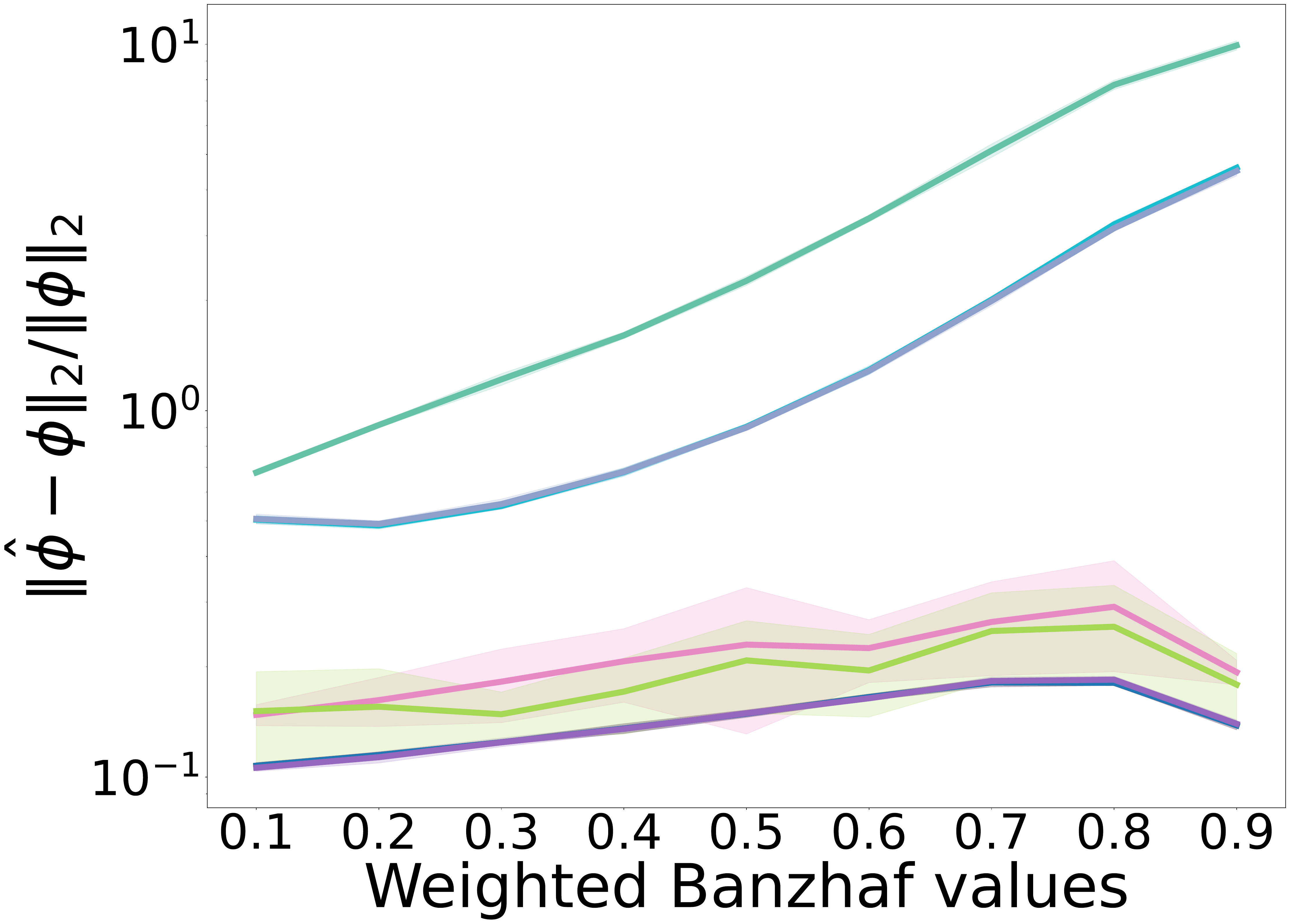} &
		\includegraphics[width=0.3\linewidth]{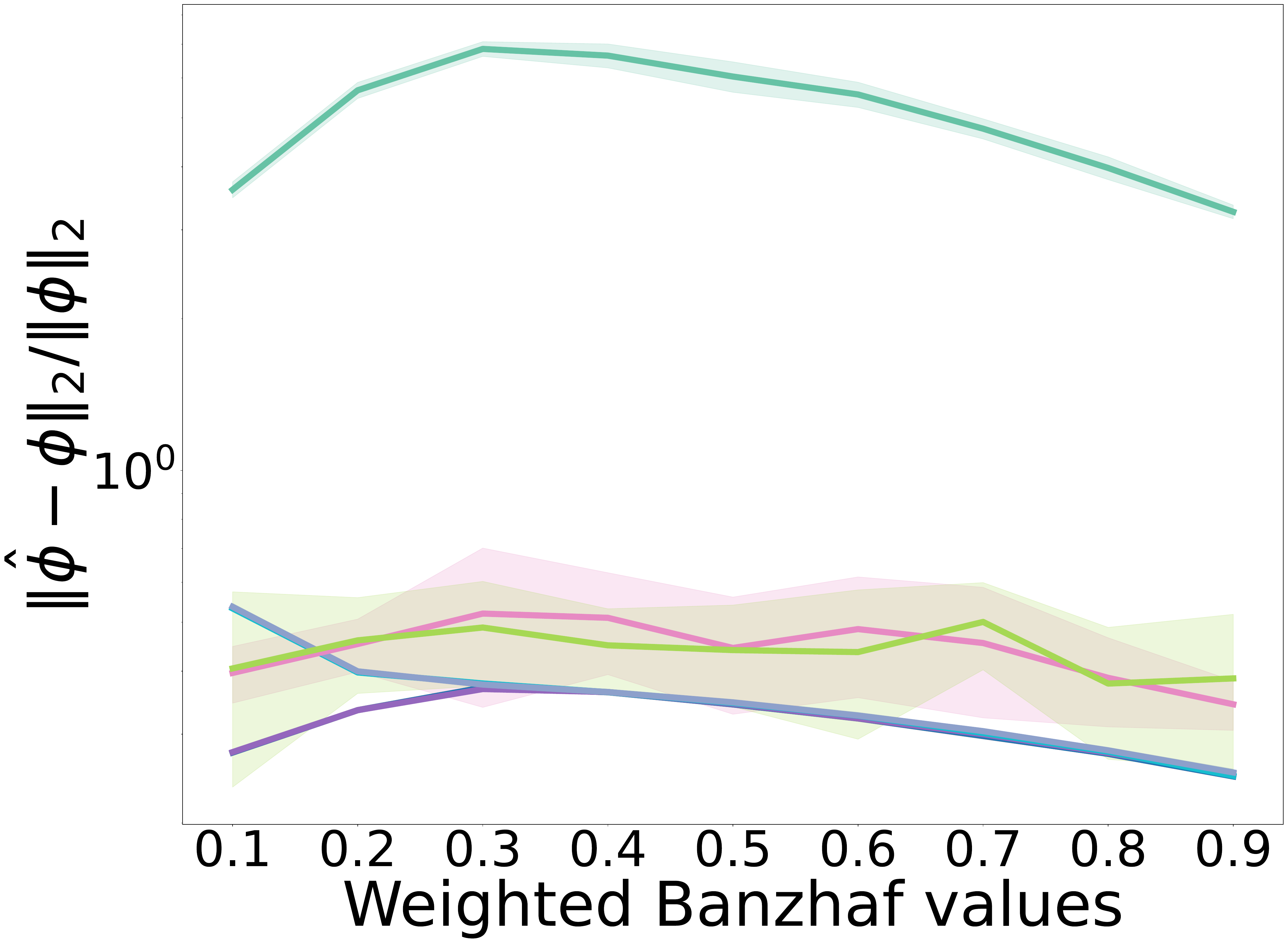} \\
		\phantom{aaaaa}har ($n=561$) & \phantom{aaaaaa}Fashion-MNIST ($n=784$) & \phantom{aaaaaaa}CIFAR\_small ($n=3,072$)
	\end{tabular}
	\caption{The relative approximation error of different randomized algorithms. Weighted Banzhaf values are parameterized by $w\in (0,1)$, whereas Beta Shapley values are parameterized by $(\alpha, \beta)$, with $(1,1)$ corresponding to the Shapley value.}
	\label{fig:additional3}
\end{figure*}

\section{Experiments} \label{app:exp}
For kernelSHAP, we set $\mathbf{q} = \mathbf{q}^{*}$ and $\lambda = \frac{u_{[n]}-u_{\emptyset}}{n}$, a combination that is empirically the best according to \citet{chen2025a}.
For the results in Figure~\ref{fig:paired sampling}, the number of trees is set to $50$ when utility functions $U$ may take both positive and negative values; to ensure that $U(S) > 0$ for every $S$, we instead employ \texttt{DecisionTreeClassifier} from the scikit-learn library.
The details of the employed datasets are summarized in Table~\ref{tab:sum}, which includes the depth of trees used to construct utility functions. Additional results on the comparison of randomized algorithms for approximating semi-values are shown in Figures~\ref{fig:additional}, \ref{fig:additional2} and~\ref{fig:additional3}.	
	
\end{document}